\newtheorem{theorem}{Theorem}
\newtheorem{lemma}[theorem]{Lemma}
\newtheorem{proposition}{Proposition}
\newtheorem{remark}{Remark}
\theoremstyle{definition}
\newtheorem{definition}{Definition}
\newcommand{\tabincell}[2]{\begin{tabular}{@{}#1@{}}#2\end{tabular}}
\title{\LARGE \bf Robots State Estimation and Observability Analysis Based on Statistical Motion Models}%*\thanks{*Research is supported by DJI innovation.}}
\author{Wei Xu$^{1}$, Dongjiao He$^{1}$,Yixi Cai$^{1}$, Fu Zhang$^{1}$% <-this % stops a space  \vspace{-1cm}
	\thanks{$^{1}$All authors are with Department of Mechanical Engineering, University of Hong Kong. {\tt\small \{xuweii, hdj65822, YixiCai, fuzhang\}@hku.hk}}
}%
\begin{document}	
	\maketitle
	\begin{abstract}
		This paper presents a generic motion model to capture mobile robots' dynamic behaviors (translation and rotation). The model is based on statistical models driven by white random processes and is formulated into a full state estimation algorithm based on the error-state extended Kalman filtering framework (ESEKF). Major benefits of this method are its versatility, being applicable to different robotic systems without accurately modeling the robots' specific dynamics, and ability to estimate the robot's (angular) acceleration, jerk, or higher-order dynamic states with low delay.  Mathematical analysis with numerical simulations are presented to show the properties of the statistical model-based estimation framework and to reveal its connection to existing low-pass filters. Furthermore, a new paradigm is developed for robots observability analysis by developing Lie derivatives and associated partial differentiation directly on manifolds. It is shown that this new paradigm is much simpler and more natural than existing methods based on quaternion parameterizations. It is also scalable to high dimensional systems. A novel \textbf{\textit{thin}} set concept is introduced to characterize the unobservable subset of the system states, providing the theoretical foundation to observability analysis of robotic systems operating on manifolds and in high dimension. Finally, extensive experiments including full state estimation and extrinsic calibration (both POS-IMU and IMU-IMU) on a quadrotor UAV, a handheld platform and a ground vehicle are conducted. Comparisons with existing methods show that the proposed method can effectively estimate all extrinsic parameters, the robot's translation/angular acceleration and other state variables (e.g., position, velocity, attitude) of high accuracy and low delay. 
	\end{abstract}
\section{Introduction}

Online estimation of {\it high-order dynamics states} (e.g., translational and angular acceleration) is fundamentally important for many robots and robotic techniques, such as control~\cite{xu2020learning,smeur2020incremental,smeur2018cascaded,sieberling2010robust}, disturbance estimation~\cite{lyu2018disturbance, yuksel2014nonlinear,jeong2012attitude,wang2016trajectory,lee2016robust}, model identification~\cite{lee2017high,borobia2018flight,hoff1996aircraft}, trajectory generation~\cite{mueller2015computationally}, and extrinsic calibration: in robot's control, it is often necessary to measure or estimate the forces (or accelerations) and torques (or angular accelerations) applied on the robots in order to compute the next control action, estimate the external disturbances, or identify an UAV's aerodynamic parameters; In the calibration of multiple IMUs, angular accelerations are picked up by different accelerometers and need to be accurately estimated for estimating the extrinsic parameters of different IMUs. 

IMUs have been widely used in robotics to measure the robots' acceleration and angular velocity~\cite{shaeffer2013mems}. However, raw IMU measurements typically suffer from various imperfections, such as bias and noise, and cannot be fed to a controller directly. In fact, in robotics community, accelerometers are mainly used to provide attitude (and/or position and velocity) estimation. To name a few, Mahony \emph{et al}.~\cite{mahony2005complementary,mahony2008nonlinear} proposed a nonlinear complementary filter operating on the special orthogonal group ${SO}(3)$ for UAVs' attitude estimation using IMUs. This filter is easy to implement and computationally cheap but is sub-optimal due to the constant feedback gain. Smith and Schmidt~\cite{smith1962application} firstly used the extended Kalman filter (EKF) for spacecraft attitude and velocity estimation based on IMU measurements. Since the standard EKF works in Euclidean space while a robot's attitude is naturally on $SO(3)$, this discrepancy has motivated the error state EKF (ESEKF) \cite{markley2003attitude,trawny2005indirect,markley2014fundamentals}, which workaround the singularity problem caused by minimal parameterization of $SO(3)$ by updating the error attitude~\cite{sola2017quaternion}. Lu \emph{et al}.~\cite{lu2019imu} further considered narrow-band noises in IMU-based attitude estimation.

\begin{figure}[t]
	\centering
	\includegraphics[width=1\columnwidth]{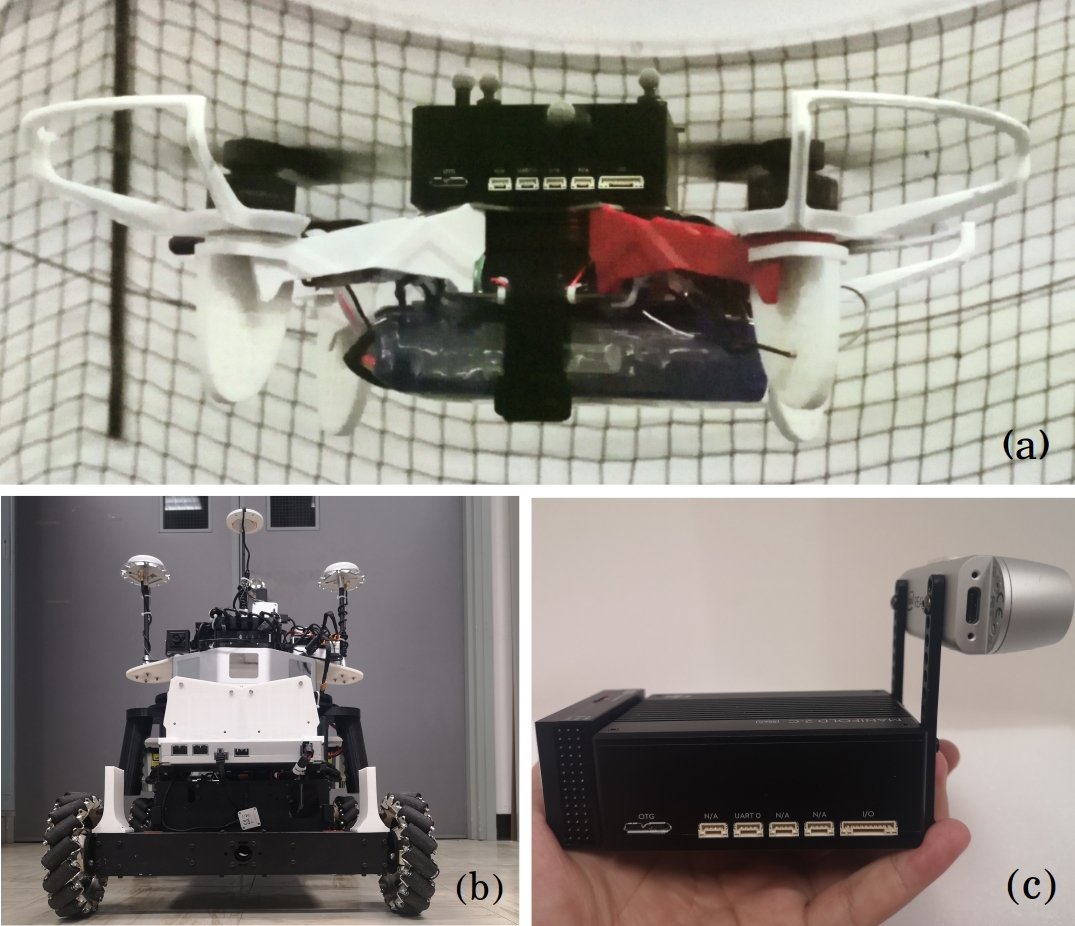}
	\caption{Our proposed method is able to estimate the kinematics state (i.e., position, velocity, and attitude), dynamics state (e.g., (angular) translation, jerk, etc.), and extrinsic parameters simultaneously and online and is applicable to a variety of mobile platforms, including: (a) Unmanned Aerial Vehicles (UAVs); (b)  Unmanned Ground Vehicles (UGVs); (c) Handheld compact IMU-Camera-Computer module.
		\label{fig:uas}}
	\vspace{-0.cm}
\end{figure}

Besides attitude estimation, IMUs are popularly combined with other sensors such as GPS~\cite{caron2006gps}, motion capture systems~\cite{pfister2014comparative}, cameras~\cite{qin2018vins, forster2016manifold, mourikis2007multi}, LiDARs ~\cite{bry2012state, lin2019loam_livox}, optical flow~\cite{horn1981determining}, Pitot tube~\cite{soloviev2009fusion} within an EKF or unscented Kalman filtering (UKF) framework. This closely-coupled estimation method has been proved to be very effective in estimating the robot's {\it kinematic states} including position, velocity, attitude, and IMU bias. The multi-sensor fusion approach has also motivated extrinsic calibration: offline methods include  maximum likelihood (ML)~\cite{burri2018framework}, pose graph optimization~\cite{qin2018vins}, EKF smoother~\cite{hoff1996aircraft}; online methods typically view the extrinsic parameter as a state and use the same EKF/UKF formulation as in state estimation to additionally estimate the extrinsic parameters~\cite{wuest2019online, mirzaei2008kalman, kelly2011visual}.

A major problem of the above works is the absence of {\it high-order dynamics states} (e.g., acceleration) estimation. This paper presents a generic model to capture the dynamic behaviors of a variety of mobile robots. The key idea is to model a robot's translational and angular dynamics by a statistical model with tunable covariance. Based on this generic model, we formulate a new error state EKF framework that can additionally estimate the robot {\it high-order dynamics states} such as acceleration that are not capable by normal EKF~\cite{sola2017quaternion}. Mathematical analysis are conducted to show its stability and reduced phase delay when compared to linear filtering techniques. To investigate the observability of the proposed formulation, we explicitly represent the {\it nonlinear observability rank condition} on the manifold $SO(3)$, leading to simpler, scalable, and more natural observability analysis than existing quaternion-based analysis. The formulated algorithm is finally verified by both simulations and extensive experiments on real robots. Results show that it is able to estimate the kinematic states (i.e., position, attitude, velocity, IMU biases), dynamics states (e.g., acceleration), and extrinsic parameters (e.g., POS-IMU and IMU-IMU) in real time and for a variety of moving platforms, including UAVs, ground vehicles and handheld devices. Moreover, the proposed method delivers estimation performance comparable to normal EKFs~\cite{sola2017quaternion} for kinematics states while in the meantime achieves low latency dynamics states estimation comparable to non-causal zero-phase filters~\cite{gorinevsky2006optimization}.

The rest of the paper is organized as follows: Section II summarizes the related works of states estimation, statistical dynamics model and observability analysis in robotics researches; Section III introduces and analyzes the statistical motion model, based on which the error-state EKF is detailed in Section IV; Observability analysis is presented in Section V; and Section VI and VII presents simulation and experiment results, respectively; Finally, Section VIII draws conclusions.

\section{Related Work}

In existing methods for robotic states estimation, such as GPS/IMU navigation~\cite{sola2017quaternion}, visual IMU navigation and online calibration~\cite{qin2018vins, forster2016manifold, mourikis2007multi, mirzaei2008kalman, bloesch2017iterated, li2013high, kelly2011visual}, and LiDAR/IMU navigation~\cite{bry2012state}, the acceleration and angular velocity are solved from the IMU measurements and are then substituted into kinematics model to form a system ``driven" by the known IMU measurement. This enables to estimate the kinematic states, such as position, velocity, attitude and IMU bias, but the acceleration and angular velocity themselves are viewed as {\it input} to the kinematic model and their values are not filtered. To do so, an ad-hoc linear filter (e.g., Butterworth filter) is usually required to reduce the noises in accelerometer measurements or gyro measurements. This, however, causes considerable output delay due to the Bode's relations~\cite{aastrom2000model}. Another possible approach is to carefully model the robot dynamics and invert it to obtain the forces or torques from the robot outputs, such as the disturbance observers used in~\cite{lyu2018disturbance, yuksel2014nonlinear,jeong2012attitude,wang2016trajectory,lee2016robust,lee2010sensorless}. This dynamic model could also be used with an EKF or UKF to estimate the robot (angular) acceleration such as 
in \cite{wuest2019online,borobia2018flight}. Nevertheless, such a model-based approach needs an accurate model of the robot dynamics which are often subject to various uncertainties, such as motor delay, unmodeled internal dynamics, etc. Instead of using a first principle dynamic model, we propose to use a statistical model to capture the robot dynamics. This statistical model is augmented with the kinematic model where the acceleration appears as a state (instead of input) to the system, enabling itself to be estimated by EKFs. The proposed method leads to lower estimation delay than ad-hod filters and is more generic and robust than model-based methods \cite{wuest2019online,borobia2018flight}. 

Modeling the unknown inputs as a statistical model has been exploited in some prior work and proved effective in practice, such as the Brownian motion for modeling the angular and translational acceleration in both visual-inertial navigation ~\cite{jones2011visual} and vision-only navigation ~\cite{chiuso2002structure}. A even simpler constant velocity model driven by a random noise were also used in ~\cite{davison2003real,guo2014efficient}. Besides the state estimation in robotics,  Hoff {\it et al.}~\cite{hoff1996aircraft} used a quadratic integration stochastic process to model an aircraft's acceleration which enables to identify its aerodynamics. These works usually considered a specific (and usually low order) statistical model and ignored their effect to the system observability. This paper proposes a generic statistical model characterized by its power spectral density and proves that it does not affect the overall observability. Moreover, a new concept, {\it thin} subset, is introduced to characterize the unobservable space of the augmented states. 

Observability plays an important role in robotic states estimation~\cite{lee1982observability}. A well-known criterion for nonlinear system observability is the {\it observability rank condition} based on Lie derivatives~\cite{hermann1977nonlinear}. This criterion has been widely used in robotics to determine the observability of a robotic system, such as range sensor-based 2D localization \cite{lee2006observability, martinelli2011state}, visual-inertial navigation~\cite{weiss2012real, hesch2014camera}, online camera-IMU extrinsic calibration \cite{mirzaei2008kalman, kelly2011visual}, and online parameter identification~\cite{wuest2019online, martinelli2006automatic}, etc. These works usually represent the attitude as a quaternion $\mathbf q$ and view it as a flat vector in $\mathbb{R}^4$ to calculate the Lie derivatives required in the {\it observability rank condition}. To compensate for the over-parameterization when using quaternion, an additional virtual measurement $\mathbf{q}^T \mathbf{q}=1$ is added. Although this method correctly determines the system observability, it undermines the structure of the manifold $SO(3)$. Moreover, the observability analysis is considerably complicated due to the over-parameterization. In this paper, we show that this over-parameterization is totally unnecessary and can be completely avoided by developing Lie derivatives directly on $SO(3)$. Besides preserving the manifold structure, the new implementation leads to a much simpler and more scalable observability analysis. 

\section{Statistical Motion Model}
\subsection{Motion model}
The dynamics states (e.g., translational acceleration, angular acceleration) of a robotic system usually depends on the control actions and/or other forces and torques from the environment. In scenarios such as state estimation of human handheld devices or a UAV intruder, the control actions that drive the system are typically not known to the estimator. Even when such information is available (e.g., estimation of the robot itself states), the dynamic model from the control actions to the state usually suffers from significant uncertainties and/or is affected by environmental disturbances.

We propose to model the dynamics state $\mathbf u$ as a colored stochastic process shown below.
\begin{equation}~\label{e:gamma_model}
\begin{aligned}
\dot {\boldsymbol{\gamma}}  &= \mathbf A_{\bm \gamma}\bm \gamma + \mathbf B_{\bm \gamma} \mathbf w_{\bm \gamma} \\
\mathbf u &=  \mathbf C_{\bm \gamma} \bm \gamma 
\end{aligned}
\end{equation}
where $\left( \mathbf C_{\bm \gamma}, \mathbf A_{\bm \gamma} \right)$ is observable and $\left( \mathbf A_{\bm \gamma}, \mathbf B_{\bm \gamma} \right)$ is controllable, $\mathbf w_{\bm \gamma} \sim \mathcal{N}(0, \mathcal Q_{\bm \gamma})$ are zero mean Gaussian white noises. The resultant output $ \mathbf u $ from (\ref{e:gamma_model}) has a power spectral density (PSD) function $ \mathbf S_{\mathbf u \mathbf u}(\omega) = \mathbf G_{\bm \gamma} (j \omega) \mathcal Q_{\bm \gamma} \mathbf G_{\bm \gamma} (- j \omega)^T$, where $\mathbf G_{\bm \gamma} (j \omega) = \mathbf C_{\bm \gamma} \left( j \omega \mathbf I - \mathbf A_{\bm \gamma} \right)^{-1} \mathbf B_{\bm \gamma} $. 

We posit that the dynamic behaviors of a system (possibly with internal feedback controllers) can be described, statistically, by a certain random process. While it sounds less intuitive as a system is usually designed to accomplish certain tasks and its behaviors are mostly deterministic instead of random, deterministic behavior can always be viewed as a sample of a collection or ensemble of signals (i.e., random process)~\cite{asp}. With this hypothesis, the model parameters (i.e., $\mathbf A_{\bm \gamma}, \mathbf B_{\bm \gamma}, \mathbf C_{\bm \gamma}$ and $\mathcal Q_{\bm \gamma}$) should be chosen such that the resultant PSD $\mathbf S_{ \mathbf u \mathbf u}(\omega)$ agrees with the actual state PSD. Then, the state distribution can be effectively captured by (\ref{e:gamma_model}) while their exact value can be inferred from the measured data.

Since the motion of robotic systems usually possesses certain smoothness (e.g., due to actuator delay), quick changes in accelerations are relatively unlikely and a low-pass type $\mathbf G_{\bm \gamma} (j \omega)$ usually suffices. More specifically, we propose to use an $n$-th order integrator as shown in (\ref{e:n_int_model}). A benefit of the integrator is its sparse structure, which considerably saves the computation.
\begin{equation}~\label{e:n_int_model}
\begin{aligned}
\underbrace{\begin{bmatrix}
\dot{\gamma}_1 \\ \vdots \\ \dot{\gamma}_{N}
\end{bmatrix}}_{\dot{\bm \gamma}} 
&= \underbrace{\begin{bmatrix} 0 & 1 & &  \\ & \ddots& \ddots & \\ & &0&1\\ & & &0 \end{bmatrix}}_{\mathbf A_{\bm \gamma} } 
\underbrace{\begin{bmatrix}
\gamma_1 \\ \vdots \\ \gamma_{N}
\end{bmatrix}}_{\bm \gamma}
 + \underbrace{\begin{bmatrix}
w_{{\gamma}_1} \\ \vdots  \\ w_{{\gamma}_{N}}
\end{bmatrix}}_{\mathbf w_{\gamma}} \\
u &= \underbrace{\begin{bmatrix} 1 & 0 & \hdots & 0 \end{bmatrix}}_{\mathbf C_{\bm \gamma}} \bm \gamma
\end{aligned}
\end{equation}
where $w_{\gamma_i} \sim \mathcal{N}(0, q_{\gamma_i})$. The resultant PSD is:

\begin{equation}
\begin{aligned}\nonumber
S_{uu} \left(\omega\right)= \sum_{i=1}^{N}\frac{q_{\gamma_i}}{\omega^{2i}} 
\end{aligned}
\end{equation}
where $q_{\gamma_i}$ are tunable parameters. Fig. \ref{fig:psdofquad} shows the case when $N=4$ and $q_{\gamma_i} = 1$.
\begin{figure}[t]
	\centering
	\includegraphics[width=1\columnwidth]{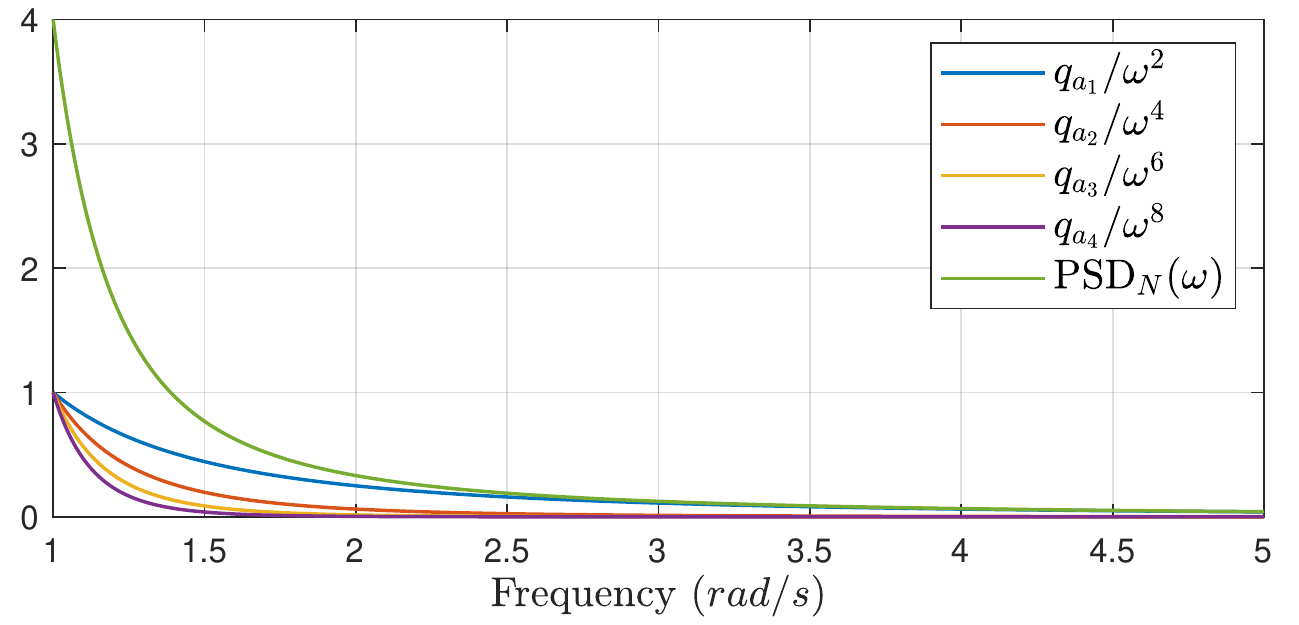}
	\caption{The output PSD of colored stochastic when $N=4$ and $q_{\gamma_i} = 1$.
		\label{fig:psdofquad}}
	\vspace{-0.cm}
\end{figure}

{\it Remark:} The integrator model in (\ref{e:n_int_model}) has been used in some applications, such as the constant velocity model (i.e., $N=1$) used in SLAM~\cite{davison2003real} and the quadratic stochastic process (i.e., $N=4$) used in the aircraft model identification ~\cite{stalford1981high,ramachandran1977identification}.

\subsection{System observability}\label{obsv_gamma}

We analyze the property of the proposed motion model when it is combined with the system kinematic model. To simplify the analysis, we use a linear kinematic model while defer the analysis for real nonlinear robotic systems to Section V. Assume the kinematic model is

\begin{equation}~\label{e:system_equation}
\begin{aligned}
\dot{ \mathbf s} &= \mathbf A_{\mathbf s} \mathbf s + \mathbf B_{\mathbf s} \mathbf u \\
\mathbf y_{\mathbf s} &= \mathbf C_{\mathbf s} \mathbf s + \mathbf n_{\mathbf s} 
\end{aligned}
\end{equation}
where $\mathbf u$ (i.e., acceleration) is the dynamics state from the motion model (\ref{e:gamma_model}). This model is assumed to be a minimal realization and the corresponding transfer function is $\mathbf G_{\mathbf s} (j \omega) = \mathbf C_{\mathbf s} (j \omega \mathbf I - \mathbf A_{\mathbf s})^{-1} \mathbf B_{\mathbf s}$.

Concatenating the motion model (\ref{e:gamma_model}) with  (\ref{e:system_equation}) in series leads to a system in Fig.~\ref{fig:analys_diag}, where the output is the original system output $\mathbf y_s$. The output could also contain dynamics state measurements $\mathbf u_m$. For example, when estimating the state of the robot itself, the acceleration is usually measured by an onboard accelerometer. In this case, $\mathbf u_m = \mathbf u + \mathbf n_u$ where $\mathbf n_u$ is the measurement noise. 

\begin{figure}[h]
	\centering
	\includegraphics[width=0.8\columnwidth]{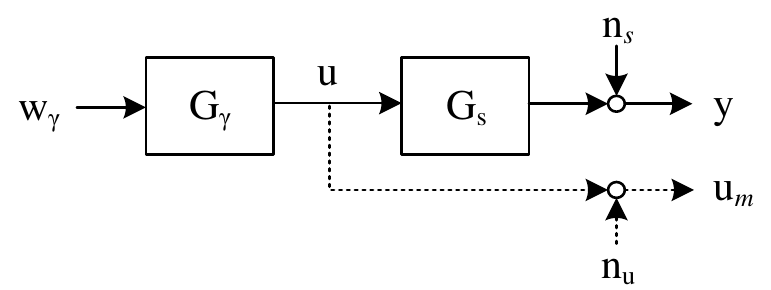}
	\caption{The concatenated system consists of a colored stochastic process $\mathbf G_\gamma$ and kinematic model $\mathbf G_{\mathbf s}$.
		\label{fig:analys_diag}}
	\vspace{-0.cm}
\end{figure}

On the one hand, When there is no measurement $\mathbf u_m$, the concatenated system $ {\mathbf G} = \mathbf G_{\mathbf s} (j \omega) \mathbf G_{\bm \gamma}(j \omega)$ is observable if there is no pole zero cancellation between $\mathbf G_{\mathbf s}(j \omega) $ and $\mathbf G_{\bm \gamma}(j \omega)$. On the other hand, when $\mathbf u_m$ is present, the concatenated system is always observable (see Lemma \ref{obs_lin} below). In either case, once the system is observable, a Kalman filter could be employed to estimate the dynamics state $\mathbf u$ from the system output $\mathbf y_s$ (and state measurement $\mathbf u_m$).

\begin{lemma}~\label{obs_lin}
For any two linear observable systems $(\mathbf C_{\mathbf s}, \mathbf A_{\mathbf s}, \mathbf B_{\mathbf s})$ and $(\mathbf C_{\bm \gamma}, \mathbf A_{\bm \gamma}, \mathbf B_{\bm \gamma})$, their series concatenation
\begin{equation}\nonumber
\begin{aligned}
\underbrace{\begin{bmatrix}
\dot{\mathbf s } \\ \dot{\bm \gamma} 
\end{bmatrix}}_{\dot{\mathbf x }} 
&= \underbrace{\begin{bmatrix} \mathbf A_{\mathbf s} & \mathbf B_{\mathbf s} \mathbf C_{\bm \gamma} \\ \mathbf 0 & \mathbf A_{\bm \gamma}  \end{bmatrix}}_{\mathbf A } 
\underbrace{\begin{bmatrix}
{\mathbf s } \\ {\bm \gamma} 
\end{bmatrix}}_{\mathbf x} + \underbrace{\begin{bmatrix} \mathbf 0  \\ \mathbf B_{\bm \gamma}  \end{bmatrix}}_{\mathbf B_{w} } {\mathbf w_{\bm \gamma}}
  \\
\underbrace{\begin{bmatrix} \mathbf y_s \\ \mathbf u_m \end{bmatrix}}_{\mathbf y} &= \underbrace{\begin{bmatrix} \mathbf C_{\mathbf s} & \mathbf 0 \\
\mathbf 0 & \mathbf C_{\bm \gamma} \end{bmatrix}}_{\mathbf C} \mathbf x 
+  \underbrace{\begin{bmatrix} \mathbf n_y  \\ \mathbf n_u \end{bmatrix}}_{\mathbf n }
\end{aligned}
\end{equation}
is also observable.
\end{lemma}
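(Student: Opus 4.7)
The plan is to apply the Popov--Belevitch--Hautus (PBH) eigenvector test for observability, which asserts that $(\mathbf{C}, \mathbf{A})$ is observable if and only if $\begin{bmatrix} \mathbf{A} - \lambda \mathbf{I} \\ \mathbf{C} \end{bmatrix}$ has full column rank for every $\lambda \in \mathbb{C}$. This is the natural tool here because the block-upper-triangular shape of $\mathbf{A}$ together with the block-diagonal shape of $\mathbf{C}$ lets me peel off the two subsystems one at a time, rather than wrestling with the full observability matrix $[\mathbf{C}^T, (\mathbf{C}\mathbf{A})^T, \ldots]^T$, whose block entries grow in complexity with powers of $\mathbf{A}$ because of the coupling term $\mathbf{B}_{\mathbf{s}} \mathbf{C}_{\bm \gamma}$.

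The execution has two clean steps. First, fix any $\lambda \in \mathbb{C}$ and suppose there is a null vector $[\mathbf{v}_1^T,\; \mathbf{v}_2^T]^T$, partitioned conformably with $\mathbf{s}$ and $\bm \gamma$. Writing out the block equations yields
\begin{equation}\nonumber
(\mathbf{A}_{\mathbf{s}} - \lambda \mathbf{I}) \mathbf{v}_1 + \mathbf{B}_{\mathbf{s}} \mathbf{C}_{\bm \gamma} \mathbf{v}_2 = \mathbf{0}, \qquad \mathbf{C}_{\mathbf{s}} \mathbf{v}_1 = \mathbf{0},
\end{equation}
\begin{equation}\nonumber
(\mathbf{A}_{\bm \gamma} - \lambda \mathbf{I}) \mathbf{v}_2 = \mathbf{0}, \qquad \mathbf{C}_{\bm \gamma} \mathbf{v}_2 = \mathbf{0}.
\end{equation}
The last two equations are exactly the PBH null-vector condition for $(\mathbf{C}_{\bm \gamma}, \mathbf{A}_{\bm \gamma})$; since that subsystem is observable by hypothesis, $\mathbf{v}_2 = \mathbf{0}$. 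Substituting back collapses the first equation to $(\mathbf{A}_{\mathbf{s}} - \lambda \mathbf{I}) \mathbf{v}_1 = \mathbf{0}$, which together with $\mathbf{C}_{\mathbf{s}} \mathbf{v}_1 = \mathbf{0}$ is the PBH condition on the observable subsystem $(\mathbf{C}_{\mathbf{s}}, \mathbf{A}_{\mathbf{s}})$, forcing $\mathbf{v}_1 = \mathbf{0}$. Since the only null vector is zero for every $\lambda$, PBH delivers observability of $(\mathbf{C}, \mathbf{A})$.

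I do not anticipate a serious obstacle: the cross-coupling block $\mathbf{B}_{\mathbf{s}} \mathbf{C}_{\bm \gamma}$ is harmless precisely because the direct measurement channel $\mathbf{u}_m = \mathbf{C}_{\bm \gamma} \bm \gamma + \mathbf{n}_u$ lets us pin down $\mathbf{v}_2$ first, after which the coupling term vanishes from the remaining $\mathbf{s}$-equation. The only detail worth double-checking is that the hypothesis ``$(\mathbf{C}_{\bm \gamma}, \mathbf{A}_{\bm \gamma})$ observable'' really is equivalent to the PBH full-rank condition I invoke, which is standard, so no extra assumption sneaks in. Should a PBH-free argument ever be preferred, one could instead expand $\mathbf{C}\mathbf{A}^k$ block by block and exploit the triangular structure to show the diagonal blocks $\mathbf{C}_{\mathbf{s}} \mathbf{A}_{\mathbf{s}}^k$ and $\mathbf{C}_{\bm \gamma} \mathbf{A}_{\bm \gamma}^k$ independently span their respective state spaces, but that derivation is considerably longer and obscures the essential mechanism, so I would present only the PBH version.
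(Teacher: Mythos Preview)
Your PBH argument is correct and complete. The paper, however, takes the other route you sketch at the end: it writes out the Kalman observability matrix $\mathcal{O}=[\mathbf C^T,(\mathbf C\mathbf A)^T,\ldots]^T$ and uses the block-upper-triangular form of $\mathbf A$ together with the block-diagonal $\mathbf C$ to exhibit $\mathcal{O}$ (after a row permutation) as
\[
\mathcal{O}=\begin{bmatrix}\mathcal{O}_{\mathbf s}&\mathcal{O}_{12}\\ \mathbf 0&\mathcal{O}_{\bm\gamma}\\ \mathcal{O}_{31}&\mathcal{O}_{32}\end{bmatrix},
\]
from which full column rank follows because $\mathcal{O}_{\mathbf s}$ and $\mathcal{O}_{\bm\gamma}$ each have full column rank. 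So the paper does precisely the ``PBH-free'' bookkeeping you dismissed as longer; in fact it is only a few lines because one never needs the explicit form of the off-diagonal blocks, just that the lower-left block in the relevant rows is zero. Your PBH proof is cleaner in that it handles the coupling term $\mathbf B_{\mathbf s}\mathbf C_{\bm\gamma}$ by making it literally disappear once $\mathbf v_2=\mathbf 0$, whereas the paper's approach simply ignores those entries as irrelevant $\bullet$'s. The trade-off is that the observability-matrix argument is self-contained and matches the machinery used elsewhere in the paper (e.g., the nonlinear rank condition in Section~V), while your PBH route imports a standard but separate equivalence. Either is perfectly acceptable.
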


\begin{proof}
The observability matrix of the concatenated system $(\mathbf C, \mathbf A)$ is 
\begin{equation}~\label{e:obsv}
    \mathcal{O} = \begin{bmatrix} \mathbf C \\ \mathbf C \mathbf A \\ \vdots \\ \mathbf C \mathbf A^{n_{ s} + n_{ \gamma} - 1} \end{bmatrix} = \begin{bmatrix}  \mathcal{O}_{\mathbf s} & \mathcal{O}_{12} \\ \mathbf 0 & \mathcal{O}_{\bm \gamma}  \\ \mathcal{O}_{31} & \mathcal{O}_{32}  \end{bmatrix}
\end{equation}
where $n_s$ and $n_{\gamma}$ are respectively the dimension of the system $(\mathbf C_{\mathbf s}, \mathbf A_{\mathbf s})$ and $(\mathbf C_{\bm \gamma}, \mathbf A_{\bm \gamma})$, $\mathcal{O}_{\mathbf s}$ and $\mathcal{O}_{\bm \gamma}$ are the observability matrix of the two systems. Since $(\mathbf C_{\mathbf s}, \mathbf A_{\mathbf s})$ and $(\mathbf C_{\bm \gamma}, \mathbf A_{\bm \gamma})$ are both observable, $\mathcal{O}_{\mathbf s}$ and $\mathcal{O}_{\bm \gamma}$ both has full column rank, which implies $\mathcal O$ has full column rank and the concatenated system $(\mathbf C, \mathbf A)$ is observable.
\end{proof}

\subsection{Case study}~\label{case_study}

Besides positing the dynamic behavior of a robotic system as a random process, we present another interpretation, from the perspective of signal processing, to our proposed motion model when it is combined with other techniques such as the Kalman filter. As proved in the previous section, the motion model, when concatenated to the kinematic model (\ref{e:system_equation}), remains observable, 
therefore a Kalman filter could be employed to estimate the full state vector $\mathbf x = \begin{bmatrix} \mathbf s^T & \bm \gamma^T \end{bmatrix}^T$ and hence $\mathbf u = \mathbf C_{\bm \gamma} \bm \gamma$ from the measurements $\mathbf u_m$ and $\mathbf y$.  Alternatively, the ground truth state $\mathbf u$ can be estimated by directly filtering $\mathbf u_m$. In this section, we present a simple case to illustrate the difference and connections between these two filters. In this simple case study, we assume that the dynamics state $u$ is the one-dimension linear acceleration $a$, the system kinematic model (\ref{e:system_equation}) is a velocity model, and the measurements are the velocity $ v_m$ and acceleration $ a_m$. We choose $N=2$ in the colored stochastic. The concatenated system is hence:

\begin{equation}~\label{e:ABmode2}
\begin{aligned}
\underbrace{\begin{bmatrix}
\dot{{v}} \\ \dot{ a} \\ \dot{ a}_1 
\end{bmatrix}}_{\dot{\mathbf x }} 
&= \underbrace{\begin{bmatrix} 0 & 1 & 0 \\ 0 & 0 & 1 \\ 0 & 0 &0 \end{bmatrix}}_{\mathbf A } 
\underbrace{\begin{bmatrix}
 v \\  a \\  a_1
\end{bmatrix}}_{\mathbf x}
 + \underbrace{\begin{bmatrix} 0 & 0 \\ 1 & 0 \\ 0 & 1 \end{bmatrix}}_{\mathbf B_w } 
\underbrace{\begin{bmatrix}
 w_{a_1}  \\  w_{a_2}
\end{bmatrix}}_{\mathbf w} \\
\underbrace{\begin{bmatrix} v_m \\  a_m \end{bmatrix}}_{\mathbf y} &= \underbrace{\begin{bmatrix} 1 & 0 & 0 \\
0 & 1 & 0 \end{bmatrix}}_{\mathbf C} \mathbf x + \underbrace{\begin{bmatrix}  n_v  \\  n_a \end{bmatrix}}_{\mathbf n}
\end{aligned}
\end{equation}
where $ w_{a_1}\sim\mathcal N\left( 0, q_{a_1}\right),  w_{a_2} \sim\mathcal N\left(0, q_{a_2}\right)$ are the process noises and $ n_{v}\sim\mathcal N\left(0, r_v\right),\  n_{a}\sim \mathcal N\left(0, r_a\right)$ are the measurement noises.

As shown in the previous section, the concatenated system (\ref{e:ABmode2}) is observable, hence the Kalman filter will converge to a stationary gain $\mathbf L_{\infty}$. At convergence, the transfer function from the combined output $\mathbf y$ to the acceleration estimation $\widehat{a}$ is
\begin{equation}~\label{e:transf_cal_sskf}
\begin{aligned}
\widehat{a}\left(s\right) &= \mathbf e_2^T \left(s\mathbf I -\mathbf A + \mathbf L_\infty \mathbf C \right) ^{-1} \mathbf L_\infty\; \mathbf y\left(s\right)\\
&= G_v(s) v_m(s) + G_a(s) a_m(s) 
\end{aligned}
\end{equation}
where $\mathbf e_2 = \begin{bmatrix}0 & 1 & 0 \end{bmatrix}^T$ and $G_v(s), G_a(s)$ are the two components of $\mathbf e_2^T \left(s\mathbf I -\mathbf A + \mathbf L_\infty \mathbf C \right) ^{-1} \mathbf L_\infty$.

Substituting the measurement model from (\ref{e:ABmode2}) into (\ref{e:transf_cal_sskf}) and recalling $v(s) = \frac{1}{s} a(s)$, we obtain
\begin{equation}\nonumber
\begin{aligned}
\widehat{a}\left(s\right)= \left[\frac{G_{v}(s)}{s}+G_{a}(s)\right]a(s)+G_{v}(s)n_{v}(s)+G_{a}(s)n_{a}(s)
\end{aligned}
\end{equation}

Since the measurement noise $n_{v}$ and $n_{a}$ are independent, the standard deviation of $\widehat{a}$ caused by $n_v$ and $n_a$ is hence
\begin{equation}\nonumber
\begin{aligned}
\sigma_{\widehat{a}}  &= \sqrt{r_v \| G_{v}( j \omega) \|_2^2 + r_a \| G_{a}( j \omega)\|_2^2}
\end{aligned}
\end{equation}
And the transfer function from actual acceleration $a(s)$ to the estimated one $\widehat{a}(s)$ is:
\begin{equation}\nonumber
G_{a\widehat{a}}(s)=\frac{G_{v}(s)}{s}+G_{a}(s)
\end{equation}
Its bode diagram is shown in Fig.~\ref{fig:bode_quadra}. It can be seen that $G_{a\widehat{a}}$ behaves like a first-order low-pass filter in high frequencies. 

\begin{figure}[t]
	\centering
	\includegraphics[width=1\columnwidth]{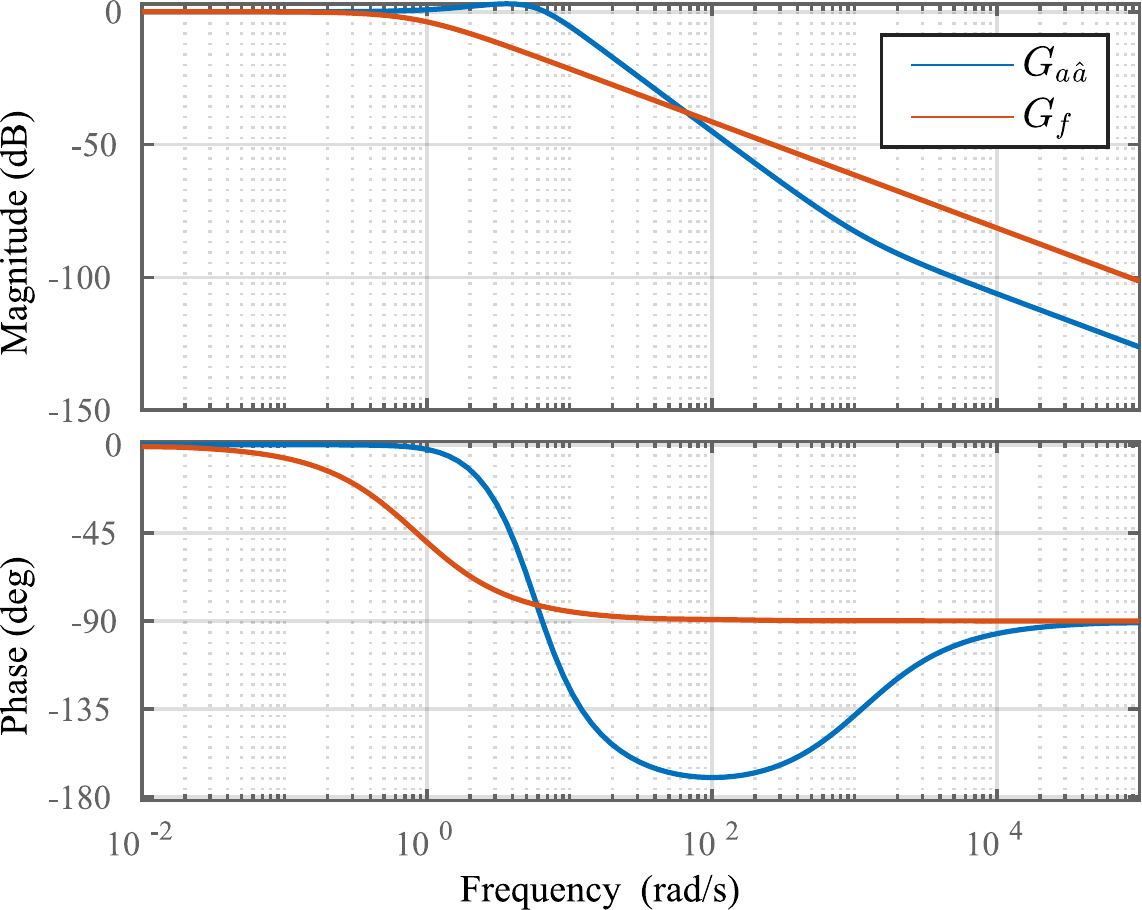}
	\caption{The bode diagram of $G_f(s)$ and $G_{a\widehat{a}}(s)$ when $\begin{bmatrix}
	r_v & r_a
	\end{bmatrix}=\begin{bmatrix}
	0.1& 1
	\end{bmatrix}$ and $\begin{bmatrix}
	q_{a_1}& q_{a_2}
	\end{bmatrix}=\begin{bmatrix}
	0.1& 1
	\end{bmatrix}$. 
		\label{fig:bode_quadra}}
	\vspace{-0.cm}
\end{figure}

Moreover, if directly filtering the acceleration measurement $a_m$ by a first order Butterworth low-pass filter:
\begin{equation}\nonumber
\begin{aligned}
G_f(s)=\frac{1}{1+ks}
\end{aligned}
\end{equation}
where the parameter $k$ is determined by ensuring the same level of noise attenuation (i.e., $\| G_f(j \omega) \|_2 = \sqrt{r_v \| G_{v}( j \omega) \|_2^2 + r_a \| G_{a}( j \omega)\|_2^2}$). The transfer function $G_f(s)$ is shown in Fig.~\ref{fig:bode_quadra}. It can be seen that the Kalman filter (i.e., $G_{a\widehat{a}}(s)$) has much higher bandwidth than the direct filtering method (i.e., $G_f(s)$), allowing more signal to pass through with low delay. This is no surprising as the Kalman filter infers this acceleration from both the acceleration and velocity measurements instead of one only. Finally, it is noticed that the Kalman filter has a slight amplification in the middle frequency range, which could lead to a small overshoot in the acceleration estimate.

\section{State Estimation}\label{sim}
In this section, we leverage the proposed motion model (\ref{e:gamma_model}) to state estimations of robotic systems. We take an UAV shown in Fig.~\ref{fig:example_sim} as an illustrative example. This system is configured with an IMU sensor containing a gyroscope and an accelerometer and a position sensor (Pos in Fig.\ref{fig:example_sim}) with orientation measurements, such as GPS, motion capture feedback, or visual-inertial odometry (VIO). The objective is to estimate the kinematics state (i.e., position, velocity, attitude, and IMU biases), dynamics states (i.e., angular velocity and linear acceleration), and the extrinsic parameters (i.e., the constant offset $\mathbf c$ between the position sensor and IMU). 

\begin{figure}[b]
	\centering
	\includegraphics[width=1\columnwidth]{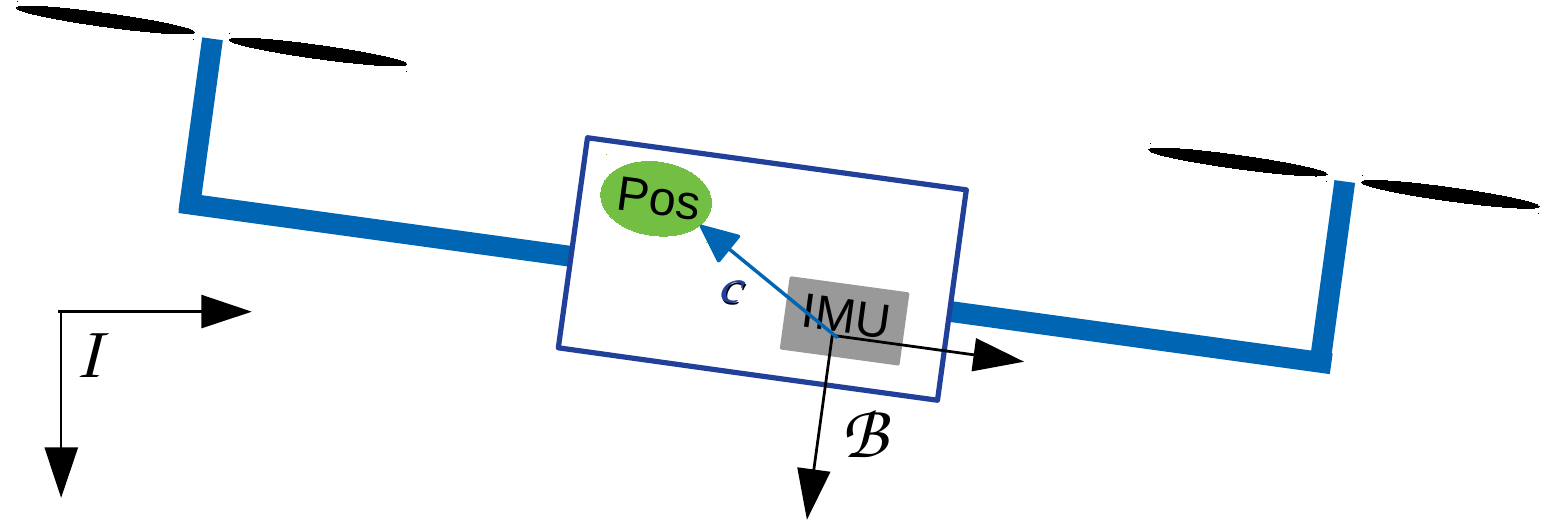}
	\caption{A typical UAV configured with a position sensor and IMU.
		\label{fig:example_sim}}
	\vspace{-0cm}
\end{figure}

\subsection {State Space Model}~\label{sim_model}
To model the UAV system, we choose the IMU as the reference point and denote its position, specific acceleration, and body angular velocity as $\mathbf p$, $\mathbf a$, and $\bm \omega$, respectively. It should be noted that the specific acceleration is the total acceleration removing the gravity and is directly measured by the accelerometer \cite{martin2010true}. The rotation from inertial frame to IMU's body frame is denoted as a rotation matrix $\mathbf R\in {SO}(3)$. The measurement of position sensor, accelerometer, and gyroscope are $\mathbf p_m, \mathbf a_m$, and $\bm \omega_m$, respectively. The orientation measurement comes from magnetometer or other sensors like vision sensors and is denoted as $\mathbf m_m$. The biases of IMU accelerometer and gyroscope are denoted as $\mathbf b_{\mathbf a}$ and $\mathbf b_{\bm \omega}$, which are typically modeled as random walks~\cite{lu2019imu}. With these notations, the resultant kinematic model is: 
\begin{equation}~\label{e:kine_model_s}
\begin{aligned}
\dot{\mathbf p}&=\mathbf v,\quad \ \dot{\mathbf v}=\mathbf R \mathbf a + \mathbf g,\quad \dot{\mathbf b}_{\mathbf a}=\mathbf w_{\mathbf b_{\mathbf a}}, \\
\dot {\mathbf c}&=\mathbf 0,\quad \dot{\mathbf R}=\mathbf R \lfloor \bm \omega\rfloor,\quad \quad \dot{\mathbf b}_{\bm \omega}=\mathbf w_{\mathbf b_{\bm \omega}}
\end{aligned}
\end{equation}
where the notation $\lfloor \bm \cdot \rfloor$ denotes the skew-symmetric matrix of a vector $\bm \cdot$ in $\mathbb{R}^3$. The system outputs are
\begin{subequations}~\label{e:measure}
\begin{alignat}{8}~\label{e:measure_aw}
{\mathbf a}_{m}&=\mathbf a + \mathbf b_{\mathbf a}+\mathbf n_{\mathbf a},\quad\ {\bm \omega}_{m}=\bm \omega + \mathbf b_{\bm \omega}+\mathbf n_{\bm \omega}\\
{\mathbf p}_{m}&=\mathbf p + \mathbf R\mathbf c+\mathbf n_{\mathbf p},\quad {\mathbf m}_{m}=\mathbf R^T \mathbf e_1+\mathbf n_m \label{e:measure_pm}
\end{alignat}
\end{subequations}

\subsection {Input formulation}~\label{input_formula}
The acceleration $\mathbf a$ and angular velocity $\boldsymbol{\omega}$ are unknown in (\ref{e:kine_model_s}). To form a valid state space model with known inputs, a common practice (e.g., \cite{sola2017quaternion, qin2018vins,forster2016manifold, mourikis2007multi, mirzaei2008kalman, kelly2011visual, bry2012state}) is to solve them from the measurements equation (\ref{e:measure_aw}), forming a system ``driven" by the known measurement $\mathbf a_m$ and $\boldsymbol{\omega}_m$. 
\begin{equation}~\label{e:kine_model_input}
\begin{aligned}
\dot{\mathbf p}&=\mathbf v,\quad \ \dot{\mathbf v}=\mathbf R \mathbf (\mathbf a_m - \mathbf b_{\mathbf a} - \mathbf n_{\mathbf a}) + \mathbf g,\quad \dot{\mathbf b}_{\mathbf a}=\mathbf w_{\mathbf b_{\mathbf a}}, \\
\dot {\mathbf c}&=\mathbf 0,\quad\; \dot{\mathbf R}=\mathbf R \lfloor \bm \omega_m - \mathbf b_{\bm \omega} - \mathbf n_{\bm \omega}\rfloor, \quad \quad\ \dot{\mathbf b}_{\bm \omega}=\mathbf w_{{\mathbf b}_{\bm \omega}}
\end{aligned}
\end{equation}

We term this formulation as ``input formulation" as the acceleration and angular velocity measurements appear as inputs to the formulated system. The system output vector is then $\mathbf z_m^T = \begin{bmatrix} {\mathbf p}_{m}^T & {\mathbf m}_{m}^T \end{bmatrix}$ in (\ref{e:measure_pm}). It should be noticed that a state estimator (e.g., EKF) based on the input formulation (\ref{e:kine_model_input}) does not filter the input signals $\mathbf a_m$ and $\boldsymbol{\omega}_m$.

\subsection {State formulation}~\label{state_formula}
With the proposed statistical model (\ref{e:gamma_model}), the unknown inputs in (\ref{e:kine_model_s}), $\mathbf a$ and $\boldsymbol{\omega}$, are now the outputs of (\ref{e:gamma_model}), leading to a concatenated system shown below:
\begin{equation}
\begin{split}~\label{e:kine_model_state}
    \dot{\bm \gamma}_{\mathbf a}&=\mathbf A_{\bm \gamma_{\mathbf a}}{\bm \gamma}_{\mathbf a}\!+\!\mathbf B_{\bm \gamma_{\mathbf a}}\mathbf w_{\bm \gamma_{\mathbf a}}, \
    \dot{\bm \gamma}_{\bm \omega}=\mathbf A_{\bm \gamma_{\bm \omega}}{\bm \gamma}_{\bm \omega}\!+\!\mathbf B_{\bm \gamma_{\bm \omega}}\mathbf w_{\bm \gamma_{\bm \omega}} \\
    \dot{\mathbf p} &=\mathbf v,\quad \ \dot{\mathbf v}=\mathbf R \left( \mathbf C_{\bm \gamma_{\mathbf a}}{\bm \gamma}_{\mathbf a} \right) + \mathbf g,\quad \dot{\mathbf b}_{\mathbf a}=\mathbf w_{\mathbf b_{\mathbf a}},  \\
\dot {\mathbf c} &=\mathbf 0,\quad\; \dot{\mathbf R}=\mathbf R \lfloor \mathbf C_{\bm \gamma_{\bm \omega}}{\bm \gamma}_{\boldsymbol{\omega}} \rfloor, \quad \quad \ \dot{\mathbf b}_{\bm \omega}=\mathbf w_{{\mathbf b}_{\bm \omega}}
\end{split}
\end{equation}

The system output vector is $\mathbf z_m^T = \begin{bmatrix}
{\mathbf p}_{m}^T& {\mathbf m}_{m}^T& \mathbf a_m^T& \boldsymbol{\omega}_m^T
\end{bmatrix}$ as in (\ref{e:measure}). Unlike the input formulation (\ref{e:kine_model_input}), the dynamics states $\mathbf a=\mathbf C_{\bm \gamma_{\mathbf a}}\bm \gamma_{\mathbf a}$ and $\bm \omega= \mathbf C_{\bm \gamma_{\bm \omega}}\bm \gamma_{\bm \omega}$ now appear as functions of system states, thus could be filtered by a state estimator. We term this new formulation as ``state formulation". 

\iffalse
$\mathbf a=\mathbf C_{\bm \gamma_{\mathbf a}}\bm \gamma_{\mathbf a}$ and $\bm \omega= \mathbf C_{\bm \gamma_{\bm \omega}}\bm \gamma_{\bm \omega}$. Adding the states $\bm \gamma_{\mathbf a}$ and $\bm \gamma_{\bm \omega}$ into the state variables $\mathbf{x}_S$ and choosing all the measurements in (\ref{e:measure}) as the output $\mathbf y_I$, the state space system model is shown as below.
\begin{equation}\nonumber
    \begin{aligned}
    {\mathbf x}_S &= \begin{bmatrix} \mathbf x_I^T & \bm{\gamma}_{\mathbf{a}}^T & \bm{\gamma}_{\bm{\omega}}^T \end{bmatrix}^T\quad  \\ \mathbf y_I &= \begin{bmatrix} {\mathbf p}_{m}^T & {\mathbf m}_{m}^T &  {\mathbf a}_{m}^T & {\bm \omega}_{m}^T  \end{bmatrix}^T
    \end{aligned}
\end{equation}
where the states $\bm \gamma_{\mathbf a}$ and $\bm \gamma_{\bm \omega}$ follow the similar form as (\ref{e:gamma_model}):
\begin{equation}\nonumber
    \begin{aligned}
    \dot{\bm \gamma}_{\mathbf a}=\mathbf A_{\bm \gamma_{\mathbf a}}{\bm \gamma}_{\mathbf a}+\mathbf B_{\bm \gamma_{\mathbf a}}\mathbf w_{\bm \gamma_{\mathbf a}} ,\quad 
    \dot{\bm \gamma}_{\bm \omega}=\mathbf A_{\bm \gamma_{\bm \omega}}{\bm \gamma}_{\bm \omega}+\mathbf B_{\bm \gamma_{\bm \omega}}\mathbf w_{\bm \gamma_{\bm \omega}}
    \end{aligned}
\end{equation}
\fi

\subsection {Estimation}~\label{sec:estimation}
To estimate the dynamics states $\mathbf a$ and $\bm \omega$, it is sufficient to estimate the state vector of the state formulation (\ref{e:kine_model_state}). This is a standard state estimation problem and various nonlinear state estimator could be used such as EKF and UKF. In this paper, we employ an error-state EKF as detailed in \cite{sola2017quaternion}. The filter error-state vector is defined as
\begin{equation}~\label{e:error_state}
    \widetilde{\mathbf x} = \begin{bmatrix}
    \widetilde{\mathbf p}^T & \widetilde{\mathbf v}^T & \delta \bm \theta^T & \widetilde{\mathbf c}^T & \widetilde{\mathbf b}_a^T &\widetilde{\mathbf b}_\omega^T  &\widetilde{\bm \gamma}_{\mathbf a}^T &\widetilde{\bm \gamma}_{\boldsymbol{\omega}}^T
    \end{bmatrix}^T
\end{equation}
where  $\delta \bm \theta=\text{Log}(\widehat{\mathbf R}^T\mathbf R )$ is the attitude error and the rests are standard additive errors (i.e., the error in the estimate $\widehat{\mathbf x} $
of a quantity $\mathbf x$ is $\widetilde{\mathbf x} = \mathbf x - \widehat{\mathbf x}$). Intuitively, the attitude error $\delta \bm \theta $ describes the (small) deviation between the true and the estimated attitude. The main advantage of this error definition is that it allows us to represent the attitude uncertainty by the $3\times3$ covariance matrix $ \mathbb{E} \left\{\delta\bm\theta\delta\bm\theta^T\right\}$. Since the attitude has 3 degree of freedom (DOF), this is a minimal representation.

\subsubsection{Filter Propagation}\
\par

The propagation step breaks into state propagation and covariance propagation. The former one is achieved by applying expectation operator on both side of (\ref{e:kine_model_state}). The propagation on covariance is applied on the error state vector (\ref{e:error_state}) by linearizing the state equation (\ref{e:kine_model_state}):

\begin{equation}\nonumber
\begin{aligned}
\dot{\widetilde{\mathbf x}} &= \mathbf F_{{\mathbf x}}\widetilde{\mathbf x} + \mathbf F_{\mathbf w} \mathbf w
\end{aligned}
\end{equation}
where $\mathbf F_{{\mathbf x}}$, $\mathbf F_{\mathbf w}$ are the Jacobin matrices w.r.t. $\widetilde{\mathbf x}$, and $\mathbf w \sim \mathcal{N}(\mathbf 0, \mathcal{Q})$ is the process noise.

The covariance matrix is then propagated as 
\begin{equation}\nonumber
    \mathbf P_{k+1|k} = \mathbf \Phi_{\mathbf x_k} \mathbf P_{k|k} \mathbf \Phi_{\mathbf x_k}^T + \mathbf \Phi_{\mathbf w_k} \mathcal Q \mathbf \Phi_{\mathbf w_k}^T
\end{equation}
where $\mathbf \Phi_{\mathbf x_k} = \mathbf F_{{\mathbf x}_k}\cdot\Delta t + \mathbf I$ and $\mathbf \Phi_{\mathbf w_k} = \mathbf F_{\mathbf w_k} \cdot\Delta t$.

\subsubsection{Filter Update}\
\par

The measurement residual is
\begin{equation}\nonumber
    \mathbf r = \mathbf z_m - \hat{\mathbf z} \simeq \mathbf H \widetilde{\mathbf x} + \mathbf n
\end{equation}
where $\mathbf H$ is the Jacobin matrices of (\ref{e:measure}) w.r.t. $\widetilde{\mathbf x}$, and $\mathbf n \sim \mathcal{N}(\mathbf 0, \mathcal{R})$ is the measurement noise vector in (\ref{e:measure}). Every time there comes a new measurement, the state variables are updated through the following steps:

a). Compute the Kalman gain:
\begin{equation}\nonumber
\begin{aligned}
\mathbf K_{k+1} &= \mathbf P_{k+1|k}\mathbf H_{k+1}^T \left( \mathbf H_{k+1}\mathbf P_{k+1|k}\mathbf H_{k+1}^T + \mathcal R \right)^{-1}
\end{aligned}
\end{equation}

b). Update the Error states:
\begin{equation}\nonumber
\widetilde{\mathbf x}_{k+1|k+1} = \mathbf K_{k+1}\left(\mathbf z_{m_{k+1}} - \hat{\mathbf z}_{k+1}  \right)
\end{equation}

c). Update the original state variables based on the definition of the error state in (\ref{e:error_state}), and compute the covariance as:
\begin{equation}\nonumber
\mathbf P_{k+1|k+1} = \mathbf P_{k+1|k} - \mathbf K_{k+1}\mathbf S_{k+1}\mathbf K_{k+1}^T
\end{equation}

\section{Observability Analysis}

For a filter (e.g., EKF) to converge, the system must be observable~\cite{lee1982observability}. For linear time invariant (LTI) systems, the observability is easily determined by the well-known linear observability matrix rank test~\cite{hespanha2018linear}. For nonlinear systems, the observability is much more complicated~\cite{hermann1977nonlinear}. Among the various variants of observability outlined in~\cite{hermann1977nonlinear}, the {\it locally weakly observability} has been widely used in robotics for state estimation~\cite{hesch2014camera, weiss2011real,martinelli2011state, weiss2012real}, online extrinsic calibration \cite{mirzaei2008kalman, kelly2011visual}, and parameter identification~\cite{wuest2019online, martinelli2006automatic}, etc.

A system is said to be {\it locally weakly observable} if its state can be distinguished {\it instantaneously} (i.e., requiring arbitrarily small time duration) from any other states in a certain neighbor. Intuitively, this means a state estimator (e.g., EKF) will converge to an unbiased estimate of the state if the model truthfully represents the reality and the system inputs have sufficient excitation. An advantage of the locally weakly observability is that it lends itself to a simple algebraic test known as the {\it  observability rank condition} ~\cite{hermann1977nonlinear}.

\subsection{Observability Rank Condition}

Consider an input-linear system~\cite{nijmeijer1990nonlinear} on smooth manifold  $\mathcal{M}$:
\begin{equation}~\label{e:input_linear_sys}
\begin{aligned}
\dot{\mathbf{x}} &= \mathbf{f}_0\left(\mathbf x\right)+\sum_{i=1}^{m}\mathbf{f}_i\left(\mathbf x\right)u_i\\
\mathbf{y} &= \mathbf{h}\left(\mathbf x\right)
\end{aligned}
\end{equation}
where $\mathbf x \in \mathcal{M}$, $\mathbf{f}_0(\mathbf{x})$ is the drift (i.e., zero-input) vector field, and $\mathbf{f}_i(\mathbf{x})$, $i = 1,\cdots,m$, defines a vector field on $\mathcal{M}$ that is linear to each control input. Then the observability rank condition implies that if the observability matrix $\mathcal{O}(\bar{\mathbf x})$ defined below has full column rank at $\bar{\mathbf x}$, system (\ref{e:input_linear_sys}) is locally weakly observable at $\bar{\mathbf x}$~\cite{hermann1977nonlinear}. Furthermore, if $\mathcal{O}(\bar{\mathbf x})$ has full column rank for all $\bar{\mathbf x} \in \mathcal{M}$, then the system is locally weakly observable \cite{isidori2013nonlinear}. 

\begin{equation}~\label{e:obsv_lie}
    \mathcal{O}(\bar{\mathbf x}) = \begin{bmatrix}  \vdots \\ \nabla_{\mathbf x} \left( \left(\mathcal L_{\mathbf f^k \dots \mathbf f^1}^k\mathbf h \right) (\mathbf x) \right) (\bar{\mathbf x})  \\\vdots \end{bmatrix}
\end{equation}
where $k = 0, 1, \dots, $ and $\mathbf f^k$ is any of $\{\mathbf f_0, \mathbf f_1, \dots, \mathbf f_m\}$. When $k=1$, $\left(\mathcal L_{\mathbf f}^1\mathbf h \right) ({\mathbf x})$ is the Lie derivative of function $\mathbf h$ along the vector field $\mathbf f $ at point $\mathbf x \in \mathcal{M}$ and is defined below \cite{AKurusa2014}
\begin{equation}~\label{e:obsv_lie_0_order}
\begin{aligned}
    & \left( \mathcal L_{\mathbf f}^1\mathbf h \right) ({\mathbf x}) = \left. \frac{d {\mathbf{h}}(\mathbf{r} (t))}{dt} \right|_{t=0}, \\
    & \quad \quad \quad \quad \text{subject to} \ \frac{ d \mathbf{r}(t)}{dt}= \mathbf f(\mathbf{r}(t)), \mathbf{r}(0) = {\mathbf x} 
\end{aligned}
\end{equation}

Since the time derivative of $\mathbf h (\mathbf x (t))$ is evaluated at time zero, it is sufficient to only consider $\mathbf r(t)$ at $t = 0$, i.e., $\dot{\mathbf{r}}(0)= \mathbf f(\mathbf{r}(0)), \mathbf{r}(0) = {\mathbf x}$. Then (\ref{e:obsv_lie_0_order}) is equivalent to the following form for the sake of notation simplicity
\begin{equation}~\label{e:1st_lie_def}
\begin{aligned}
    & \left( \mathcal L_{\mathbf f}^1\mathbf h \right) ({\mathbf x}) =  \dot{\mathbf{h}}(\mathbf{x}), \text{subject to} \ \dot{\mathbf{x}}= \mathbf f(\mathbf{x}) 
\end{aligned}
\end{equation}

In particular, when $\mathcal{M} = \mathbb{R}^n$, the Lie derivative in (\ref{e:1st_lie_def}) can be simplified by the chain rule and becomes the commonly seen form.

\begin{equation}~\label{e:1st_lie_def_Rn}
\begin{array}{c}
    \left( \mathcal L_{\mathbf f}^1\mathbf h \right) ({\mathbf x}) = \frac{\partial \mathbf{h} ({\mathbf{x}}) }{\partial \mathbf x } \mathbf f ({\mathbf{x}})
\end{array}
\end{equation}

Based on the first order Lie derivative defined in (\ref{e:1st_lie_def}), the higher order Lie derivatives $\left( \mathcal L_{\mathbf f^k \dots \mathbf f^1}^k\mathbf h \right) (\mathbf x)$ in (\ref{e:obsv_lie}) is recursively defined as follows. 
\begin{subequations}~\label{e:lie_def}
\begin{alignat}{2}
    \left( \mathcal L^0 \mathbf h \right) ({\mathbf x})  &= \mathbf h ({\mathbf x}) \label{e:lie_def_0} \\
    \left( \mathcal L_{\mathbf f^k \dots \mathbf f^1}^k\mathbf h \right) ({\mathbf x})  &= \left( \mathcal L_{\mathbf f^k}\left(\mathcal L^{k-1}_{\mathbf f^{k-1} \dots \mathbf f^1}\mathbf h\right) \right) ({\mathbf x}) \label{e:lie_def_k} 
\end{alignat}
\end{subequations}

As can be seen, $\left( \mathcal L_{\mathbf f^k \dots \mathbf f^1}^k\mathbf h \right) (\mathbf x) $ is a function of $\mathbf x \in \mathcal{M} $. To simplify the notation, we denote it as $\mathbf g (\mathbf x)$. Then, the notation $\nabla_{\mathbf{x}} \left( \mathbf g (\mathbf{x}) \right) (\bar{\mathbf{x}}) $ in (\ref{e:obsv_lie}) denotes the gradient of $\mathbf{g}(\mathbf{x}): \mathcal{M} \mapsto \mathbb{R}^m$ w.r.t. $\mathbf{x}$ evaluated at $\bar{\mathbf{x}}$. If $\mathcal{M} = \mathbb{R}^n$, this becomes the well known partial derivation.

\begin{equation}~\label{e:partial_derivation_Rn}
    \nabla_{\mathbf{x}} \left( \mathbf g (\mathbf{x}) \right) (\bar{\mathbf{x}}) = \left. \dfrac{\partial \mathbf g ({\mathbf{x}}) }{\partial \mathbf x } \right|_{\mathbf x = \bar{\mathbf{x}}}\in \mathbb{R}^{m \times n}
\end{equation}

For a generic smooth manifold $\mathcal{M}$ of dimension $n$, a local coordinate chart around $\bar{\mathbf x}$ on $\mathcal{M}$ needs to be found (see Fig.~\ref{fig:coordinate_chart}). Assume the local coordinate chart is $(U, \boldsymbol{\varphi})$, where $U$ is an open subset of $\mathcal{M}$ containing the point $\bar{\mathbf x}$ and $\boldsymbol{\varphi}: U \mapsto \widehat{U}$ is a homeomorphism from $U$ to an open subset $\widehat{U} = \boldsymbol{\varphi}(U) \subset \mathbb{R}^n$. Then the gradient of $\mathbf g (\mathbf{x}) $ is computed as~\cite{hitchin2012differentiable}:
\begin{figure}[h]
	\centering
	\includegraphics[width=0.8\columnwidth]{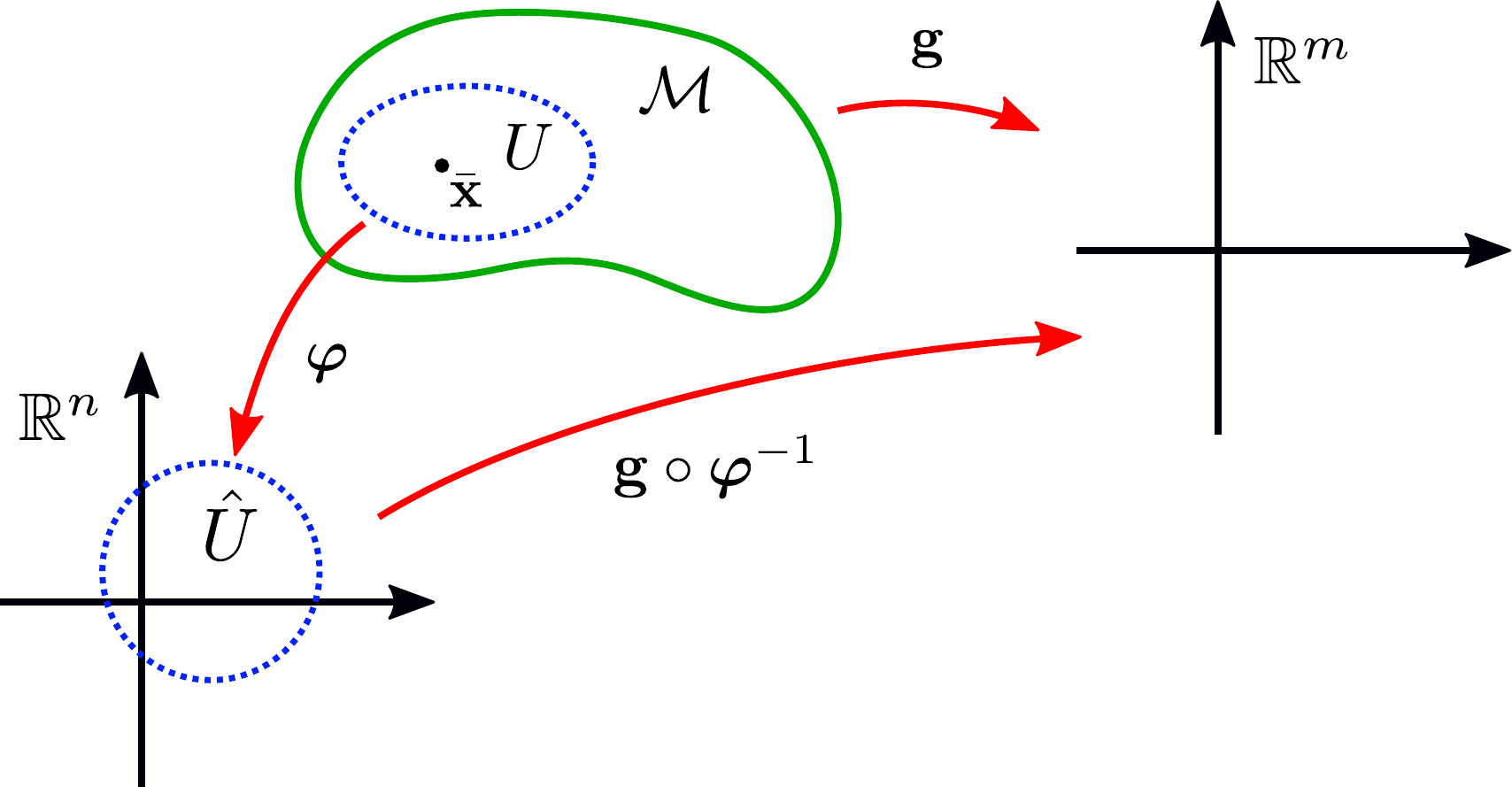}
	\caption{Illustration of a coordinate chart $(U, \boldsymbol{\varphi})$ on manifold $\mathcal M$.
		\label{fig:coordinate_chart}}
	\vspace{-0cm}
\end{figure}
\begin{equation}~\label{e:partial_der_manifold}
    \nabla_{\mathbf{x}} \left( \mathbf g (\mathbf{x}) \right) (\bar{\mathbf{x}}) = \left. \dfrac{\partial \mathbf g \left( \boldsymbol{\varphi}^{-1} \left({\boldsymbol{\theta}} \right) \right) }{\partial \boldsymbol{\theta} } \right|_{\boldsymbol{\theta} = \boldsymbol{\varphi}(\bar{\mathbf x})}
\end{equation}

In particular, if $\mathcal{M} = SO(3)$, one such local coordinate chart is $(U, \boldsymbol{\varphi})$, where $U = \left\{ \mathbf x \in SO(3) | \| \text{Log} (\bar{\mathbf{x}}^{-1} \mathbf x )\|_2 < \pi \right\}$ and $\boldsymbol{\varphi}(\mathbf x) = \text{Log}(\bar{\mathbf x}^{-1}\mathbf{x}) \in \mathbb{R}^3 $. ``Log" is the logarithmic function that maps a rotation matrix to its rotation vector~\cite{murray1994mathematical, lynch2017modern} and its inverse map is denoted as ``Exp". As a result, the gradient of $\mathbf g (\mathbf x): SO(3) \mapsto \mathbb{R}^m $ is

\begin{equation}~\label{e:partial_der_SO3}
    \nabla_{\mathbf{x}} \left( \mathbf g (\mathbf{x}) \right) (\bar{\mathbf{x}}) = \left. \dfrac{\partial \mathbf g \left( \bar{\mathbf x} \cdot \text{Exp} ( {\boldsymbol{\theta}}) \right) }{\partial \boldsymbol{\theta} } \right|_{\boldsymbol{\theta} = \mathbf 0} 
\end{equation}

It should be noted that the coordinate charts are not unique and a variety of alternatives could also be used. Two apparent alternatives are 1) using different (minimal) rotation parameterizations (e.g., Euler angles) instead of rotation vector as the local coordinates $\boldsymbol{\theta}$; and 2) off-centering the map $\boldsymbol{\varphi}(\mathbf x)$ from $\bar{\mathbf x}$. The first alternative will lead to the same partial differentiation as (\ref{e:partial_der_SO3}) while the second one produces an additional Jacobian matrix. Nevertheless, the rank of the observability matrix (\ref{e:obsv_lie}) are identical in all cases, hence different coordinate chart does not affect the system observability. In fact, the ``Exp/Log" map used in (\ref{e:partial_der_SO3}) lends itself a simple observability matrix for rank analysis, as will be seen in the following subsections. This parameterization is also widely used in $SO(3)$ optimizations~\cite{kummerle2011g, hertzberg2013integrating, ceres-solver}. 

\subsection{Observability of Input formulation}
In this section, we analyze the observability of the input formulation (\ref{e:kine_model_input}) based on the observability rank condition outlined above. Similar (or even more complicated) observability analysis have been explored before in visual-inertial navigation \cite{weiss2012real}, camera-IMU extrinsic calibration \cite{mirzaei2008kalman, kelly2011visual}, and parameter identification \cite{wuest2019online}. In all these work, the quaternion $\mathbf q$ is used to represent robots' attitude and is treated as a flat vector in $\mathbb{R}^4$ to compute the Lie derivatives as in (\ref{e:1st_lie_def_Rn}) and partial derivations as in (\ref{e:partial_derivation_Rn}). This method suffers from several drawbacks: 1) In order to compensate for the over-parameterization when using quaternion, an additional virtual measurement $\mathbf q^T \mathbf q = 1$ must be used to prevent rank deficiency; 2) The over-parameterization in state vector and additional virtual measurement lead to a rather complicated observability matrix, the resultant rank analysis is also rather difficult. Applying such analysis to higher order systems (e.g., state formulation (\ref{e:kine_model_state})) is cumbersome; 3) Viewing quaternion $\mathbf q$ as a vector in $\mathbb{R}^4$ undermines the structure of the unit quaternion. More concretely, the locally weakly observability based on $\mathbf q \in \mathbb{R}^4$ implies a neighborhood where the distance is defined as $d(\mathbf q_1, \mathbf q_2) = \| \mathbf q_1 - \mathbf q_2 \|_2$, which is not a natural metric for $SO(3)$ (i.e., it does not form $SO(3)$ into a Riemannian manifold). 

We show that all these drawbacks can be avoided if we develop the Lie derivatives directly on manifold as in (\ref{e:1st_lie_def}) and (\ref{e:lie_def}) and the associated partial derivations as in (\ref{e:partial_der_manifold}). The resultant observaiblity analysis is 1) Simple, it does not need to over-parameterize the state vector hence requiring no virtual measurement. The resultant observability matrix is easy for rank analysis; 2) Scalable, the simple and compact observability matrix allows the analysis to be conducted for more complicated and higher order systems such as the state formulation (\ref{e:kine_model_state}); and 3) Natural, since the observability analysis is conducted directly on manifold, the neighborhood it implies is based on the natural metric that forms $\mathcal M$ into a Riemannian manifold. For $SO(3)$, the metric is $d(\mathbf R_1, \mathbf R_2) = \| \text{Log} \left( \mathbf R_2^T \mathbf R_1 \right) \|_2 $, which is a geodesic on $SO(3)$~\cite{bullo1995proportional}. 

To analyze the observability of the input formulation (\ref{e:kine_model_input}), we ignore the noise $\mathbf w$ and $\mathbf n$ and rewrite it as an input-linear system as below:

\begin{equation}\nonumber
\begin{aligned}
\underbrace{\begin{bmatrix}  \dot{\mathbf p} \\  \dot{\mathbf v} \\ \dot{\mathbf R} \\  \dot{\mathbf c} \\ \dot{\mathbf b}_{\mathbf a}\\ \dot{\mathbf b}_{\bm \omega} \end{bmatrix}}_{\dot{\mathbf x}}
&= \underbrace{\begin{bmatrix} \mathbf v \\ \mathbf g - \mathbf R \mathbf b_{\mathbf a} \\ -\mathbf R \lfloor \mathbf b_{\bm \omega}\rfloor \\ \mathbf 0 \\ \mathbf 0\\ \mathbf 0 \end{bmatrix}}_{\mathbf f_0 (\mathbf x)} 
\! +\! \sum_{i=1}^{3}\underbrace{ \begin{bmatrix} \mathbf 0 \\ \mathbf 0\\ \mathbf R\lfloor \mathbf {e}_i\rfloor \\ \mathbf 0 \\ \mathbf 0\\ \mathbf 0 \end{bmatrix}}_{\mathbf f_i (\mathbf x)} \omega_{m_i} 
\! +\! \sum_{i=1}^{3} \underbrace{\begin{bmatrix} \mathbf 0 \\ \mathbf R \mathbf e_i\\ \mathbf 0 \\ \mathbf 0 \\ \mathbf 0\\ \mathbf 0 \end{bmatrix}}_{\mathbf f_{i+3}(\mathbf x)} {a}_{m_i} \\
\mathbf h_{1}\left(\mathbf x\right) &=\mathbf p + \mathbf R \mathbf c, \quad \mathbf h_{2}\left(\mathbf x\right) = \mathbf R^T \mathbf e_1 
\end{aligned}
\end{equation}

1) \textit{Zero-order Lie derivatives ($\mathcal{L}^0 \mathbf h_{1}, \mathcal{L}^0 \mathbf h_{2}$)}: By the definition in (\ref{e:lie_def_0}), we have
\begin{equation}\nonumber
\begin{aligned}
\left( \mathcal{L}^0 \mathbf h_{1} \right) (\mathbf x) &= \mathbf p + \mathbf R \mathbf c \\
\left( \mathcal{L}^0 \mathbf h_{2} \right) (\mathbf x) &=\mathbf R^T \mathbf e_1 
\end{aligned}
\end{equation}
and the partial derivation based on (\ref{e:partial_der_SO3})
\begin{equation}\nonumber
\begin{aligned}
\nabla_{\mathbf x} \left( (\mathcal{L}^0 \mathbf h_{1}) (\mathbf x) \right) (\bar{\mathbf x})  &=\begin{bmatrix} \mathbf I & \mathbf 0 & -\bar{\mathbf R} \lfloor \bar{\mathbf c} \rfloor & \bar{\mathbf R} & \mathbf 0 & \mathbf 0 \end{bmatrix}  \\
\nabla_{\mathbf x} \left( (\mathcal{L}^0 \mathbf h_{2}) (\mathbf x) \right) (\bar{\mathbf x}) &=\begin{bmatrix} \mathbf 0 & \mathbf 0 & \lfloor \bar{\mathbf R}^T \mathbf e_1\rfloor & \mathbf 0 & \mathbf 0 & \mathbf 0 \end{bmatrix}
\end{aligned}
\end{equation}

2) \textit{First-order Lie derivatives ($\mathcal{L}^1_{\mathbf f_0} \mathbf h_{1}, \mathcal{L}^1_{\mathbf f_1} \mathbf h_{1}, \mathcal{L}^1_{\mathbf f_2} \mathbf h_{1}, \mathcal{L}^1_{\mathbf f_0} \mathbf h_{2}$)}: From (\ref{e:obsv_lie_0_order}), we have
\begin{equation}\nonumber
\begin{aligned}
& \left( \mathcal{L}^1_{\mathbf f_0} \mathbf h_{1} \right) (\mathbf x) = \dot{\mathbf h}_1(\mathbf x), \quad \text{subject to} \ \dot{\mathbf x} = \mathbf f_0(\mathbf x) \\
&=  \mathbf v + \dot{\mathbf R} \mathbf c , \quad  \text{subject to}\  \dot{\mathbf R} = -\mathbf R \lfloor \mathbf b_{\bm \omega} \rfloor \\
&= \mathbf v - \mathbf{R} \lfloor \mathbf b_{\bm \omega} \rfloor \mathbf c 
\end{aligned} 
\end{equation}

Hence, the first order Lie derivatives are 
\begin{subequations}\nonumber
\begin{alignat}{2}
\left( \mathcal{L}^1_{\mathbf f_0} \mathbf h_{1} \right) (\mathbf x) &= \mathbf v - \mathbf{R} \lfloor \mathbf b_{\bm \omega} \rfloor \mathbf c \\
\left( \mathcal{L}^1_{\mathbf f_i} \mathbf h_{1} \right) (\mathbf x) &=  \mathbf{R} \lfloor \mathbf e_i\rfloor \mathbf c, \ i = 1,2,3 \\
\left( \mathcal{L}^1_{\mathbf f_0} \mathbf h_{2} \right) (\mathbf x) &= \lfloor \mathbf b_{\bm \omega}\rfloor \mathbf{R}^T \mathbf e_1
\end{alignat} 
\end{subequations}
and the associated partial derivations are
\begin{subequations}\nonumber
\begin{alignat}{2}
\nabla_{\mathbf x} \left( (\mathcal{L}^1_{\mathbf f_0} \mathbf h_{1}) (\mathbf x) \right) (\bar{\mathbf x})  &=\begin{bmatrix} \mathbf 0 & \mathbf I & \bullet & \bullet & \mathbf 0 & \bullet \end{bmatrix}  \\
\nabla_{\mathbf x} \left( (\mathcal{L}^1_{\mathbf f_i} \mathbf h_{1}) (\mathbf x) \right) (\bar{\mathbf x})  &=\begin{bmatrix} \mathbf 0 & \mathbf 0 & \bullet & \bar{\mathbf R} \lfloor \mathbf e_i\rfloor  & \mathbf 0 & \mathbf 0 \end{bmatrix}  \\
\nabla_{\mathbf x} \left( (\mathcal{L}^1_{\mathbf f_0} \mathbf h_{2}) (\mathbf x) \right) (\bar{\mathbf x}) &=\begin{bmatrix} \mathbf 0 & \mathbf 0 & \bullet & \mathbf 0 & \mathbf 0 & - \lfloor \bar{\boldsymbol{\beta}}\rfloor \end{bmatrix}
\end{alignat}
\end{subequations}
where $\bar{\boldsymbol{\beta}} = \bar{\mathbf{R}}^T \mathbf e_1$ and those $\bullet$ are elements that do not affect the observability analysis but can be calculated easily when required. 

3) \textit{Second-order Lie derivatives ($\mathcal{L}^2_{\mathbf f_0 \mathbf f_0} \mathbf h_{1}$, $\mathcal{L}^2_{\mathbf f_{i+3} \mathbf f_0 } \mathbf h_{1}$, $\mathcal{L}^2_{\mathbf f_i \mathbf f_0 } \mathbf h_{2}$, $i = 1,2,3$)}: Finally, we calculate the second order Lie derivatives based on the recursive definition in (\ref{e:lie_def_k}):
\begin{equation}\nonumber
\begin{aligned}
\left( \mathcal{L}^2_{\mathbf f_0 \mathbf f_0} \mathbf h_{1} \right) (\mathbf x) &= \mathcal{L}^1_{ \mathbf f_0}  \left( \mathcal{L}^1_{\mathbf f_0} \mathbf h_{1} \right) (\mathbf x) = \frac{d \left( \left( \mathcal{L}^1_{ \mathbf f_0} \mathbf h_{1}\right) (\mathbf x) \right)}{dt} \\
&= \dot{\mathbf v}- \dot{\mathbf R} \lfloor \mathbf b_{\bm \omega} \rfloor \mathbf c, \quad \text{subject to} \ \dot{\mathbf x} = \mathbf f_0(\mathbf x) \\
&= \mathbf g - \mathbf R \mathbf b_{\mathbf a} + \mathbf{R} \lfloor \mathbf b_{\bm \omega} \rfloor ^2 \mathbf c
\end{aligned} 
\end{equation}
Hence,
\begin{subequations}\nonumber
\begin{alignat}{2}
\left( \mathcal{L}^2_{\mathbf f_0 \mathbf f_0} \mathbf h_{1} \right) (\mathbf x) &= \mathbf g - \mathbf R \mathbf b_{\mathbf a} + \mathbf{R} \lfloor \mathbf b_{\bm \omega}\rfloor^2 \mathbf c \\
\left( \mathcal{L}^2_{\mathbf f_{i+3} \mathbf f_0} \mathbf h_{1} \right) (\mathbf x) &= \mathbf R \mathbf e_i,\  i = 1,2,3 \\
\left( \mathcal{L}^2_{\mathbf f_{i} \mathbf f_0} \mathbf h_{2} \right) (\mathbf x) &= \lfloor \mathbf b_{\bm \omega}\rfloor \lfloor \mathbf{R}^T \mathbf e_1 \rfloor \mathbf e_i,  \ i = 1,2,3
\end{alignat}
\end{subequations}
and the associated partial derivations can be obtained.

Putting together all the partial derivations and replacing $\bar{\mathbf x}$ with ${\mathbf x}$ for the sake of notation simplicity, we obtain the following observability matrix:
\begin{equation}
\begin{aligned}\notag
\mathcal{O}_I \!\!=\!\! \begin{bmatrix}
\nabla_{\mathbf x} \mathcal{L}^0 \mathbf h_{1} \\
\nabla_{\mathbf x} \mathcal{L}^1_{\mathbf f_{0}} \mathbf h_{1} \\
\nabla_{\mathbf x} \mathcal{L}^1_{\mathbf f_{1}} \mathbf h_{1} \\
\nabla_{\mathbf x} \mathcal{L}^1_{\mathbf f_{2}} \mathbf h_{1} \\
\nabla_{\mathbf x} \mathcal{L}^2_{\mathbf f_{0} \mathbf f_{0}} \mathbf h_{1} \\
\nabla_{\mathbf x} \mathcal{L}^2_{\mathbf f_{4} \mathbf f_{0}} \mathbf h_{1} \\
\nabla_{\mathbf x} \mathcal{L}^2_{\mathbf f_{5} \mathbf f_{0}} \mathbf h_{1} \\
\nabla_{\mathbf x} \mathcal{L}^1_{\mathbf f_{0}} \mathbf h_{2} \\
\nabla_{\mathbf x} \mathcal{L}^2_{\mathbf f_{1} \mathbf f_{0}} \mathbf h_{2} \\
\nabla_{\mathbf x} \mathcal{L}^2_{\mathbf f_{2} \mathbf f_{0}} \mathbf h_{2}
\end{bmatrix} \!\!=\!\! \begin{bmatrix}\setlength{\arraycolsep}{0.2pt}
\mathbf I & \mathbf 0 & \bullet & \bullet &\mathbf 0 &\mathbf 0 \\
\mathbf 0 & \mathbf I\!\! & \bullet & \bullet &\mathbf 0 & \bullet \\
\mathbf 0 & \mathbf 0 & \bullet & \!\!\!{\mathbf R}\lfloor \mathbf e_1\rfloor\!\! & \mathbf 0 & \mathbf 0\\
\mathbf 0 & \mathbf 0 & \bullet & \!\!\!{\mathbf R}\lfloor \mathbf e_2\rfloor\!\! & \mathbf 0 & \mathbf 0\\
\mathbf 0 & \mathbf 0 & \bullet & \bullet & \!\!\!-{\mathbf R}\!\! & \bullet \\
\mathbf 0 & \mathbf 0 & \!\!\!-{\mathbf R}\lfloor \mathbf e_1\rfloor\!\!\! & \mathbf 0 & \mathbf 0 & \mathbf 0\\
\mathbf 0 & \mathbf 0 & \!\!\!-{\mathbf R}\lfloor \mathbf e_2\rfloor\!\!\! & \mathbf 0 & \mathbf 0 & \mathbf 0\\
\mathbf 0 & \mathbf 0 & \bullet & \mathbf 0 &\mathbf 0 &\!\!-\lfloor{\bm \beta}\rfloor\\
{\mathbf 0} & {\mathbf 0} & \bullet & {\mathbf 0} & {\mathbf 0} & \!\!\!-\lfloor\lfloor{\bm \beta}\rfloor\mathbf e_1\rfloor\\
{\mathbf 0} & {\mathbf 0} & \bullet & {\mathbf 0} & {\mathbf 0} & \!\!\!-\lfloor\lfloor{\bm \beta}\rfloor\mathbf e_2\rfloor
\end{bmatrix}
\end{aligned}
\end{equation}
which apparently has full column rank since $\lfloor \mathbf e_1 \rfloor \mathbf v = \mathbf 0$ and $\lfloor \mathbf e_2 \rfloor \mathbf v = \mathbf 0$ cannot hold simultaneously for any nonzero vector $\mathbf v$. Therefore, the input formulation (\ref{e:kine_model_input}) is observable. Moreover, It is obviously that when compared with the observability analysis using quaternion parameterization~\cite{mirzaei2008kalman, kelly2011visual}, the observability matrix $\mathcal{O}_I$ is much simpler and its rank is immediately available.
\begin{figure*}[b]
\hrulefill
\begin{align}~\label{e:obsv_stat_simplify}
\mathcal{O}_{S}&=\left[\begin{array}{c}
    \multirow{3}*{${}^0_{n_1}\mathcal G^{\mathbf h_1}_{\mathbf x}$}\\ \\ \\
   [1mm]\hdashline[1pt/1pt] \\
   [-3mm] \multirow{2}*{${}^0_{n_2}\mathcal G^{\mathbf h_2}_{\mathbf x}$}\\ \\
   [1mm]\hdashline[1pt/1pt] \\
   [-3mm] \multirow{2}*{${}^0_{n_a}\mathcal G^{\mathbf h_3}_{\mathbf x}$}\\ \\
   [1mm]\hdashline[1pt/1pt] \\
   [-3mm] \multirow{2}*{${}^0_{n_{\omega}}\mathcal G^{\mathbf h_4}_{\mathbf x}$}\\ \\
\end{array}\right]
= \underbrace{\left[\begin{array}{cccccc;{1pt/1pt}cc;{1pt/1pt}ccc}
\mathbf I_{3} & \mathbf 0 & \bullet & \bullet &\mathbf 0 &\mathbf 0 & \mathbf 0 & \mathbf 0 & \mathbf 0 & \mathbf 0 \\
\mathbf 0 & \mathbf I_{3} & \bullet & \bullet &\mathbf 0 &\mathbf 0 &\mathbf 0 & \mathbf 0 & \mathbf 0 & \bullet \\
   {\mathbf 0} & {\mathbf 0} & {}^2_{n_1}\mathcal G^{\mathbf h_1}_{\bm \theta} & {}^2_{n_1}\mathcal G^{\mathbf h_1}_{\mathbf c} & \mathbf 0 &\mathbf 0  & {}^2_{n_1}\mathcal G^{\mathbf h_1}_{\mathbf a} & \bullet & {}^2_{n_1}\mathcal G^{\mathbf h_1}_{\bm \omega} & \bullet\\
   [1mm]\hdashline[1pt/1pt]&&&&&&&&& \\
   [-3mm]\mathbf 0 & \mathbf 0 & \lfloor \bm \beta\rfloor & \mathbf 0 &\mathbf 0 &\mathbf 0 &\mathbf 0 &\mathbf 0 &\mathbf 0 &\mathbf 0 \\
   {\mathbf 0} & {\mathbf 0} & {}^1_{n_2}\mathcal G^{\mathbf h_2}_{\bm \theta} & {\mathbf 0} &\mathbf 0 &\mathbf 0 &\mathbf 0 & {\mathbf 0} & {}^1_{n_2}\mathcal G^{\mathbf h_2}_{\bm \omega} & \bullet \\
   [1mm]\hdashline[1pt/1pt]&&&&&&&&& \\
   [-3mm] \mathbf 0 & \mathbf 0 & \mathbf 0 & \mathbf 0 & \mathbf I_3 & \mathbf 0 & \mathbf I_3 & \mathbf 0 & \mathbf 0 & \mathbf 0 \\
      \mathbf 0 & \mathbf 0 & \mathbf 0 & \mathbf 0 & \mathbf 0 & \mathbf 0 & \mathbf 0 & \mathbf I_{3n_a-3} & \mathbf 0 & \mathbf 0\\
   [1mm]\hdashline[1pt/1pt]&&&&&&&&& \\
   [-3mm] \mathbf 0 & \mathbf 0 & \mathbf 0 & \mathbf 0 & \mathbf 0 & \mathbf I_3 & \mathbf 0 & \mathbf 0 & \mathbf I_3 & \mathbf 0 \\
   \mathbf 0 & \mathbf 0 & \mathbf 0 & \mathbf 0 & \mathbf 0 & \mathbf 0   & \mathbf 0 & \mathbf 0 & \mathbf 0 & \mathbf I_{3n_{\omega}-3}\\
\end{array}\right]}_{\bm{\Psi}}
\underbrace{\left[\begin{array}{ccc}
   \mathbf{I}_{18} & \mathbf{0} & \mathbf{0} \\
   [1mm] \hdashline[1pt/1pt]&& \\
   [-3mm] \mathbf{0} & \mathcal{O}_{\bm \gamma_{\mathbf a}} & \mathbf{0} \\
   [1mm] \hdashline[1pt/1pt]&& \\
   [-3mm] \mathbf{0} & \mathbf{0} & \mathcal{O}_{\bm \gamma_{\bm \omega}}
\end{array}\right]}_{\mathcal O_{\gamma}}
\end{align}
\end{figure*}
\subsection{Observability of State formulation}
In this section, we show that the observability analysis previously developed on manifold is scalable to higher order systems such as the state formulation (\ref{e:kine_model_state}). The system can be rewritten as a more compact form as below
\begin{equation}~\label{e:state_formula_model_detail}
    \begin{aligned}
    &\dot{\mathbf x}=\mathbf f_{0}\left(\mathbf x\right)\\
    &\mathbf h_{1}\left(\mathbf x\right)= \mathbf p + \mathbf R \mathbf c,\ \mathbf h_3\left(\mathbf x\right)= \mathbf b_{\mathbf a} + \mathbf C_{\bm \gamma_{\mathbf a}} \bm \gamma_{\mathbf a}  \\
    &\mathbf h_2\left(\mathbf x\right)= \mathbf R^T \mathbf e_1,\quad \mathbf h_4\left(\mathbf x\right)=\mathbf b_{\bm \omega} + \mathbf C_{\bm \gamma_{\bm \omega}} \bm \gamma_{\bm \omega}
    \end{aligned}
\end{equation}
where the function $\mathbf f_{0}\left(\mathbf x\right)$ comes from (\ref{e:kine_model_state}) when ignoring all the white noises. This system has no exogenous inputs, so only Lie derivatives $\mathcal L_{\mathbf f_{0} \cdots \mathbf f_0 }^{k}$ are required in determining the system observability. Further denoting
\begin{equation}\nonumber
    \begin{array}{cc}
{}^i_k\mathcal G^{\mathbf h}_{\mathbf x} (\bar{\mathbf x})=\begin{bmatrix} \nabla_{\mathbf x}\left( \left( \mathcal L^i_{\mathbf f_{0} \cdots \mathbf f_{0} } \mathbf h \right) (\mathbf x) \right) (\bar{\mathbf x}) \\ \vdots \\
\nabla_{\mathbf x} \left( \left( \mathcal L^k_{\mathbf f_{0} \cdots \mathbf f_{0}} \mathbf h \right) (\mathbf x) \right) (\bar{\mathbf x}) \end{bmatrix}
    \end{array}
\end{equation}
and denoting $\bar{\mathbf x}$ as $\mathbf x$, we obtain the observability matrix $\mathcal{O}_{S}$ shown in (\ref{e:obsv_stat_simplify}), where $\mathcal O_{\bm \gamma_{\mathbf a}}$ and $\mathcal O_{\bm \gamma_{\mathbf a}}$ is the observability matrix of system $(\mathbf C_{\bm \gamma_{\mathbf a}}, \mathbf A_{\bm \gamma_{\mathbf a}}, \mathbf B_{\bm \gamma_{\mathbf a}})$ of dimension $n_a$ and $(\mathbf C_{\bm \gamma_{\bm \omega}}, \mathbf A_{\bm \gamma_{\bm \omega}}, \mathbf B_{\bm \gamma_{\bm \omega}})$ of dimension $n_{\omega}$, respectively. Since they are observable, $\mathcal{O}_{\bm \gamma}$ has full rank and rank($\mathcal O_S$) = rank($\boldsymbol{\Psi}$), implying that observability of the system (\ref{e:kine_model_state}) is independent from the statistical model as long as it is observable. This fundamentally enables us to use generic higher order statistical models for $\mathbf a$ and $\boldsymbol{\omega}$ to better characterize the motion, instead of the constant speed model used in~\cite{jones2011visual, chiuso2002structure, davison2003real,guo2014efficient}. Furthermore, we have the following Lemma:

\begin{lemma}~\label{lemma:obs}
The matrix $\mathcal O_S$ in (\ref{e:obsv_stat_simplify}) has full column rank if and only if the matrix
\begin{equation}
    \begin{aligned}\label{e:O_state_E}
        \left[
        \begin{array}{ccc;{1pt/1pt}c}
        \lfloor \bm \beta\rfloor & \mathbf 0 &\mathbf 0 &\mathbf 0\\
        {}^2_{n_1}\mathcal G^{\mathbf h_1}_{\bm \theta} & {}^2_{n_1}\mathcal G^{\mathbf h_1}_{\mathbf c} & {}^2_{n_1}\mathcal G^{\mathbf h_1}_{\mathbf a} & {}^2_{n_1}\mathcal G^{\mathbf h_1}_{\bm \omega}\\
        [1mm]\hdashline[1pt/1pt]&&&\\
        [-3mm] {\mathbf 0} & {\mathbf 0} & \mathbf 0 & {}^1_{n_2}\mathcal G^{\mathbf h_2}_{\bm \omega}
           \end{array}
           \right] = 
        \left[\begin{array}{c;{1pt/1pt}c}
           \mathcal{O}_{S_1} & \bullet \\
           [1mm]\hdashline[1pt/1pt]&\\
           [-3mm] \mathbf 0 & \mathcal{O}_{S_2}
        \end{array}\right]
    \end{aligned}
\end{equation}
has full column rank.
\end{lemma}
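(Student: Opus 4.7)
The plan is to reduce the rank of $\mathcal O_S$ to that of the compact matrix in (\ref{e:O_state_E}) through two successive reductions. First, I would exploit the factorization $\mathcal O_S=\bm\Psi\,\mathcal O_\gamma$ displayed in (\ref{e:obsv_stat_simplify}). The right factor $\mathcal O_\gamma$ is block-diagonal with top-left block $\mathbf I_{18}$ and the two subsystem observability matrices $\mathcal O_{\bm\gamma_{\mathbf a}}$ and $\mathcal O_{\bm\gamma_{\bm\omega}}$ as the other diagonal blocks; both have full column rank by the standing observability assumption on $(\mathbf C_{\bm\gamma_{\mathbf a}},\mathbf A_{\bm\gamma_{\mathbf a}})$ and $(\mathbf C_{\bm\gamma_{\bm\omega}},\mathbf A_{\bm\gamma_{\bm\omega}})$, and for the integrator parameterization in (\ref{e:n_int_model}) they are in fact (up to permutation) identity matrices, so that $\mathcal O_\gamma$ is square and invertible. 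Consequently $\mathcal O_S$ has full column rank if and only if $\bm\Psi$ does, reducing the problem to an analysis of $\bm\Psi$.

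Second, I would read off $\ker\bm\Psi$ by peeling away the trivial blocks. Writing a candidate null vector with components $v_{\mathbf p},v_{\mathbf v},v_{\bm\theta},v_{\mathbf c},v_{\mathbf b_{\mathbf a}},v_{\mathbf b_{\bm\omega}},v_{\mathbf a},v_{\mathrm{rest},\mathbf a},v_{\bm\omega},v_{\mathrm{rest},\bm\omega}$ (the last two in each group being the higher-order entries of $\bm\gamma_{\mathbf a}$ and $\bm\gamma_{\bm\omega}$ beyond $\mathbf a$ and $\bm\omega$), the identity sub-blocks in the bottom four row blocks immediately force $v_{\mathrm{rest},\mathbf a}=\mathbf 0$, $v_{\mathrm{rest},\bm\omega}=\mathbf 0$, $v_{\mathbf b_{\mathbf a}}=-v_{\mathbf a}$ and $v_{\mathbf b_{\bm\omega}}=-v_{\bm\omega}$, while the identity blocks in row blocks 1 and 2 express $v_{\mathbf p}$ and $v_{\mathbf v}$ entirely in terms of $v_{\bm\theta}$ and $v_{\mathbf c}$ (using $v_{\mathrm{rest},\bm\omega}=\mathbf 0$ to kill the stray entry in row block 2). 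The remaining nontrivial constraints on the free unknowns $(v_{\bm\theta},v_{\mathbf c},v_{\mathbf a},v_{\bm\omega})$ therefore come only from row blocks 3, 4 and 5.

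The step I expect to be the main obstacle is matching row block 5 to the form in (\ref{e:O_state_E}), because the entry ${}^1_{n_2}\mathcal G^{\mathbf h_2}_{\bm\theta}$ present in $\bm\Psi$ appears as $\mathbf 0$ in the target matrix. The key observation is that $\mathbf h_2(\mathbf x)=\mathbf R^{T}\mathbf e_1=\bm\beta$, so every iterated Lie derivative $\mathcal L^k_{\mathbf f_0\cdots\mathbf f_0}\mathbf h_2$ depends on the attitude $\mathbf R$ only through $\bm\beta$ (the drift $\mathbf f_0$ in (\ref{e:state_formula_model_detail}) involves $\mathbf R$ in no other way). Combined with $\partial\bm\beta/\partial\bm\theta|_{\bm\theta=\mathbf 0}=\lfloor\bar{\bm\beta}\rfloor$ from the local chart (\ref{e:partial_der_SO3}), the chain rule gives that every row of ${}^1_{n_2}\mathcal G^{\mathbf h_2}_{\bm\theta}$ factors as $\mathbf M_k\,\lfloor\bar{\bm\beta}\rfloor$ for some matrix $\mathbf M_k$. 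Because row block 4 equals $\lfloor\bar{\bm\beta}\rfloor$ in the $\bm\theta$-column and $\mathbf 0$ in every other column, subtracting the appropriate left-multiples of row block 4 from the corresponding rows of row block 5 eliminates the $\bm\theta$-column entries of row block 5 without touching any other column.

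After this reduction the effective system on $(v_{\bm\theta},v_{\mathbf c},v_{\mathbf a},v_{\bm\omega})$ is precisely the matrix in (\ref{e:O_state_E}), and its block upper-triangular decomposition with diagonal blocks $\mathcal O_{S_1}$ and $\mathcal O_{S_2}$ is obtained by grouping $(v_{\bm\theta},v_{\mathbf c},v_{\mathbf a})$ first and $v_{\bm\omega}$ last. Since every reduction step preserves $\ker\bm\Psi$ and its column count, $\mathcal O_S$ has full column rank if and only if this final matrix does, which is the claimed equivalence.
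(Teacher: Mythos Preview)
Your proposal is correct and follows essentially the same route as the paper's proof: reduce $\mathcal O_S$ to $\bm\Psi$ via the factorization with the full-column-rank $\mathcal O_\gamma$, strip off the trivial identity blocks, and eliminate ${}^1_{n_2}\mathcal G^{\mathbf h_2}_{\bm\theta}$ using the fact that each of its rows has the form $\mathbf K_k\lfloor\bm\beta\rfloor$ and hence is a left multiple of row block~4. The paper phrases the last two steps as explicit block row/column operations (Gaussian elimination plus a row swap), while you phrase them as a kernel computation, but the content is identical. One small slip: your parenthetical ``the drift $\mathbf f_0$ \ldots\ involves $\mathbf R$ in no other way'' is not literally true (the $\dot{\mathbf v}$ component contains $\mathbf R$), but this is irrelevant because the Lie derivatives of $\mathbf h_2$ only ever propagate through $\dot{\mathbf R}=\mathbf R\lfloor\bm\omega\rfloor$, so the conclusion that they depend on $\mathbf R$ solely through $\bm\beta$ stands.
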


\begin{proof}Simplifying $\boldsymbol{\Psi}$ in (\ref{e:obsv_stat_simplify}) by the following 3 steps: 1) Since $\mathbf h_2 = {\mathbf R}^T \mathbf e_1 = {\bm{\beta}}$, it can be shown that $\nabla_{\bm{\theta}} \mathcal L^k_{\mathbf f_{0} \cdots \mathbf f_{0} } \mathbf h_2$ is in the form of $\mathbf K_k \lfloor {\boldsymbol{\beta}} \rfloor $, hence the sub-matrix ${}^1_n\mathcal G^{\mathbf h_2}_{\bm \theta}$ in $\bm \Psi$ can be completely eliminated by the preceding row $\lfloor {\boldsymbol{\beta}}\rfloor $; 2) Eliminate all $\bullet$ elements and the two $\mathbf I_3$ at (6, 7) and (8, 9) using the identity matrix in the preceding row or column; 3) Exchange row 3 and 4. Since matrix (block) row transformations do not change its column rank, the lemma statement holds.
\end{proof}

{Lemma} \ref{lemma:obs} implies that observability of system (\ref{e:kine_model_state}) generally depends on the initial attitude $\mathbf R$, parameter $\mathbf c$, acceleration $\mathbf a$ and its higher derivatives $\dot{\mathbf a}, \ddot{\mathbf a}, \cdots$, and angular velocity $\boldsymbol{\omega}$ and its higher derivatives $\dot{\boldsymbol{\omega}}, \ddot{\boldsymbol{\omega}}, \cdots$. While this looks more conservative than the input formulation where the observability matrix $\mathcal{O}_I$ has full column rank for all initial states ${\mathbf x}$, we should notice that $\mathcal O_I$ is obtained by assuming the input $\mathbf a_m$ and $\bm \omega_m$ can take arbitrary values in its domain \cite{hermann1977nonlinear}. This corresponds to choosing a value for each of $\mathbf a, \boldsymbol{\omega}$ and their higher order derivatives in the state formulation such that the first two block columns of (\ref{e:O_state_E}) have full column rank, which is true with trivial derivations. This implies that the kinematics states in the state formulation could be observed by properly driving the system with $\mathbf a$, $\bm \omega$ and their higher order derivatives.  Furthermore, we could prove that the extended states $\boldsymbol{\gamma}_{\mathbf a}$ and $\boldsymbol{\gamma}_{\boldsymbol \omega}$ are also observable by showing the full state formulation (\ref{e:kine_model_state}) is locally weakly observable (i.e., $\mathcal O_S$ in (\ref{e:obsv_stat_simplify}) has full column rank) if the system has sufficient excitation. More precisely, we introduce the following definition to characterize the set where the system has insufficient excitation.  

\begin{definition}~\label{thin}
A subset $U \subset \mathbb{R}^n$ is said to be {\it thin} if it contains no open subset $V \subset \mathbb{R}^n$. 
\end{definition}

An equivalent statement for Definition \ref{thin} is that for any point $\mathbf x \in U$, any open neighborhood $V \subset \mathbb{R}^n$ of $\mathbf x$ contains points not in $U$. Intuitively, a thin set does not span in every direction of $\mathbb R^n$. Fig.~\ref{fig:thin_examp} shows a few examples.

\begin{figure}[h]
	\centering
	\includegraphics[width=0.9\columnwidth]{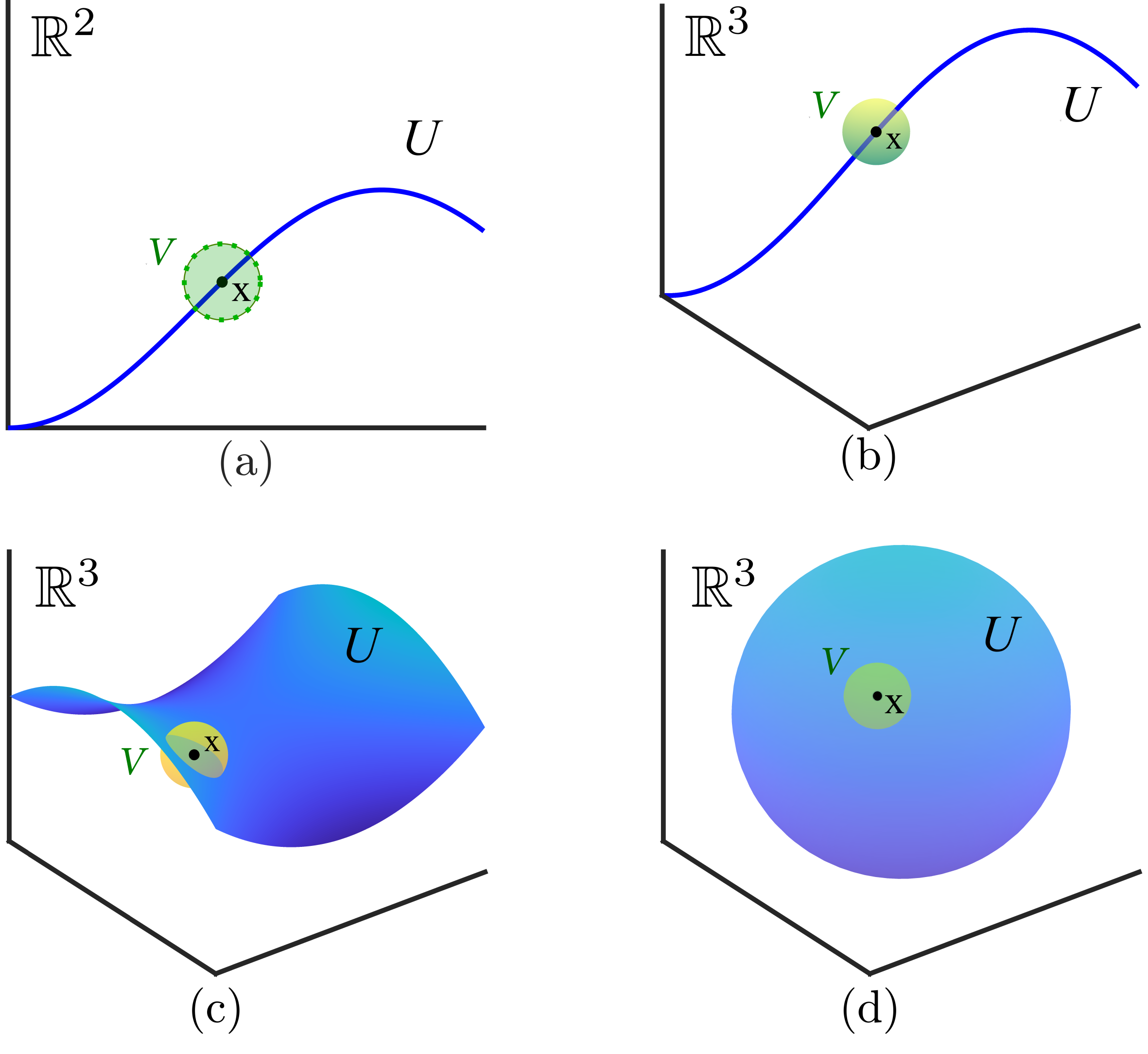}
	\caption{Examples for {\it thin} subsets: A curve in $\mathbb R^2$ (a), a curve in $\mathbb R^3$ (b), and a surface in $\mathbb R^3$ (c) are {\it thin} since they cannot contain any open subset $V$ in the respective $\mathbb R^n$; (d) A ball in $\mathbb R^3$ is {\it not thin} since it contains an open subset (e.g., a smaller ball) $V \subset \mathbb{R}^3$.
		\label{fig:thin_examp}}
	\vspace{-0cm}
\end{figure}

\begin{proposition}\label{proposition_O_S1}
If matrix $\mathcal O_{S_1}$ in (\ref{e:O_state_E}) is rank-deficient, the initial states $\bm \gamma_{\mathbf a}$ and $\bm \gamma_{\bm \omega}$ (or equivalently $\bm{\omega}^{(k)}, \mathbf a^{(k)}$) must lie on a thin subset of $\mathbb{R}^{n_{ a} + n_{ \omega}}$.
\end{proposition}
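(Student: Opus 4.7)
The plan is to recast rank-deficiency of $\mathcal{O}_{S_1}$ as a system of polynomial equations in the coordinates $(\bm\gamma_{\mathbf a},\bm\gamma_{\bm\omega})$ and then appeal to the elementary fact that a proper real algebraic subset of $\mathbb{R}^{n_a+n_\omega}$ cannot contain an open ball and is therefore thin in the sense of Definition \ref{thin}. The substantive part is to exhibit a single choice of $(\bm\gamma_{\mathbf a},\bm\gamma_{\bm\omega})$ at which $\mathcal{O}_{S_1}$ actually has full column rank, so that not all of the defining polynomials vanish identically.

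First I would establish polynomiality. Under the drift $\dot{\mathbf x}=\mathbf f_0(\mathbf x)$ of the state formulation (\ref{e:kine_model_state}), each time derivative of $\mathbf a$ or $\bm\omega$ is linear in $\bm\gamma$, since $\mathbf a^{(k)}=\mathbf C_{\bm\gamma_{\mathbf a}}\mathbf A_{\bm\gamma_{\mathbf a}}^{k}\bm\gamma_{\mathbf a}$ and similarly for $\bm\omega$. Because $\mathbf h_1=\mathbf p+\mathbf R\mathbf c$ is polynomial in $(\mathbf R,\mathbf c)$, the recursion (\ref{e:lie_def_k}) together with the chain rule makes every Lie derivative $\mathcal{L}^{k}_{\mathbf f_0\cdots\mathbf f_0}\mathbf h_1$ polynomial in $(\mathbf R,\mathbf c,\bm\gamma_{\mathbf a},\bm\gamma_{\bm\omega})$, and the partial-derivative formulas (\ref{e:partial_derivation_Rn}) and (\ref{e:partial_der_SO3}) preserve polynomiality. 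With $\mathbf R$ and $\mathbf c$ frozen at a fixed point, each $r\times r$ minor $p_i$ of $\mathcal{O}_{S_1}$ is therefore a polynomial in $(\bm\gamma_{\mathbf a},\bm\gamma_{\bm\omega})$ (where $r$ is the number of columns), and the rank-deficient set
\begin{equation}\nonumber
D=\bigcap_{i}\{(\bm\gamma_{\mathbf a},\bm\gamma_{\bm\omega}):p_i(\bm\gamma_{\mathbf a},\bm\gamma_{\bm\omega})=0\}
\end{equation}
is a real algebraic variety in $\mathbb{R}^{n_a+n_\omega}$.

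The second step, which I expect to be the main obstacle, is to produce a concrete point at which $\mathcal{O}_{S_1}$ is full rank. The idea is to match column-by-column with the input-formulation observability matrix $\mathcal{O}_I$ from the previous subsection: the blocks ${}^2_{n_1}\mathcal{G}^{\mathbf h_1}_{\mathbf a}$ and ${}^2_{n_1}\mathcal{G}^{\mathbf h_1}_{\bm\omega}$ play in the state formulation the role that the input-direction Lie derivatives along $\mathbf f_1,\ldots,\mathbf f_6$ play in the input formulation, once the free inputs $\mathbf a_m,\bm\omega_m$ are replaced by the state values $\mathbf a,\bm\omega$ and their higher derivatives $\mathbf a^{(k)},\bm\omega^{(k)}$. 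Since $\mathcal{O}_I$ was already shown to have full column rank for every $\mathbf x$ and every admissible value of the inputs, tuning $(\mathbf a,\bm\omega,\dot{\mathbf a},\dot{\bm\omega},\ldots)$ appropriately yields a full-rank configuration of the analogous blocks of $\mathcal{O}_{S_1}$. Observability of $(\mathbf C_{\bm\gamma_{\mathbf a}},\mathbf A_{\bm\gamma_{\mathbf a}})$ and $(\mathbf C_{\bm\gamma_{\bm\omega}},\mathbf A_{\bm\gamma_{\bm\omega}})$, which gives full column rank of $\mathcal{O}_{\bm\gamma_{\mathbf a}}$ and $\mathcal{O}_{\bm\gamma_{\bm\omega}}$, then lets me pull these derivative tuples back to concrete $(\bm\gamma_{\mathbf a}^{\ast},\bm\gamma_{\bm\omega}^{\ast})$, witnessing that at least one $p_i$ is not the zero polynomial.

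The conclusion is then standard: if $D$ contained a nonempty open subset $V\subset\mathbb{R}^{n_a+n_\omega}$, every polynomial $p_i$ would vanish on $V$ and hence identically, contradicting Step 2. Therefore $D$ contains no open subset and is thin in the sense of Definition \ref{thin}, so rank-deficiency of $\mathcal{O}_{S_1}$ forces $(\bm\gamma_{\mathbf a},\bm\gamma_{\bm\omega})$ to lie on this thin set. The algebraic-closure part of the argument is routine; the real work is in Step 2, where one must trace through the iterated Lie derivatives carefully enough to see that the realizable derivative tuple $(\mathbf a^{(k)},\bm\omega^{(k)})$ is rich enough to reconstitute the full-column-rank argument originally carried out for $\mathcal{O}_I$.
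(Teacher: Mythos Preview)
Your three-step strategy --- polynomiality of the minors, exhibition of a full-rank witness, and the algebraic-closure argument that a nonzero polynomial cannot vanish on an open set --- is exactly the skeleton of the paper's proof (its Theorem~\ref{def:thin} is precisely your Step~3). Where you diverge is in Step~2, and the heuristic you propose there does not survive contact with the computation. The full column rank of $\mathcal O_I$ for the $\mathbf c$ and $\bm\theta$ blocks is obtained from \emph{rows} coming from input-direction Lie derivatives $\mathcal L^1_{\mathbf f_i}\mathbf h_1$, $i=1,\dots,6$; in the state formulation there are no such rows, only iterated drift derivatives $\mathcal L^k_{\mathbf f_0\cdots\mathbf f_0}\mathbf h_1$. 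The blocks ${}^2_{n_1}\mathcal G^{\mathbf h_1}_{\mathbf a}$ and ${}^2_{n_1}\mathcal G^{\mathbf h_1}_{\bm\omega}$ are \emph{columns}, not rows, so they cannot stand in for the input-direction rows of $\mathcal O_I$ in any direct way, and the ``matching'' you describe conflates row structure with column structure.

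What the paper actually does for Step~2 is compute the iterated drift Lie derivatives explicitly: writing $\bm\Omega=\lfloor\bm\omega\rfloor^2+\lfloor\dot{\bm\omega}\rfloor$ and $\mathbf g_m=\mathbf R^T\mathbf h_1^{(m+2)}$, it derives a recursion showing that after Gaussian elimination the $\mathbf c$-column of $\mathcal O_{S_1}$ reduces to the stack $[\bm\Omega^{(1)};\bm\Omega^{(2)};\ldots]$ and the $\bm\theta$-column to skew matrices $\lfloor\mathbf g_m\rfloor$. A dedicated lemma then shows that $[\bm\Omega^{(i-1)};\bm\Omega^{(j-1)}]$ has full column rank whenever $i\geq2$ is the first order with $\bm\omega^{(i)}\neq\mathbf 0$ and $j>i$ the first with $\bm\omega^{(j)}\nparallel\bm\omega^{(i)}$; combined with a condition $\mathbf a^{(k)}\nparallel\bm\beta$ for some $k<i-1$, this yields the explicit full-rank witness. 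So your plan is sound, but the witness has to be built by this direct analysis of $\bm\Omega^{(m)}$ rather than by analogy with $\mathcal O_I$.
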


See Appendix.~\ref{appix:proposition1} for proof.

\begin{proposition}\label{proposition_O_S2}
If matrix $\mathcal O_{S_2}$ in (\ref{e:O_state_E}) is rank-deficient, then $\bm \omega^{(k)}$ is parallel to $\bm \beta $, i.e., $\bm \omega^{(k)}\parallel \bm \beta$, for all $k \geq 0$.
\end{proposition}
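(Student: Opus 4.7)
The plan is to recast the algebraic rank-deficiency of $\mathcal{O}_{S_2}$ as an analytic statement on $\bm\beta(t)$ and then exploit the polynomial structure of $\bm\omega(t)$ forced by the integrator form of $\mathbf A_{\bm\gamma_{\bm\omega}}$. First I would interpret a nonzero null vector $\mathbf v = (v_0,\dots,v_{n_\omega-1})$ of $\mathcal{O}_{S_2}$ as a perturbation $\delta\bm\omega^{(j)}(0) = v_j$ of the initial dynamics state that kills every Lie derivative of $\mathbf h_2 = \bm\beta$ along $\mathbf f_0$ at $t = 0$. In the noise-free deterministic system, $\bm\omega(t)$ is a polynomial in $t$ of degree at most $n_\omega-1$ (by the integrator form), so $\bm\beta(t)$ is real-analytic and, taking $n_2$ large enough to reflect the full observability distribution, vanishing of all Lie derivatives at $0$ is equivalent to $\delta\bm\beta(t)\equiv 0$ on a neighbourhood of $0$. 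Plugging this into the linearised rotation equation $\delta\dot{\bm\beta} = \lfloor\delta\bm\beta\rfloor\bm\omega + \lfloor\bm\beta\rfloor\delta\bm\omega$ with $\delta\bm\beta(0) = \mathbf 0$ (we perturb only $\bm\gamma_{\bm\omega}$, not $\mathbf R$) collapses the null-vector condition to $\lfloor\bm\beta(t)\rfloor\,p(t) = \mathbf 0$ for $t$ near $0$, where $p(t) := \sum_{j=0}^{n_\omega-1} v_j\,t^j/j!$ is a polynomial of degree at most $n_\omega-1$ that is parallel to $\bm\beta(t)$.

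Working on a neighbourhood where $p\neq\mathbf 0$, write $\bm\beta(t) = \pm p(t)/\|p(t)\|$. Substituting this into $\dot{\bm\beta} = \bm\beta\times\bm\omega$ and projecting onto the plane perpendicular to $p$ produces the identity
\[
\bm\omega_\perp(t)\,\|p(t)\|^2 \;=\; -\,p(t)\times\dot p(t),\qquad \bm\omega_\perp := \bm\omega - (\bm\omega\cdot\bm\beta)\bm\beta.
\]
Because $\bm\omega$ is polynomial in $t$ and the orthogonal decomposition $\bm\omega = \bm\omega_\parallel + \bm\omega_\perp$ cannot cancel non-polynomial components against each other, $\bm\omega_\perp$ itself must be a polynomial in $t$, i.e.\ $\|p\|^2$ must divide $p\times\dot p$ in $\mathbb R[t]$. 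A degree count gives $\deg\|p\|^2 = 2\deg p$ whereas $\deg(p\times\dot p)\le 2\deg p - 2$ (the $t^{2\deg p -1}$ coefficient vanishes because $p_d\times p_d = \mathbf 0$), so divisibility is possible only when $p\times\dot p\equiv\mathbf 0$, i.e.\ $p(t)$ has a constant direction $\mathbf u\in\mathbb R^3$.

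Constant direction of $p$ together with $p\parallel\bm\beta$ forces $\bm\beta(t)\equiv\pm\mathbf u/\|\mathbf u\|$, so $\dot{\bm\beta}\equiv\mathbf 0$ and the kinematic ODE gives $\bm\omega(t)\parallel\bm\beta(0)$ for every $t$. Reading off the Taylor coefficients of $\bm\omega(t) = \sum_{k=0}^{n_\omega-1}\bm\omega^{(k)}(0)t^k/k!$ yields $\bm\omega^{(k)}(0)\parallel\bm\beta(0)$ for $k = 0,\dots,n_\omega-1$; for $k\ge n_\omega$ the deterministic integrator makes $\bm\omega^{(k)}(0) = \mathbf 0$, which is trivially parallel, giving the claim. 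I expect the main obstacle to be the first step --- rigorously passing from the finite-dimensional algebraic condition ``$\mathcal{O}_{S_2}\mathbf v = \mathbf 0$'' to the analytic condition ``$\delta\bm\beta(t)\equiv 0$'', since it requires separating the unperturbed linearisation of $\mathbf R$ from that of $\bm\gamma_{\bm\omega}$ and arguing that the chosen order $n_2$ captures the full observability rank through real-analyticity of $\bm\beta(t)$. Once this analytic reformulation is in place, the polynomial degree count does all the geometric work.
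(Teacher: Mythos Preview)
Your proposal rests on a misreading of the object $\mathcal O_{S_2}$. By construction (see the block in (\ref{e:O_state_E}) and the paper's identification $\mathcal O_{S_2}={}^1_{n_2}\mathcal G^{\mathbf h_2}_{\bm\omega}$), the columns of $\mathcal O_{S_2}$ are the gradients of $\mathcal L^m_{\mathbf f_0\cdots\mathbf f_0}\mathbf h_2$ with respect to the \emph{three}-vector $\bm\omega$ only; the higher-derivative columns $\bm\omega^{(1)},\dots,\bm\omega^{(n_\omega-1)}$ were already split off into the $\mathbf I_{3n_\omega-3}$ block in $\bm\Psi$ and eliminated in Lemma~\ref{lemma:obs}. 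Hence a null vector of $\mathcal O_{S_2}$ is a single $\mathbf v\in\mathbb R^3$, not a tuple $(v_0,\dots,v_{n_\omega-1})$ indexed by derivative order. Your perturbation polynomial $p(t)=\sum_j v_j t^j/j!$ therefore does not arise from the rank-deficiency hypothesis, and the entire degree-count argument comparing $\deg\|p\|^2$ with $\deg(p\times\dot p)$ is attacking a phantom.

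If you correct the dimension, the analytic route you sketch becomes almost trivial and the polynomial machinery evaporates: the perturbation is $\delta\bm\omega(t)\equiv\mathbf v$ (constant, since only the bottom integrator state is moved), $\delta\bm\beta(t)\equiv\mathbf 0$ forces $\lfloor\bm\beta(t)\rfloor\mathbf v=\mathbf 0$, so $\bm\beta(t)$ has fixed direction $\mathbf v$, hence $\dot{\bm\beta}=\mathbf 0$ and $\bm\omega(t)\parallel\bm\beta$ for all $t$, which gives $\bm\omega^{(k)}\parallel\bm\beta$ by differentiation. That is a legitimate alternative, but it still requires you to justify passing from finitely many vanishing Lie derivatives to $\delta\bm\beta\equiv\mathbf 0$ via analyticity; the paper sidesteps this by working algebraically. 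It computes $\nabla_{\bm\omega}\bm\beta^{(m)}$ recursively, row-reduces $\mathcal O_{S_2}$ to the stack $[\lfloor\bm\beta\rfloor;\dots;\lfloor\bm\beta^{(n_2-1)}\rfloor]$, and then runs an induction on $k$ using the cross-product identity $\lfloor\lfloor\bm\omega^{(k)}\rfloor^T\bm\beta\rfloor\bm\beta=\mathbf 0\Rightarrow\bm\omega^{(k)}\parallel\bm\beta$, never leaving the pointwise algebraic setting.
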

See Appendix.~\ref{appix:proposition2} for proof.\\

From (\ref{e:O_state_E}), it is seen that if $\mathcal O_{S_1}$ and $\mathcal O_{S_1}$ both has full column rank, $\mathcal O_S$ has full column rank. Then, the above two propositions imply that the system observability depends on $\boldsymbol{\omega}^{(k)}$ and $\mathbf a^{(k)}$ (or equivalently the extended state $\bm \gamma_{\mathbf a}$ and $\bm \gamma_{\bm \omega}$) only. Moreover, since the subset on which $\mathcal O_{S_1}$ and $\mathcal O_{S_1}$ hence $\mathcal O_{S}$ are rank-deficient is {\it thin}, the probability of the system initial states lying exactly on it is very low (e.g., due to the existence of noises). For the rest majority of $\mathbb{R}^{n_{ a} + n_{ \omega}}$, i.e., $\boldsymbol{\omega}^{(k)}$ and $\mathbf a^{(k)}$ have sufficient excitation, the system is locally weakly observable. This is exactly the same case for the input formulation where sufficient excitation in $\bm{\omega}$ and $\mathbf a$ is required. The only difference is that the $\boldsymbol{\omega}$ and $\mathbf a$ appear as input instead of state. 

\section{Simulation Results}~\label{simulation}
In order to validate the proposed framework, we simulate a trajectory for the UAV in Fig. \ref{fig:example_sim}. In the simulation, the UAV takes off at $2\; s$ and stays at $5\; m$ height, then lands slowly after $12\; s$. The UAV is driven by sinusoidal acceleration and angular acceleration of multiple frequencies, which give sufficient excitation to the system. The IMU is installed at $\mathbf c = \begin{bmatrix}0.5\; m & 0.5\; m & 0.5\; m \end{bmatrix}^T$ offset to the position sensor. The UAV position, acceleration, and angular velocity are added with white Gaussian noises to produce the simulated measurements. Fig.~\ref{fig:path_sim} shows the position and attitude of the IMU in the simulation where Euler angles (Yaw, Pitch and Roll) following the extrinsic Z-Y-X order are used to represent the attitude.
\begin{figure}[b]
	\centering
	\includegraphics[width=1\columnwidth]{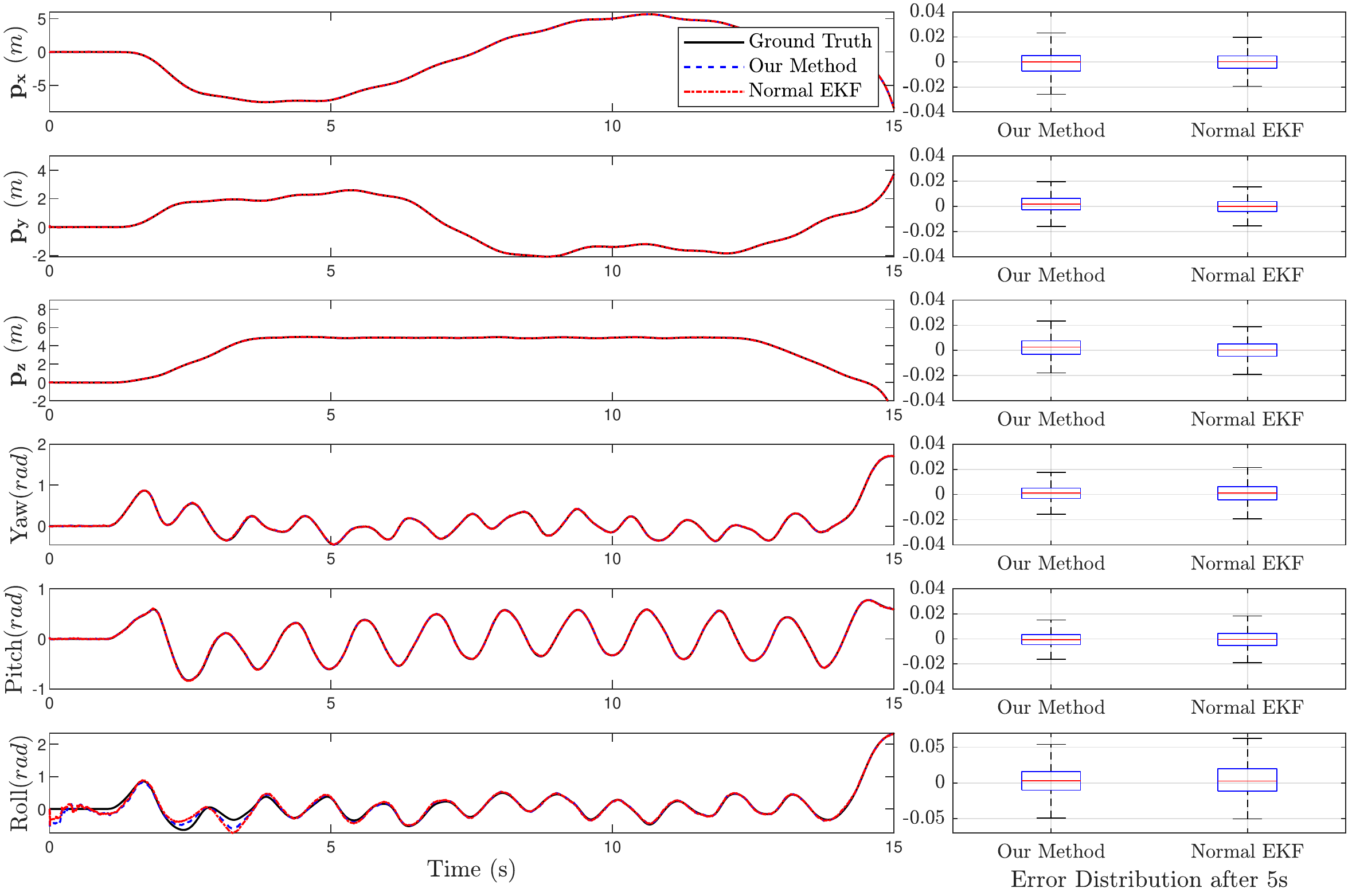}
	\caption{The estimation results and error distribution of position and attitude in the simulation.
		\label{fig:path_sim}}
	\vspace{-0cm}
\end{figure}

We compare the state estimation results of two error-state EKFs, one based on the state formulation (\ref{e:kine_model_state}) and the other based on the commonly used input formulation (\ref{e:kine_model_input}) (i.e., referred to as ``Normal EKF"). For the state formulation, we use a 4-th order integrator as in (\ref{e:n_int_model}) to model the acceleration and angular velocity independently. Fig.~\ref{fig:path_sim} and Fig.~\ref{fig:error_sim} show the comparison results, where the kinematics states (i.e, position $\mathbf p$, velocity $\mathbf v$, attitude $\mathbf R$, and biases $\mathbf b_{\mathbf a}, \mathbf b_{\bm \omega}$) and extrinsic parameter $\mathbf c$ are shown because the input formulation can only estimate these states. From Fig.~\ref{fig:path_sim} and Fig.~\ref{fig:error_sim}, we can see that both EKFs converge to the ground truth values, and the estimation errors after convergence have comparable variances. This verifies that our method can achieve similar estimation performance on kinematics state as that of the input formulation. 

\begin{figure}[t]
	\centering
	\includegraphics[width=1\columnwidth]{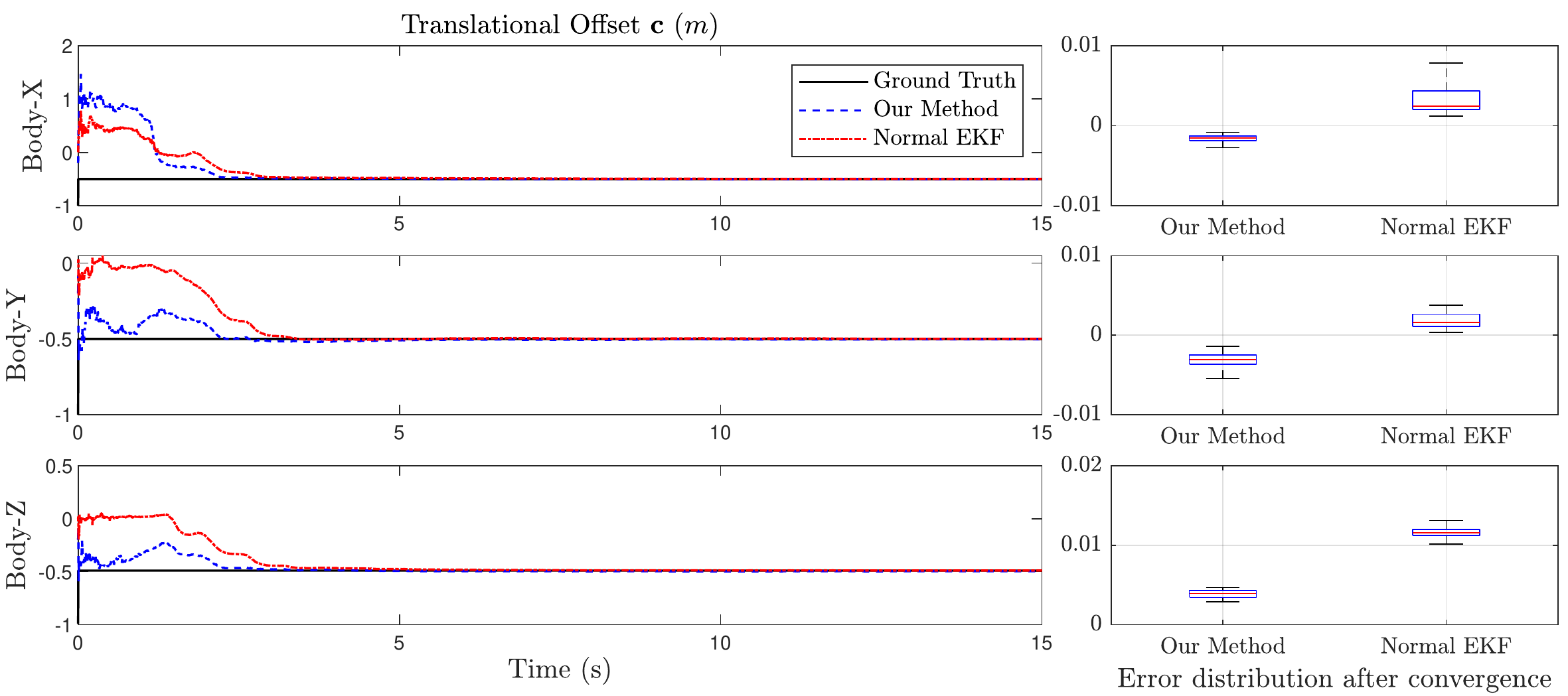}
	\caption{The estimation results of the translational offset $\mathbf c$ in simulation.
		\label{fig:error_sim}}
	\vspace{-0cm}
\end{figure}

\begin{figure}[t]
	\centering
	\includegraphics[width=1\columnwidth]{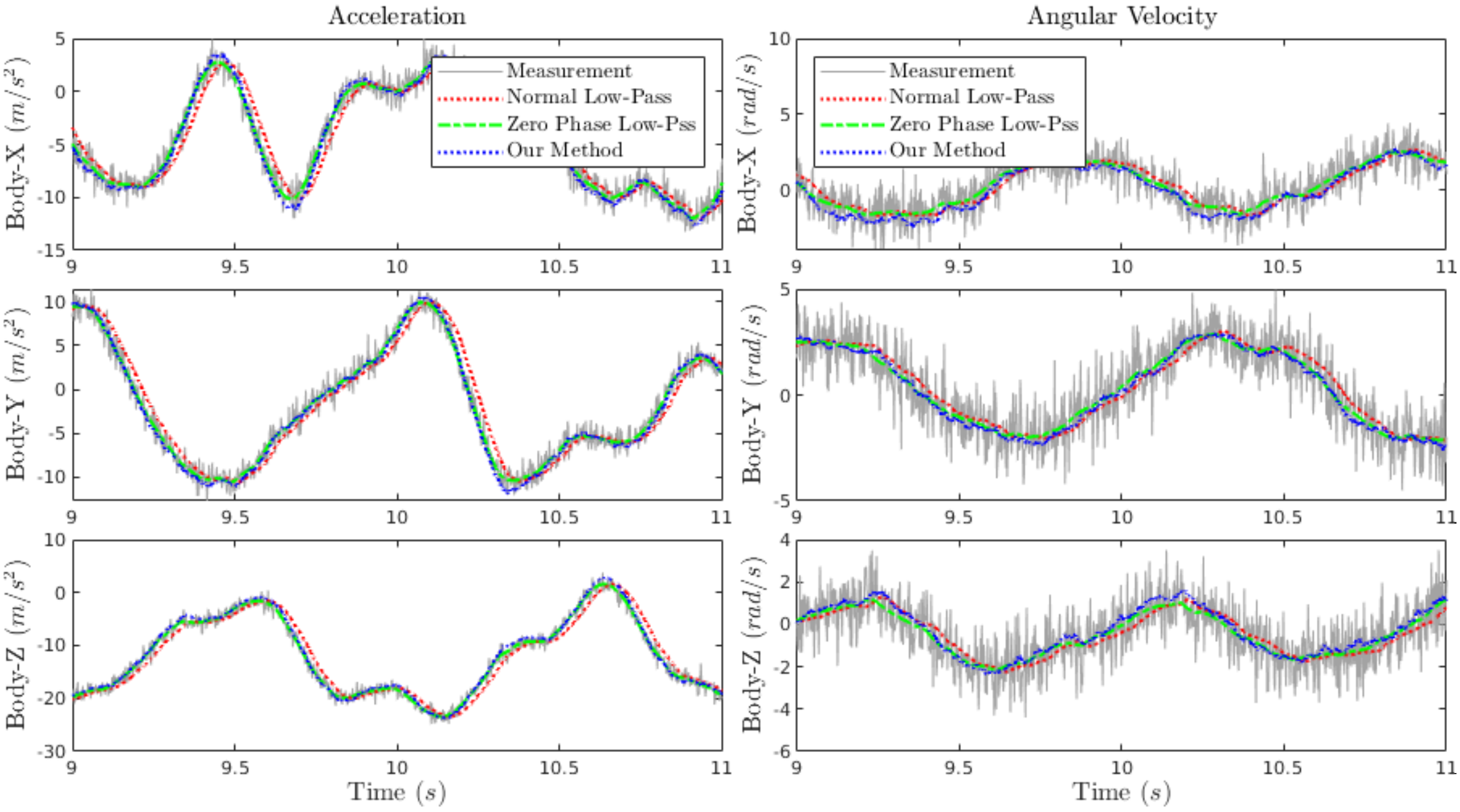}
	\caption{Comparison between our method, normal low-pass filter and zero-phase low-pass filter in the simulation.
		\label{fig:filt_sim}}
	\vspace{-0cm}
\end{figure}

A major advantage of the proposed state formulation is its ability to estimate the acceleration and angular velocity. To evaluate the estimation performance, we compare its estimation results with a first-order Butterworth low-pass filter (referred to as ``Normal Low-Pass") and a non-causal zero-phase low-pass filter (referred to as ``Zero Phase Low-Pass") \cite{gustafsson1996determining} directly applied on the accelerometer and gyro measurements removing biases. The Butterworth low-pass filter is designed following section \ref{case_study} such that it attains a noise attenuation level similar to EKF, and the zero-phase filter runs the Butterworth filter in its forward direction \cite{gustafsson1996determining}. All the results are shown in Fig.~\ref{fig:filt_sim}, where we can see that all the three filters achieve similar noise attenuation w.r.t. the raw measurements. Moreover, it is apparent that the normal low-pass filter has a considerable delay (40 $ms$) while the filtered results of the zero-phase filter and our EKF are very close to the ground truth values. It should be noted that our EKF is causal, being able to run in an online fashion, while the zero-phase filter is not, and that the 4-th order integrator model is not an exact model of the ground true sinusoidal acceleration and angular velocities. In conclusion, a statistical model is robust to capture the UAV dynamic motion and an EKF based on it can estimate the dynamics state with a delay as low as a non-causal zero-phase filter. 

\section{Experimental Validation}
This section supplies experimental results to verify the proposed approach. Two applications are demonstrated: the first one is the same to the simulation in Section VI: estimating both the kinematics state and dynamics states (i.e., acceleration and angular velocity) while simultaneously calibrating the offset between the the position reference point and IMU (see Fig. \ref{fig:example_sim}). The experiment of the first application is conducted on an UAV, an UGV and a handheld platforms (see Fig. \ref{fig:uas}) to show the versatility of the proposed statistical model. The second application is to calibrate the extrinsic parameters (both rotation and translation) between two IMUs. Finally, running time of the proposed method is evaluated.

\subsection{Simultaneous States Estimation and Extrinsic Calibration of POS/IMU Sensors}~\label{sec:case1}
\iffalse
\begin{figure}[b]
		\centering
		\includegraphics[width=1\columnwidth]{}
		\caption{The error distribution of position ($\mathbf p_x$ $\mathbf p_y$ $\mathbf p_z$), attitude (Yaw Pitch Roll) and translational offset ($\mathbf c_x$ $\mathbf c_y$ $\mathbf c_z$) after convergence in UAV platform.
			\label{fig:c1_error_bar}}
		\vspace{-0.cm}
\end{figure}
\fi

As shown in Fig. \ref{fig:uas}, the system is configured with tracker balls of a motion capture system to provide the position and orientation measurement, and an onboard IMU sensor to provide the acceleration and angular velocity measurements. The two sensors (i.e., motion capture system and IMU) are pre-calibrated to align their reference frames. This system is identical to Fig.~\ref{fig:example_sim} and hence the proposed approach in section \ref{sim} and its implementation in section \ref{simulation} can be directly applied here. The three systems are moved randomly in the 3D space (for UAV and handheld platform) or on the floor (for UGV) to obtain sufficient excitation, their trajectories are shown in Fig. \ref{fig:pos_3d}. In each scenario, we compare the estimated kinematics state with that of the normal EKF based on the input formulation and the estimated acceleration and angular velocity with that of zero-phase low-pass filters.

\begin{figure}[h]
	\centering
	\includegraphics[width=1\columnwidth]{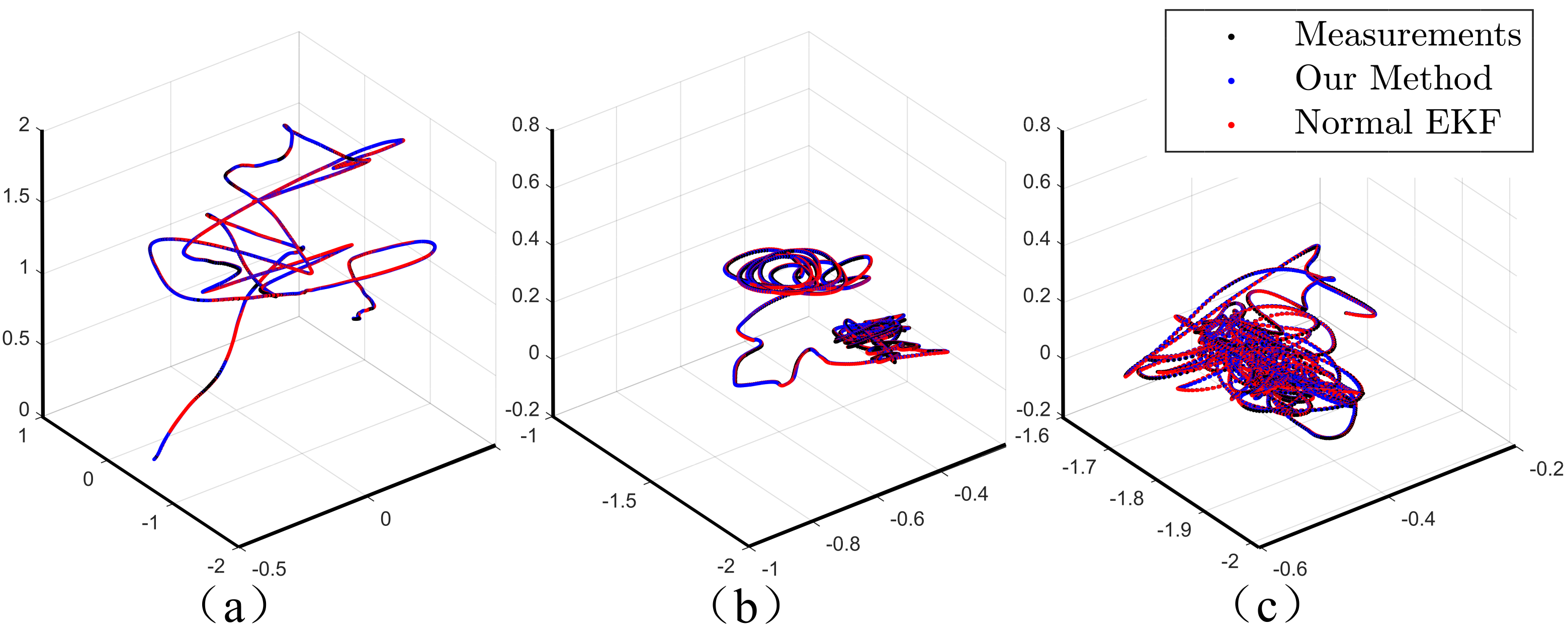}
	\caption{The trajectories in three validation experiments of the first application: (a) the UAV flight experiment; (b) the UGV experiment; (c) the handheld experiment.
		\label{fig:pos_3d}}
	\vspace{-0cm}
\end{figure}

\iffalse
\begin{figure}[t]
		\centering
		\includegraphics[width=1\columnwidth]{}
		\caption{The error distribution of position ($\mathbf p_x$ $\mathbf p_y$ $\mathbf p_z$), attitude (Yaw Pitch Roll) and translational offset ($\mathbf c_x$ $\mathbf c_y$ $\mathbf c_z$) after convergence in UGV platform.
			\label{fig:c1_error_bar_r1}}
		\vspace{-0.cm}
\end{figure}
\fi

\begin{figure}[t]
		\centering
		\includegraphics[width=1\columnwidth]{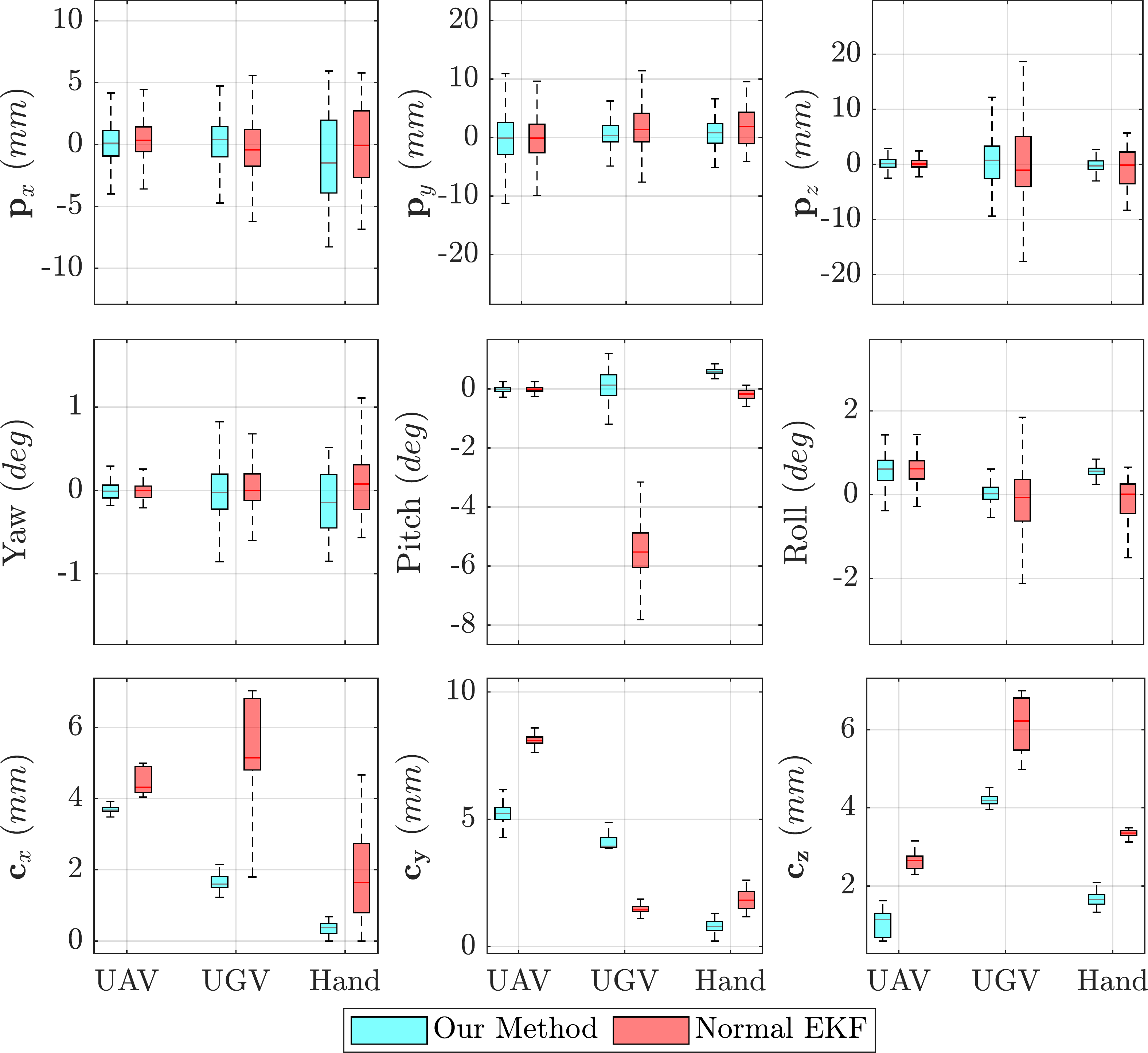}
		\caption{The error distribution of position ($\mathbf p_x$ $\mathbf p_y$ $\mathbf p_z$), attitude (Yaw Pitch Roll) and translational offset ($\mathbf c_x$ $\mathbf c_y$ $\mathbf c_z$) after convergence in UGV, UAV and handheld experiments.
			\label{fig:c1_error_bar}}
		\vspace{-0.cm}
\end{figure}

\begin{figure}[t]
	\centering
	\includegraphics[width=1\columnwidth]{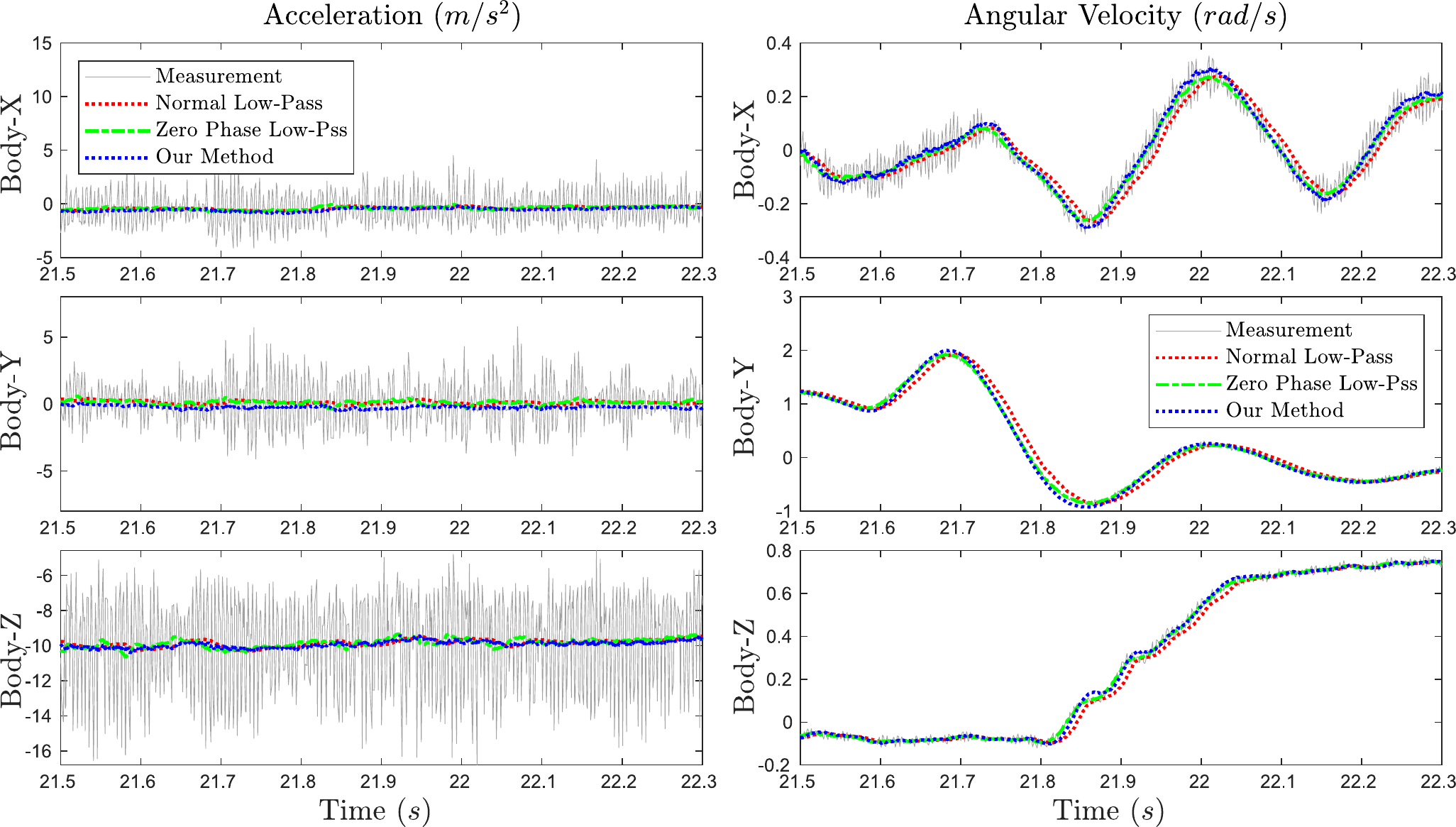}
	\caption{The comparison of acceleration and angular velocity estimations with a normal low-pass filter outputs and a non-causal zero-phase low-pass filter outputs, for the UAV flight experiment.
		\label{fig:c1_filtcompare}}
	\vspace{-0.cm}
\end{figure}

\iffalse
\begin{figure}[t]
		\centering
		\includegraphics[width=1\columnwidth]{}
		\caption{The error distribution of position ($\mathbf p_x$ $\mathbf p_y$ $\mathbf p_z$), attitude (Yaw Pitch Roll) and translational offset ($\mathbf c_x$ $\mathbf c_y$ $\mathbf c_z$) after convergence in handheld platform.
			\label{fig:c1_error_bar_r2}}
		\vspace{-0.cm}
\end{figure}
\fi

\begin{figure}[t]
	\centering
	\includegraphics[width=1\columnwidth]{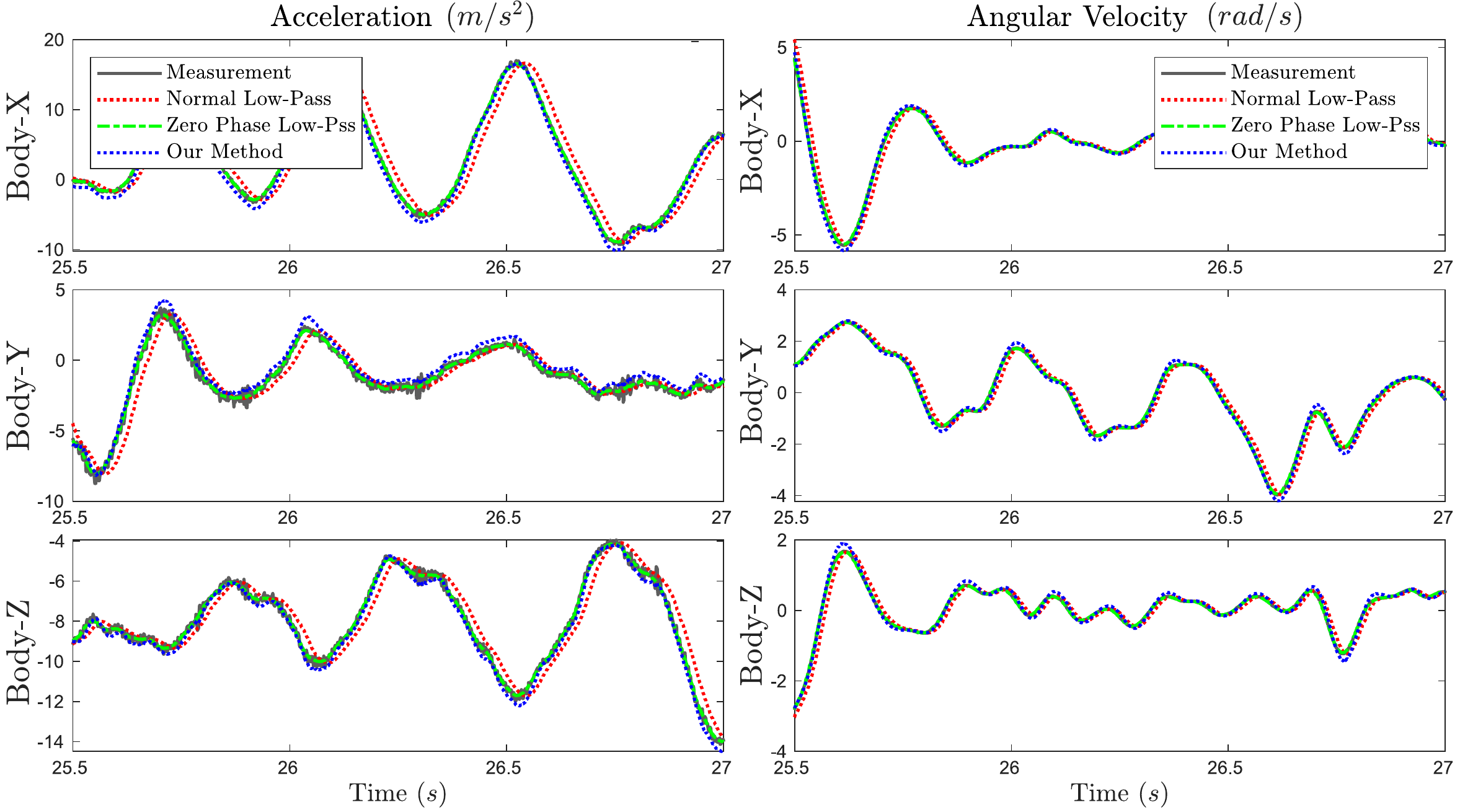}
	\caption{The comparison of acceleration and angular velocity estimations with a normal low-pass filter outputs and a zero-phase low-pass filter outputs, for the UGV experiment.
		\label{fig:c1_filt_comp_r1}}
	\vspace{-0.cm}
\end{figure}

\begin{figure}[h]
	\centering
	\includegraphics[width=1\columnwidth]{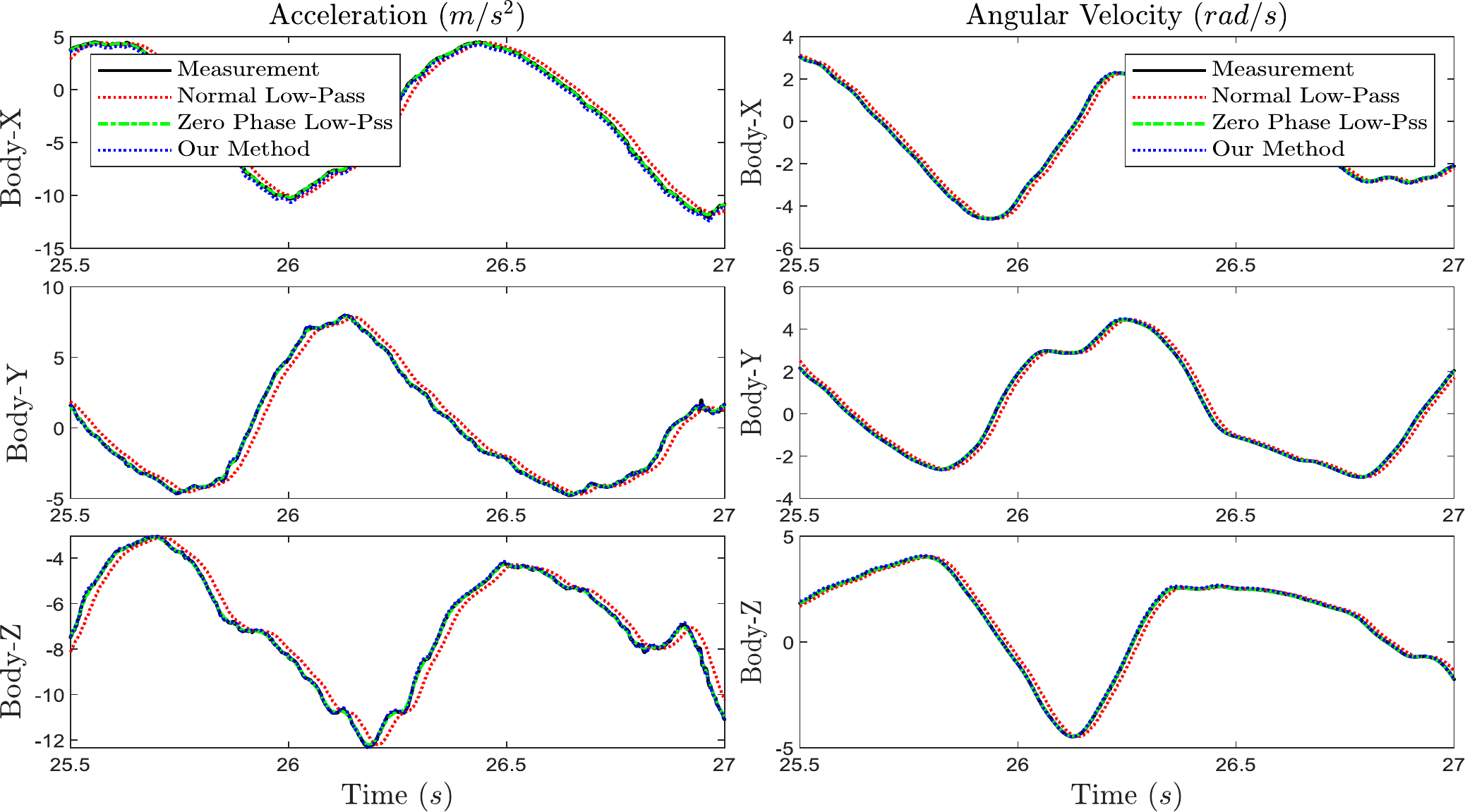}
	\caption{The comparison of acceleration and angular velocity estimations with a normal low-pass filter outputs and a zero-phase low-pass filter outputs, for the handheld experiment.
		\label{fig:c1_filt_comp_r2}}
	\vspace{-0.cm}
\end{figure}

The estimation performance of the kinematics states and extrinsic parameters are shown in Fig. \ref{fig:c1_error_bar}, where the first three figures compare the measured position $\mathbf p_m$ to the position prediction $(\widehat{\mathbf p} + \widehat{\mathbf R} \widehat{\mathbf c} )$ from the two EKFs. It is seen that in all these three platforms, both EKFs have comparable means and variances in position prediction errors. With the pre-calibration between the motion capture system and the onboard IMU, we also extract the full attitude measurement from the motion capture system and use it as the ground truth to evaluate attitude estimation performance. Shown in the middle three subplots of Fig. \ref{fig:c1_error_bar}, overall, the two EKFs have comparable mean and variance. For the UGV case, the pitch error of our method is significantly better than that of the normal EKF. It is also observed some residual errors in attitude estimation, which might be due to the calibration error between mocap and onboard IMU. Fig. \ref{fig:c1_error_bar} further shows the calibration error of the translational offset $\mathbf c$ by comparing the extrinsic estimation from the two EKFs to hand measured values. It can be seen that our method achieves smaller errors overall, the difference between the two EKFs are very small (within 5 $mm$), which is below the precision of hand measurements. These results show that the EKF based on the proposed statistical model achieves an estimation performance comparable to (and occasionally better than) conventional input formulation.  Besides the estimation of kinematics state, Fig. \ref{fig:c1_filtcompare}, Fig. \ref{fig:c1_filt_comp_r1} and \ref{fig:c1_filt_comp_r2} show the filter results of acceleration and angular velocity on the three platforms. It can be seen that our method has much lower filtering delay (e.g., 21 $ms$ less for the angular velocity of UAV) in all cases than the normal low-pass filter while they both achieve similar level of noise attenuation. The filtering performance is comparable to a non-causal zero-phase filter. For the acceleration of the UAV case, the noise suppression is easily seen but not the filtering delay due to the small and relative steady body accelerations. This is because the UAV undergoes low speed motions in a confined indoor area (see Fig.~\ref{fig:pos_3d} (a)), in this case, the rotor drag is very small \cite{martin2010true, faessler2017differential} and the specific force acting on the UAV is only the rotor thrust which is always along the body $Z$ direction. We could further exploit this property to calibrate the offset between the IMU position and the center of gravity: if such offset is present and the UAV undergoes rotation w.r.t. its center of gravity, accelerations along body $X$ and $Y$ will show up. This is however beyond the scope of this paper. For the UGV and handheld platforms, the reduced delay in acceleration estimation of our method relative to the low-pass filter is obvious due to the drastic acceleration change. These three experiments draw consistent conclusions: 1) The two EKFs have comparable performance in estimating kinematics state and extrinsic parameters; 2) Our proposed method can additionally filter the dynamics states such as acceleration and its filtering performance is comparable to a non-causal zero-phase filter. 

\subsection{Inter IMU calibration}\label{case2}
The multi-IMU case exists in many robotic applications, for example many UAV platform has back-up IMUs to obtain redundancies. Fig. \ref{fig:2imu} shows one such system with two IMUs. Extrinsic parameters (both translation and rotation) of the two IMUs must be calibrated to register their data to the same coordinate frame. This is possible with only onboard IMU measurements: since the two IMUs measure the same acceleration vector and angular velocity vector under their respective body frame, the relative rotation could be calibrated, and the relative translation could be calibrated since it relates the linear acceleration of one IMU to the angular acceleration of the other. It can be seen that to calibrate the relative rotation, either angular velocity or linear acceleration of the two IMUs are required, and to calibrate the relative translation, the angular acceleration (hence angular velocity) of one IMU and the linear acceleration of the other are required. This gives us a variety of possible combinations, in this case study, we make use of the angular velocity and the linear acceleration of IMU1 and the linear acceleration of IMU2.

\begin{figure}
	\centering
	\includegraphics[width=0.6\columnwidth]{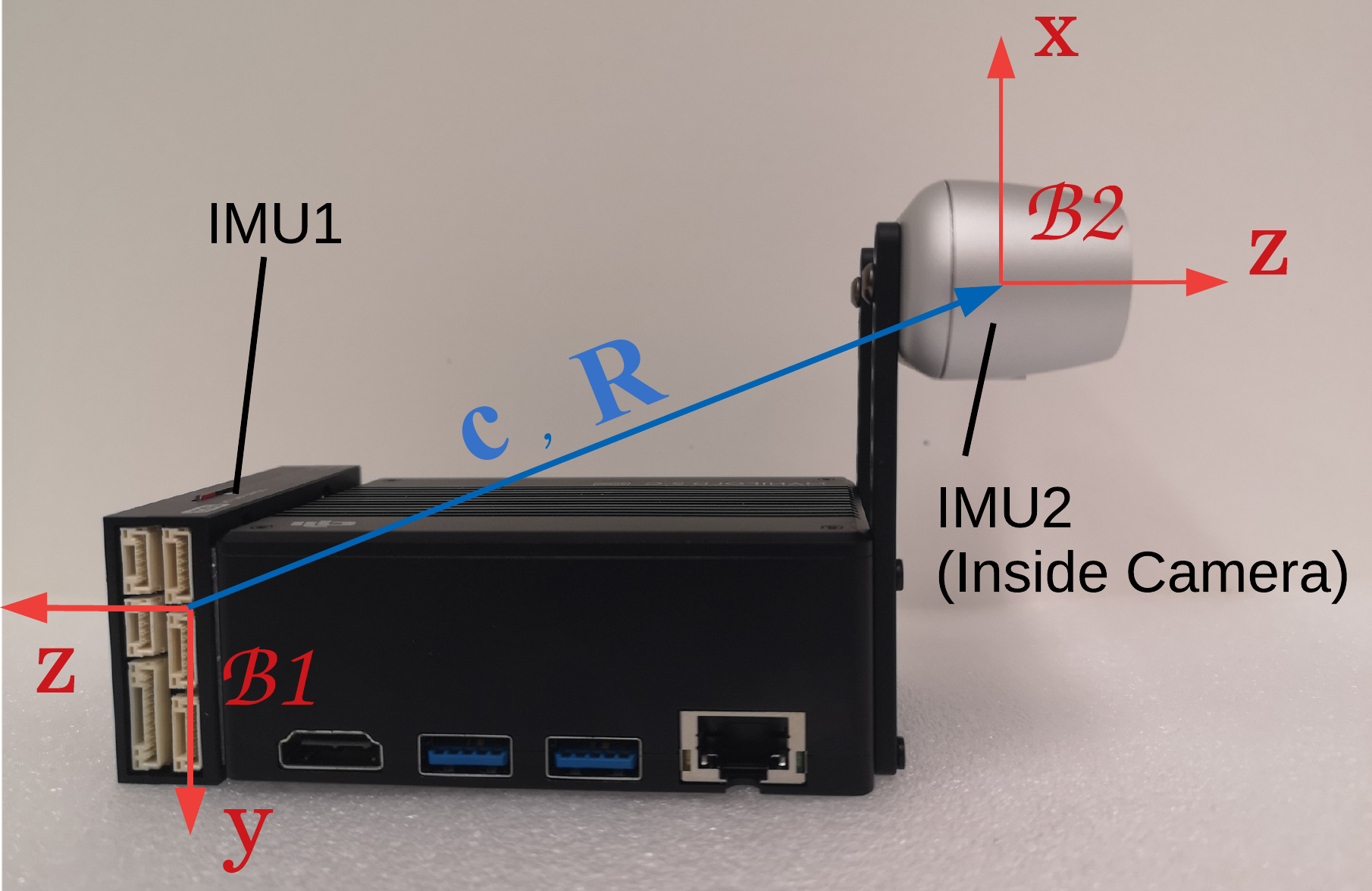}
	\caption{System configuration of inter IMU calibration.
		\label{fig:2imu}}
	\vspace{-0cm}
\end{figure}

To demonstrate how the proposed statistical model can be used to estimate the linear and angular acceleration, thus enabling online calibration of the two IMUs' extrinsic parameters, we first build a state space model. Referring to Fig. \ref{fig:2imu}, we denote $\mathcal{B}1$ and $\mathcal{B}2$ as the body frame of the two IMUs, respectively. The $\mathbf c$ and $\mathbf R$ in Fig.\ref{fig:2imu} denotes the relative translation and rotation between IMU1 and IMU2, respectively. Choose $\mathcal B1$ as the reference frame and model its linear acceleration $\mathbf a_1$ and angular acceleration $\boldsymbol{\tau}_1$ as two independent statistical model (\ref{e:gamma_model}) with state vector $\boldsymbol{\gamma}_{\mathbf a_1}$ and $\boldsymbol{\gamma}_{\boldsymbol{\tau_1}}$, respectively. Then, the state equation is:
\begin{equation}~\label{e:smodel2}
\begin{array}{ll}
&\dot{\bm \gamma}_{\mathbf a_1}=\mathbf A_{\bm \gamma_{\mathbf a_1}}{\bm \gamma}_{\mathbf a_1}+\mathbf B_{\bm \gamma_{\mathbf a_1}}\mathbf w_{\bm \gamma_{\mathbf a_1}}, \quad \mathbf a_1 = \mathbf C_{\bm \gamma_{\mathbf a_1}} {\bm \gamma_{\mathbf a_1}} \\
& \dot{\bm \gamma}_{\bm \tau_1}=\mathbf A_{\bm \gamma_{\bm \tau_1}}{\bm \gamma}_{\bm \tau_1}+\mathbf B_{\bm \gamma_{\bm \tau_1}}\mathbf w_{\bm \gamma_{\bm \tau_1}}, \quad \bm \tau_1 = \mathbf C_{\bm\gamma_{\bm \tau_1}} {\bm \gamma_{\bm \tau_1}} \\
& \dot{\bm \omega}_1 = \bm \tau_1, \quad \dot {\mathbf b}_{\bm \omega_1} = \mathbf w_{\mathbf b_{\bm \omega_1}}, \\
& \dot {\mathbf b}_{\mathbf a_1} = \mathbf w_{\mathbf b_{\mathbf a_1}}, \quad \dot {\mathbf b}_{\mathbf a_2} = \mathbf w_{\mathbf b_{\mathbf a_2}}  \\
&\dot {\mathbf c} = \mathbf 0, \quad \dot{\mathbf R} = \mathbf 0
\end{array}
\end{equation}
where $\mathbf b_{\mathbf a_i}$ and $\mathbf b_{\bm \omega_i}$ are the biases of the $i$-th IMU. It should be noticed that in the above model, we modeled the angular acceleration (instead of angular velocity as in Section VII. A) as a statistical model because it needs to be estimated. The angular acceleration model is then augmented with one additional deterministic integrator to obtain the angular velocity.

The measurement vector is $\mathbf y^T =\left[ \bm \omega_{m_1}^T \ \ \ \mathbf a_{m_1}^T \ \ \ \mathbf a_{m_2}^T \right]$ with components defined below. 
\begin{equation}~\label{e:mmodel2}
\begin{aligned}
\bm \omega_{m_1}\!&=\!\bm \omega_1 \!+ \!\mathbf b_{\bm \omega_1}\!+\!\mathbf n_{\bm \omega_1} \\
\mathbf a_{m_1}\!&=\!\mathbf a_1 + \!\mathbf b_{\mathbf a_1}\!+\mathbf n_{\mathbf a_1} \\
\mathbf a_{m_2} \!&=\! \mathbf R\!\left(\mathbf a_1\! +\!\lfloor{\bm \omega_1}\rfloor^2\mathbf c \!+\!\lfloor{\bm \tau_1}\rfloor \mathbf c\right)\! + \!\mathbf b_{\mathbf a_2}\!+\!\mathbf n_{\mathbf a_2} 
% \bm \omega_{m_2}\!&=\!\mathbf R\bm \omega_1\! + \!\mathbf b_{\bm \omega_2}\!+\!\mathbf n_{\bm \omega_2}
\end{aligned}
\end{equation}

Rewriting the state space model with state equation (\ref{e:smodel2}) and output equation (\ref{e:mmodel2}) into a more compact form as below
\begin{equation}~\label{e:state_compact}
    \begin{aligned}
    \dot{\mathbf x} &= \mathbf f(\mathbf x, \mathbf w) \\
    \mathbf y&= \mathbf h(\mathbf x)
    \end{aligned}
\end{equation}
where the state vector is
\begin{equation}\notag
    \mathbf x^T = \begin{bmatrix}
    \bm \gamma_{\mathbf a_1}^T, \bm \gamma_{\bm \tau_1}^T, \bm \omega_1^T, \mathbf b_{\bm \omega_1}^T, \mathbf b_{\mathbf a_1}^T, \mathbf b_{\mathbf a_2}^T, \mathbf c^T, \mathbf R^T
    \end{bmatrix}
\end{equation}

If define a constant vector $\bar{\mathbf x}$ such that
\begin{equation}~\label{e:minimal_cond}
    \begin{aligned}
        \bar{\mathbf x}^T &= \begin{bmatrix}
    \bar{\bm \gamma}_{\mathbf a_1}^T, \mathbf 0^T, \mathbf 0^T, \mathbf 0^T, \bar{\mathbf b}_{\mathbf a_1}^T, \bar{\mathbf b}_{\mathbf a_2}^T, \mathbf 0^T, \mathbf 0^T
    \end{bmatrix} \\
    s.t. & \quad \mathbf A_{\bm \gamma_{\mathbf a_1}}{\bar{\bm \gamma}}_{\mathbf a_1} = \mathbf 0, \quad \mathbf C_{\bm \gamma_{\mathbf a_1}} \bar{\bm \gamma}_{\mathbf a_1} + \bar{\mathbf b}_{\mathbf a_1} = \mathbf 0 \\
    & \quad \mathbf R \mathbf C_{\boldsymbol{\gamma}_{\mathbf a_1}} \bar{\bm \gamma}_{\mathbf a_1} + \bar{\mathbf b}_{\mathbf a_2} = \mathbf 0 
    \end{aligned}
\end{equation}
then it is easy to show that the system (\ref{e:state_compact}) is invariant to the initial state $\bar{\mathbf x}$. i.e., $\check{\mathbf x} = \mathbf x + \bar{\mathbf x} $ satisfy
\begin{equation}\notag
    \begin{aligned}
    \dot{\check{\mathbf x}} &= \mathbf f(\check{\mathbf x}, \mathbf w) \\
    \mathbf y&= \mathbf h(\check{\mathbf x})
    \end{aligned}
\end{equation}
which implies that $\check{\mathbf x}$ would produce the same output trajectory (ignoring noises) as that of $\mathbf x$ and hence these two states cannot be distinguished. This means that the state space model with state equation (\ref{e:smodel2}) and output equation (\ref{e:mmodel2}) is not a minimal realization.

Further investigating the structure of (\ref{e:minimal_cond}), we find that $\bar{\mathbf x} = \mathbf 0$ if and only if the subsystem
\begin{equation}~\label{e:subsystem}
    \begin{aligned}
    \begin{bmatrix}
    \dot{\bm \gamma}_{\mathbf a_1} \\ \dot{\mathbf b}_{\mathbf a_1} 
    \end{bmatrix} &= 
    \begin{bmatrix}
    \mathbf A_{\bm \gamma_{\mathbf a_1}} & \mathbf 0 \\
    \mathbf 0 & \mathbf 0
    \end{bmatrix}
    \begin{bmatrix}
    \bm \gamma_{\mathbf a_1} \\ \mathbf b_{\mathbf a_1} 
    \end{bmatrix} + 
    \begin{bmatrix}
    \mathbf B_{\bm \gamma_{\mathbf a_1}} & \mathbf 0 \\
    \mathbf 0 & \mathbf I
    \end{bmatrix}
    \underbrace{\begin{bmatrix}
    \mathbf w_{\bm \gamma_{\mathbf a_1}} \\ \mathbf w_{\mathbf b_{\mathbf a_1}} 
    \end{bmatrix}}_{\mathbf w_{\bm \gamma_{\mathbf a}}} \\
    \mathbf a \!&=\!\begin{bmatrix}
    \mathbf C_{\bm \gamma_{\mathbf a_1}} & \mathbf I 
    \end{bmatrix}
    \begin{bmatrix}
    \bm \gamma_{\mathbf a_1} \\ \mathbf b_{\mathbf a_1} 
    \end{bmatrix}
    \end{aligned}
\end{equation}
is observable. One sufficient condition for (\ref{e:subsystem}) to be observable is choosing $\mathbf A_{\bm \gamma_{\mathbf a_1}}$ to be non-singular. However, this is too conservative (i.e., it limits the representation capability of the statistical model) and unnecessary: when (\ref{e:subsystem}) is unobservable, it implies that the statistical model used to capture $\mathbf a$, the biased acceleration, has redundant states that should be eliminated when designing the statistical model in the first place.  Removing the redundant states can be achieved by transforming (\ref{e:subsystem}) into the Kalman canonical form as below \cite{hespanha2018linear}.
\begin{equation}~\label{e:Kalman_can}
    \begin{aligned}
    \begin{bmatrix}
    \dot{\bm \gamma}_{\mathbf a} \\ \bullet 
    \end{bmatrix} &= 
    \begin{bmatrix}
    \mathbf A_{\bm \gamma_{\mathbf a}} & \mathbf 0 \\
    \bullet & \bullet
    \end{bmatrix}
    \begin{bmatrix}
    \bm \gamma_{\mathbf a} \\ \bullet
    \end{bmatrix} + 
    \begin{bmatrix}
    \mathbf B_{\bm \gamma_{\mathbf a}} \\
    \bullet
    \end{bmatrix}
    \mathbf w_{\bm \gamma_{\mathbf a}}  \\
    \mathbf a \!&=\!\begin{bmatrix}
    \mathbf C_{\bm \gamma_{\mathbf a}} & \mathbf 0
    \end{bmatrix}
    \begin{bmatrix}
    \bm \gamma_{\mathbf a} \\ \bullet
    \end{bmatrix}
    \end{aligned}
\end{equation}
where $(\mathbf C_{\bm \gamma_{\mathbf a}}, \mathbf A_{\bm \gamma_{\mathbf a}})$ is observable subsystem and $\bullet$'s are terms belong to the unobservable subspace. Then it is sufficient to keep only the observable state component $\bm \gamma_{\mathbf a}$ in (\ref{e:Kalman_can}) as it fully represents the biased acceleration $\mathbf a$ hence the measurement $\mathbf a_{m_1}$. Furthermore, from the output equation of (\ref{e:subsystem}), we have $\mathbf a_1 = \mathbf C_{\boldsymbol{\gamma}_{\mathbf a_1}} {\bm \gamma_{\mathbf a_1}} = \mathbf a - \mathbf b_{\mathbf a_1} $. Hence, the measurement $\mathbf a_{m_2}$ can be fully represented by the biased acceleration $\mathbf a$ and the relative bias $\mathbf b_{\mathbf a} = \mathbf b_{\mathbf a_2} - \mathbf R \mathbf b_{\mathbf a_1}$. Finally dropping the subscript 1,2 for the sake of notation simplicity, the state equation (\ref{e:smodel2}) can be reduced to the following minimal realization.

\begin{equation}~\notag
\begin{array}{c}
\dot {\mathbf b}_{\mathbf a} = \mathbf w_{\mathbf b_{\mathbf a}}, \ \dot {\mathbf b}_{\bm \omega} = \mathbf w_{\mathbf b_{\bm \omega}}, \ \dot{\bm \omega} = \bm \tau, \ \dot {\mathbf c} = \mathbf 0, \  \dot{\mathbf R} = \mathbf 0 \\
\dot{\bm \gamma}_{\mathbf a}=\mathbf A_{\bm \gamma_{\mathbf a}}{\bm \gamma}_{\mathbf a}+\mathbf B_{\bm \gamma_{\mathbf a}}\mathbf w_{\bm \gamma_{\mathbf a}}, \dot{\bm \gamma}_{\bm \tau}=\mathbf A_{\bm \gamma_\tau}{\bm \gamma}_{\bm \tau}+\mathbf B_{\bm \gamma_{\bm \tau}}\mathbf w_{\bm \gamma_{\bm \tau}}
\end{array}
\end{equation}
where $\mathbf w_{\mathbf b_{\mathbf a}} = \mathbf w_{\mathbf b_{\mathbf a_2}} - \mathbf R \mathbf w_{\mathbf b_{\mathbf a_1}}$ is a relative noise seen in $\mathcal{B}$2 and the output equation is
\begin{equation}~\notag
\begin{aligned}
&\mathbf a \!=\! \mathbf C_{\bm \gamma_{\mathbf a}} \bm \gamma_{\mathbf a}, \quad  \bm \tau = \mathbf C_{\bm \tau_{\bm \tau}} \bm \gamma_{\bm \tau} \\
&\bm \omega_{m_1}\!=\!\bm \omega \!+\! \mathbf b_{\bm \omega} \!+\!\mathbf n_{\bm \omega}, \ \mathbf a_{m_1}\!=\!\mathbf a +\mathbf n_{\mathbf a},\\
&\mathbf a_{m_2} \!=\! \mathbf R\!\left(\mathbf a\! +\!\lfloor{\bm \omega}\rfloor^2\mathbf c \!+\!\lfloor{\bm \tau}\rfloor \mathbf c\right)\! + \!\mathbf b_{\mathbf a}\!+\!\mathbf n_{\mathbf a}
\end{aligned}
\end{equation}
\begin{lemma}~\label{lemma:obs_interIMU}
The nonlinear system with state equation
\begin{equation}~\label{e:smodel2_min_append}
\begin{array}{c}
\dot {\mathbf b}_{\mathbf a} = \mathbf 0, \ \dot {\mathbf b}_{\bm \omega} = \mathbf 0, \ \dot {\mathbf c} = \mathbf 0, \  \dot{\mathbf R} = \mathbf 0 \\
\dot{\bm \omega} = \bm \tau, \ \dot{\bm \gamma}_{\bm \tau}=\mathbf A_{\bm \gamma_{\bm \tau}}{\bm \gamma}_{\bm \tau}, \dot{\bm \gamma}_{\mathbf a}=\mathbf A_{\bm \gamma_{\mathbf a}}{\bm \gamma}_{\mathbf a}
\end{array}
\end{equation}
where $(\mathbf C_{\bm \gamma_{\mathbf a}}, \mathbf A_{\bm \gamma_{\mathbf a}})$ and $(\mathbf C_{\bm \gamma_{\bm \tau }}, \mathbf A_{\bm \gamma_{\bm \tau}})$ are both observable and output equation
\begin{equation}~\label{e:mmodel2_min_append}
\begin{aligned}
%&\mathbf a \!=\! \mathbf C_{\bm \gamma_{\mathbf a}} \bm \gamma_{\mathbf a}, \quad  \bm \omega = \mathbf C_{\bm \gamma_{\bm \omega}} \bm \gamma_{\bm \omega} \\
&\mathbf h_1 \!=\! \mathbf R\!\left(\mathbf a\! +\!\lfloor{\bm \omega}\rfloor^2\mathbf c \!+\!\lfloor{\bm \tau}\rfloor \mathbf c\right)\! + \!\mathbf b_{\mathbf a} \\
&\mathbf h_2 \!=\!\bm \omega \!+\! \mathbf b_{\bm \omega} , \\
&\mathbf h_3 \!=\!\mathbf a
\end{aligned} \nonumber 
\end{equation}
is locally weakly observable if $\mathbf c \neq \mathbf 0$ and the system initial states $\boldsymbol{\omega}^{(k)}$ and $\mathbf a^{(k)}, k \geq 0$ have sufficient excitation. 
\end{lemma}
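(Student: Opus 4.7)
The plan is to apply the manifold-based observability rank condition developed in Section~V to system~(\ref{e:smodel2_min_append})-(\ref{e:mmodel2_min_append}). Since the model has no exogenous inputs, only Lie derivatives along $\mathbf f_0$ matter; the partial derivations for the $\mathbf R$-block are taken via the $\mathrm{Exp}/\mathrm{Log}$ chart in (\ref{e:partial_der_SO3}), and the observability matrix is built exactly as in (\ref{e:obsv_lie}).

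First, I would exploit the two statistical-model outputs to peel off the extended states. From $\mathbf h_3=\mathbf C_{\bm\gamma_{\mathbf a}}\bm\gamma_{\mathbf a}$ and $\dot{\bm\gamma}_{\mathbf a}=\mathbf A_{\bm\gamma_{\mathbf a}}\bm\gamma_{\mathbf a}$, the $k$-th Lie derivative is $\mathcal L^k\mathbf h_3=\mathbf C_{\bm\gamma_{\mathbf a}}\mathbf A_{\bm\gamma_{\mathbf a}}^k\bm\gamma_{\mathbf a}$; stacking for $k=0,\dots,n_a-1$ yields the full-column-rank observability matrix $\mathcal O_{\bm\gamma_{\mathbf a}}$ in the $\bm\gamma_{\mathbf a}$-block by assumption. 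An identical argument on $\dot{\mathbf h}_2=\bm\tau=\mathbf C_{\bm\gamma_{\bm\tau}}\bm\gamma_{\bm\tau}$ gives the full-column-rank block $\mathcal O_{\bm\gamma_{\bm\tau}}$ for $\bm\gamma_{\bm\tau}$. Block-row operations, analogous to those in the proof of Lemma~\ref{lemma:obs}, then eliminate the couplings between $(\bm\gamma_{\mathbf a},\bm\gamma_{\bm\tau})$ and the rest, reducing the rank test to a smaller observability matrix in $(\mathbf b_{\mathbf a},\mathbf b_{\bm\omega},\mathbf c,\delta\bm\theta,\bm\omega)$.

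The reduced matrix is assembled from the gradients of $\mathbf h_1,\mathbf h_2$ and their Lie derivatives w.r.t.\ the five remaining variables. The $\mathbf b_{\mathbf a}$-column is immediately full rank because $\partial\mathbf h_1/\partial\mathbf b_{\mathbf a}=\mathbf I_3$. To separate $\bm\omega$ from $\mathbf b_{\bm\omega}$ (both appear in $\mathcal L^0\mathbf h_2$ as identity blocks), I would bring in $\partial\mathbf h_1/\partial\bm\omega$, which is structurally different (a $\mathbf c$-dependent, skew-like matrix) and therefore cannot be cancelled by the $\mathbf h_2$-row, so the $(\bm\omega,\mathbf b_{\bm\omega})$ pair becomes observable as soon as the $\bm\omega$-column of the reduced system is nonzero; the condition $\mathbf c\neq\mathbf 0$ is exactly what guarantees this.

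The hard part, and the crux of the proof, is the joint observability of $\mathbf R$ (through $\delta\bm\theta$) and $\mathbf c$. Using the chart (\ref{e:partial_der_SO3}), $\partial\mathbf h_1/\partial\delta\bm\theta=-\mathbf R\lfloor\mathbf a+\lfloor\bm\omega\rfloor^2\mathbf c+\lfloor\bm\tau\rfloor\mathbf c\rfloor$ has rank $2$ at any single instant, and $\partial\mathbf h_1/\partial\mathbf c=\mathbf R(\lfloor\bm\omega\rfloor^2+\lfloor\bm\tau\rfloor)$. Successive Lie derivatives $\mathcal L^k\mathbf h_1$ introduce the fresh time derivatives $\mathbf a^{(j)},\bm\omega^{(j)},\bm\tau^{(j)}$ into analogous skew-symmetric structures, so the stacked columns span $\mathbb R^3$ whenever these vectors are sufficiently diverse. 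Mirroring Propositions~\ref{proposition_O_S1}-\ref{proposition_O_S2}, I would show that rank deficiency of the reduced observability matrix forces algebraic relations (parallelism of the $\bm\omega^{(k)}, \mathbf a^{(k)}$ with $\bm\beta$-type vectors, vanishing cross products, or a trivial $\mathbf c$) that carve out a thin subset of the $(\mathbf a^{(k)},\bm\omega^{(k)})$ space in the sense of Definition~\ref{thin}. Under the lemma's hypotheses $\mathbf c\neq\mathbf 0$ and sufficient excitation, the initial state avoids this thin subset, so the observability matrix has full column rank on the manifold and the system is locally weakly observable by the observability rank condition.
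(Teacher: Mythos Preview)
Your plan is correct and matches the paper's proof: peel off $\bm\gamma_{\mathbf a}$ and $\bm\gamma_{\bm\tau}$ via the linear observability blocks $\mathcal O_{\bm\gamma_{\mathbf a}},\mathcal O_{\bm\gamma_{\bm\tau}}$, strip $\mathbf b_{\mathbf a}$ and $\mathbf b_{\bm\omega}$ with the identity blocks from $\mathbf h_1,\mathbf h_2$, and reduce the rank test to a submatrix $\mathcal O_{\text{sub}}=[{}^1_n\mathcal G^{\mathbf h_1}_{\mathbf c}\;{}^1_n\mathcal G^{\mathbf h_1}_{\bm\theta}\;{}^1_n\mathcal G^{\mathbf h_1}_{\bm\omega}]$ whose entries are polynomials in $\bm\omega^{(k)},\mathbf a^{(k)},\mathbf c$, then invoke the thin-set Theorem~\ref{def:thin}. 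Two small points: (i) there is no $\bm\beta$ in this system, so drop that analogy; (ii) the paper does not merely ``mirror'' Propositions~\ref{proposition_O_S1}--\ref{proposition_O_S2} but exhibits an explicit full-rank instance (via conditions on the indices $i,j,k,l,m$ and Lemma~\ref{lemma_Omega_dot}) to certify that the determinant polynomial is not identically zero---this is the step where $\mathbf c\neq\mathbf 0$ and the excitation hypotheses actually enter, and your write-up should make it equally concrete.
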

{\it Proof.} See Appendix.~\ref{append:interIMU}.\\

Lemma~\ref{lemma:obs_interIMU} implies that the overall system observability is independent from the exact form of statistical models used for $\bm \tau$ and $\mathbf a$ as long as they are observable. This allows us to choose generic higher order statistical models that better characterize the motion without affecting the observability. Furthermore, Lemma~\ref{lemma:obs_interIMU} enables us to design an error state EKF similar to Section \ref{sec:estimation} to estimate the extrinsic parameters $\mathbf c, \mathbf R$ online. 

With the mathematical formulation and theoretical proof on its observability, we now proceed to the experiment verification. The sensor suite containing two IMUs is shaken in all 6 degree of freedom (DOF) to give the system sufficient excitation. To evaluate the effectiveness of the proposed statistical models, the output of the two statistical models $\left(\mathbf C_{{\bm \gamma}_{\mathbf a}}, \mathbf A_{\bm \gamma_{\mathbf a }} \right)$ and $\left(\mathbf C_{{\bm \gamma}_{\bm \tau}}, \mathbf A_{\bm \gamma_{\bm \tau }} \right)$, which are $\mathbf a$ and $\bm \tau$, respectively, are compared to the ``ground truth". Since $\mathbf a$ is the acceleration with bias of IMU1, we compare it to the ground truth obtained by filtering the accelerometer measurements of IMU1 via a non-causal low-pass filter aimed to attenuate the measurement noise. Similar, $\bm \tau$ means the angular acceleration of IMU1, we compare it to the ground truth obtained by numerically differentiating the gyro measurements of IMU1 and then filtering it via a non-causal low-pass filter. Fig \ref{fig:dimu_domeg} shows the comparison results. It is seen that our estimation closely track the results obtained by non-causal filter without a noticeable delay. There is a slight overshoot when the acceleration suddenly changes its direction. This is due to the non-exact model (i.e., 4-th order integrator) being used for the handheld motion. This slight overshoot is also in agreement with the frequency domain analysis in Section. \ref{case_study}. Fig. \ref{fig:c2_offs} shows the relative translation $\mathbf c$ and rotation $\mathbf R$ between the two IMUs. The ground truth is obtained by hand measurements for translation and CAD design for rotation. It can be seen that the estimated offset (both translation and rotation) quickly converges. The final converged error is 5 $mm$ in translation and 1 degree in rotation, which are below the precision of hand measurements. 
\iffalse
\begin{figure}[h]
	\centering
	\includegraphics[width=1\columnwidth]{}
	\caption{The comparison of IMU1's acceleration measurement $\mathbf a_m$ and filtered output.
		\label{fig:c2_filtcompare}}
	\vspace{-0.3cm}
\end{figure}
\fi
\begin{figure}[t]
	\centering
	\includegraphics[width=1\columnwidth]{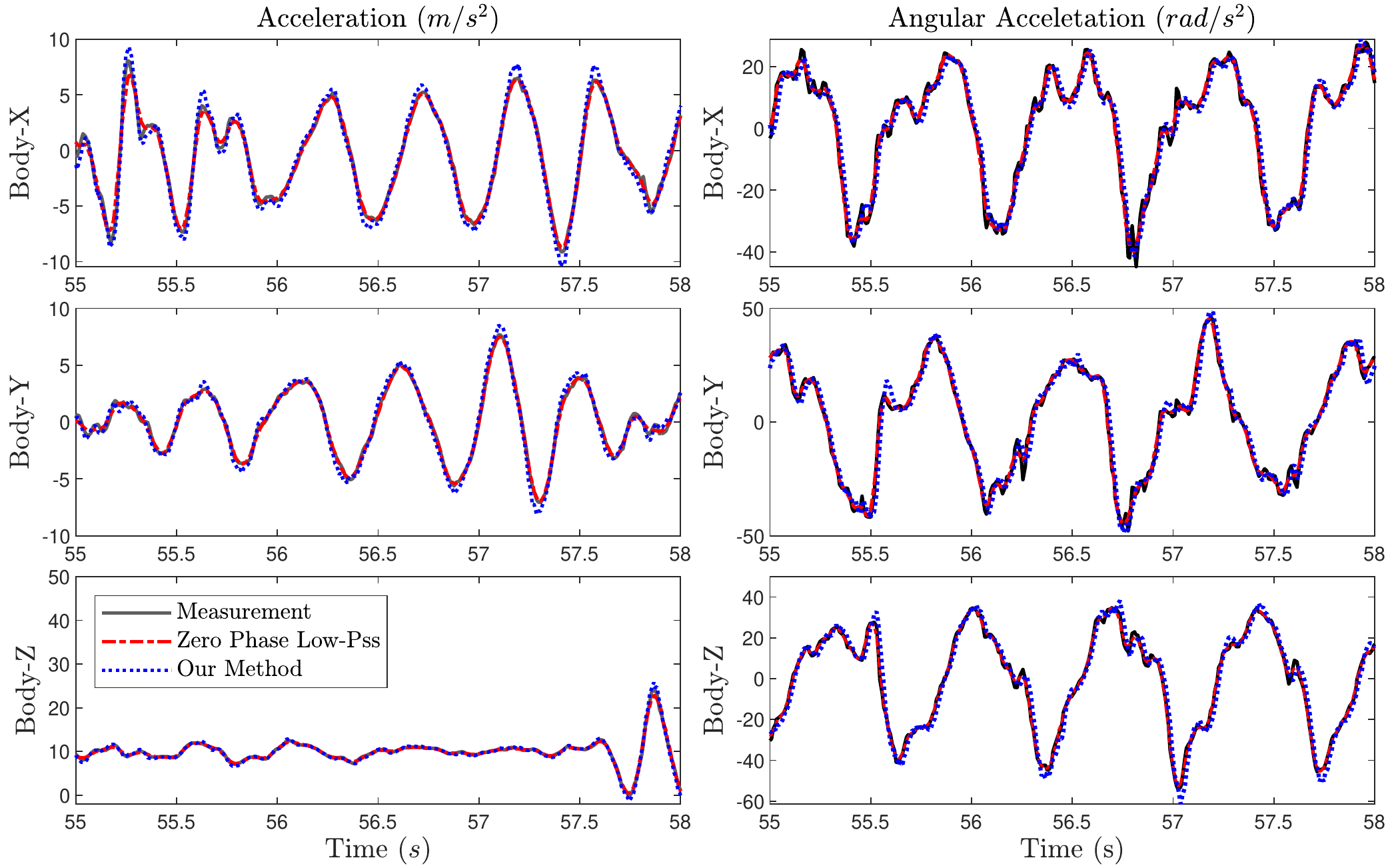}
	\caption{The estimated acceleration with bias and the estimated angular acceleration of IMU1.
		\label{fig:dimu_domeg}}
	\vspace{-0.3cm}
\end{figure}

\begin{figure}[t]
	\centering
	\includegraphics[width=1\columnwidth]{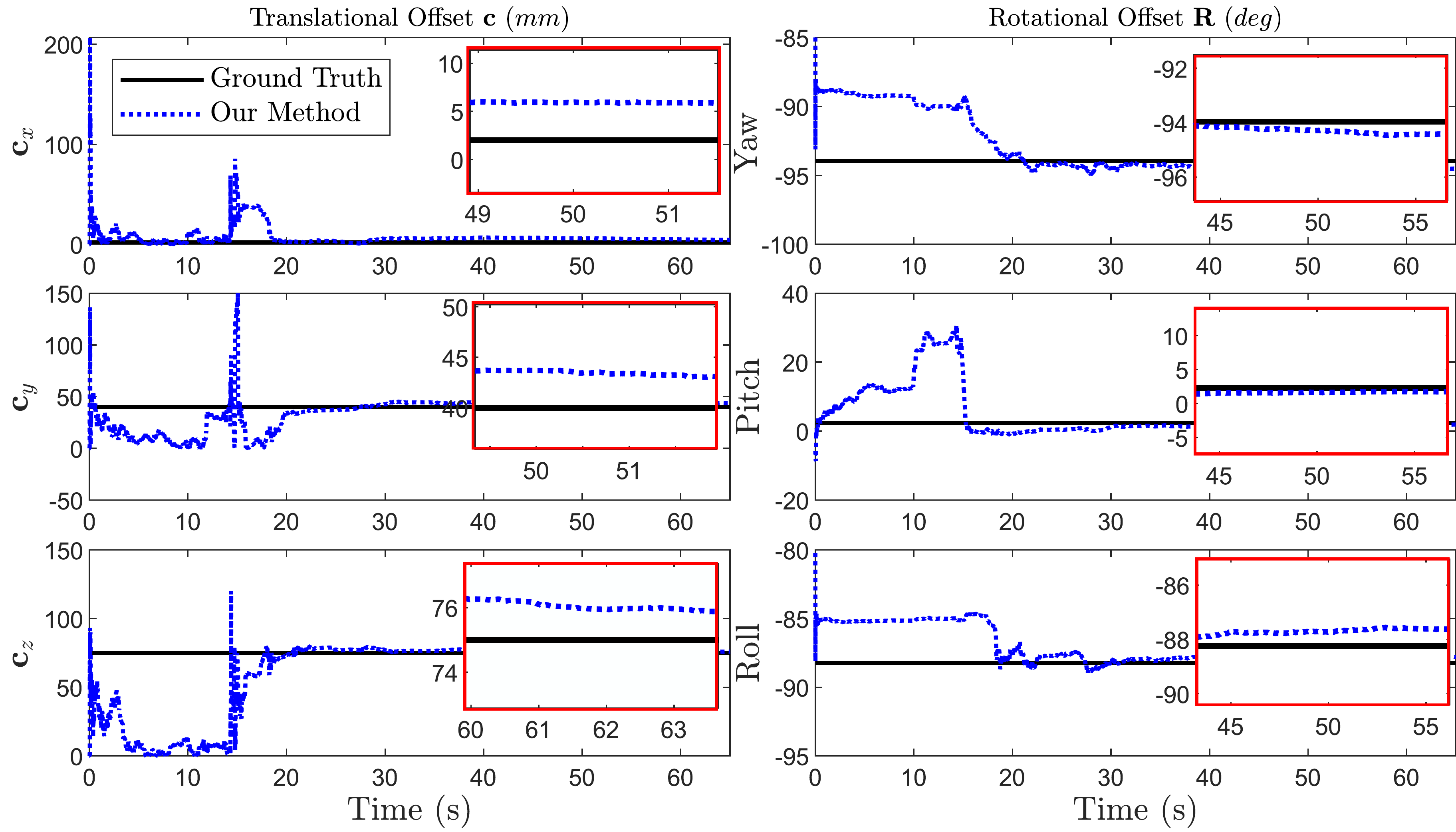}
	\caption{The estimation of translational offset and rotational offset.
		\label{fig:c2_offs}}
	\vspace{-0.3 cm}
\end{figure}

\subsection{Running Time}

An apparent drawback of the statistical model-based state estimation approach is that the system order is increased by the dimension of the statistical model. This will increase computation time of the EKF. Taking the application in Section \ref{sec:case1} as an example, if a 4-th order integrator (\ref{e:n_int_model}) is used for each channel of acceleration and angular velocity, the system dimension will increase by 24 orders. To verify the real-time running performance of the proposed approach, we implement the error-state EKF in C++. As a comparison, the running time of the normal EKF based on input formulation is also tested. The experiments run on a personal computer with processor of Intel Core i7-8700. As shown in Table. \ref{tab::c_runtime}, the state formulation EKF takes more than 3 times computation time than the input formulation, which is expected because of the increased system dimension. Nevertheless, the total running time is below $0.04 \ ms$, which is only a fraction of other modules (e.g., planning) and is sufficient for real-time running for most robotic applications. This can be further reduced by using the statistical model only when necessary. For example, when linear acceleration is required to be estimated while the angular velocity is not, then a hybrid input and state formulation could be used where the gyro measurements are formulated as input and the accelerometer measurements are formulated as outputs with an augmented statistical model. 

\begin{table}[htbp]
\footnotesize
\centering
\caption{Comparison of the Average Run Time}
\label{tab::c_runtime}
\tabcolsep 0.08in
\begin{tabular}{ccccc}
\hline
Methods   &Predict   &Update   & \tabincell{c}{Total}\\ \hline
\tabincell{c}{State formulation}   &0.0174 $ms$ & 0.0200 $ms$ & 0.0374 $ms$\\ \hline
%\rule{0pt}{15pt}
\tabincell{c}{Input formulation} & 0.0050 $ms$ & 0.0056 $ms$ & 0.0106 $ms$\\ \hline
\end{tabular}
\end{table}

\section{Conclusion}
This paper proposed to modeling a robot's dynamics by a statistical model. Such a statistical model is robust and versatile, being applicable to different robotic systems without accurately modeling the robot's specific first principle dynamics. By augmenting the statistical model to the robot's kinematic equation, it enables to simultaneously estimate the dynamics state with low latency in addition to the kinematics states and extrinsic parameters. Extensive simulations and experiments on different robots platforms for different applications were conducted and the results supports the proposed methods. The augmented system requires more computations due to the increased state dimension, but the overall computation time is still sufficiently small for real-time implementations. The proposed method is suitable for applications where dynamic states (e.g., translational or angular accelerations) are needed in real-time. 

Besides the practical contribution, this paper also proposed new theoretical tools to ease the observability analysis of robotic systems operating on manifolds and in high dimension. A new paradigm which calculates Lie derivatives and partial differentiation directly on the respective manifolds is developed. This new paradigm is simpler, more natural and scalable to systems of high dimensions. Furthermore, a novel {\it thin} set concept was introduced to characterize the unobservable subspace of the system states. These lay theoretical foundations for observability analysis for robotic systems of high dimension.

\appendix
\begin{theorem}~\label{def:thin}
Let $p(\mathbf x)$ be a multivariate polynomial of $\mathbf x = \begin{bmatrix}
x_1, \cdots, x_n
\end{bmatrix}^T \in \mathbb{R}^n$. If $p(\mathbf x)$ is not constant zero, then
$$U = \{ \mathbf x \in \mathbb{R}^n | p(\mathbf x) = 0 \}$$
is a thin set in $\mathbb R^n$ by Definition \ref{thin}.
\end{theorem}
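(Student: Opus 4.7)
The plan is to prove this by induction on the number of variables $n$, exploiting the fact that a nonzero univariate polynomial has only finitely many roots. Recall that by Definition~\ref{thin}, it suffices to show that the zero set $U$ contains no open subset $V \subset \mathbb{R}^n$; equivalently, $U$ contains no open box of the form $(a_1,b_1)\times\cdots\times(a_n,b_n)$.

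For the base case $n=1$, a nonzero polynomial in a single real variable has at most $\deg(p)$ real roots, hence $U$ is finite and thus thin (no finite set can contain an open interval). This step is entirely classical and requires no elaboration.

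For the inductive step, assume the result holds for polynomials in $n-1$ variables, and suppose for contradiction that the zero set of a nonzero $p(x_1,\dots,x_n)$ contains an open box $B = (a_1,b_1)\times\cdots\times(a_n,b_n)$. Expand $p$ as a polynomial in $x_n$ with coefficients in the remaining variables:
\begin{equation}\nonumber
p(x_1,\dots,x_n) = \sum_{k=0}^{d} p_k(x_1,\dots,x_{n-1})\, x_n^{k},
\end{equation}
where at least one $p_k$ is not identically zero since $p \not\equiv 0$. For each fixed $(x_1,\dots,x_{n-1}) \in (a_1,b_1)\times\cdots\times(a_{n-1},b_{n-1})$, the univariate polynomial $p(x_1,\dots,x_{n-1},\cdot)$ in $x_n$ vanishes on the entire interval $(a_n,b_n)$. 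By the $n=1$ case, it must be the zero polynomial, so $p_k(x_1,\dots,x_{n-1}) = 0$ for every $k$ and every $(x_1,\dots,x_{n-1})$ in the open box $(a_1,b_1)\times\cdots\times(a_{n-1},b_{n-1}) \subset \mathbb{R}^{n-1}$. Applied to any $k$ with $p_k \not\equiv 0$, this contradicts the induction hypothesis and completes the proof.

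There is no real obstacle here: the argument is short and the only nontrivial fact invoked is the fundamental theorem of algebra's consequence that a nonzero univariate polynomial has finitely many roots. An alternative route would be to observe that $p$ is real-analytic, so vanishing on an open set forces $p \equiv 0$ by the identity theorem, but the induction argument above is self-contained and more in keeping with the elementary character of Definition~\ref{thin}.
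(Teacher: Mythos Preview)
Your proof is correct, but it takes a genuinely different route from the paper's. The paper argues by contradiction via differentiation: assuming $p$ vanishes on an open set $V$, all mixed partial derivatives of $p$ also vanish on $V$; writing $p=\sum_i a_i\prod_j x_j^{k_{ij}}$, they then apply the specific derivative $\partial^{m_i}/\partial x_n^{k_{in}}\cdots\partial x_1^{k_{i1}}$ to isolate each coefficient $a_i$ and conclude $a_i=0$. Your argument instead inducts on the number of variables, expanding $p$ in the last variable and using the one-variable base case to force all coefficient polynomials $p_k(x_1,\dots,x_{n-1})$ to vanish on an $(n{-}1)$-dimensional box, which triggers the induction hypothesis. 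Both are standard and elementary; your version is arguably cleaner, since the paper's step ``$\partial^{m_i}p/\partial x_n^{k_{in}}\cdots\partial x_1^{k_{i1}}=a_i$'' is slightly imprecise as written (the derivative also picks up contributions from monomials of higher multi-degree, so one really needs to order the monomials and peel them off from the top), whereas your induction avoids that bookkeeping entirely. Your remark about the identity theorem for real-analytic functions is also a valid third route.
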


\begin{proof}
Assume $U$ is not thin, then there must exist an open  subset $V \subset \mathbb R^n$ contained in $U$ such that
$$p(\mathbf x) = 0, \forall \mathbf x \in V.$$

Since this equation holds on the entire open subset $V$ on which $p(\mathbf x)$ is infinitely differentiable, we have 
$$\frac{\partial^m p(\mathbf x)}{\partial t_m \cdots \partial t_1}  = 0;\  m \geq 0,  \ t_i \in \{x_1, \cdots, x_n \}, \ \forall {\mathbf x} \in V.$$

Since $p(\mathbf x)$ is a multivariate polynomial, it can be expressed as
$$p(\mathbf x) = \sum_{i=1}^{m} a_i \left( \prod_{j=1}^{n} x_j^{k_{ij}} \right); \ k_{ij} \in \{0, 1, \cdots  \} $$

Substituting $p(\mathbf x)$ into the above partial differentiation and letting $m_i = \sum_{j=1}^{n} k_{ij}$ leads to
$$\frac{\partial^{m_i} p(\mathbf x)}{\partial x_n^{k_{in}} \cdots \partial x_1^{k_{i1}}} = a_i = 0; \ i = 1, \cdots, m$$
and $p(\mathbf x) \equiv 0$, which contradicts with $p(\mathbf x)$ not being constant zero. 
\end{proof}

\begin{lemma}~\label{lemma_Omega_dot}
Let $\bm \Omega=\lfloor {\bm \omega}\rfloor^2 +\lfloor{\bm \omega^{(1)}}\rfloor$. If $i \geq 2$ is the first order of derivative such that $\bm \omega^{(i)} \neq \mathbf 0$ and $j > i$ is the first order of derivative such that $\bm \omega^{(j)} \nparallel \bm \omega^{(i)}$, then the matrix
$$\bm \Theta = \begin{bmatrix}
\bm \Omega^{(i-1)} \\
\bm \Omega^{(j-1)}
\end{bmatrix}$$ has full column rank. 
\end{lemma}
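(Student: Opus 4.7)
The plan is to reduce the full column rank condition on $\bm\Theta$ to a single cross product not vanishing, by simplifying the two block rows in turn. First, I would apply the Leibniz product rule to the definition $\bm\Omega = \lfloor\bm\omega\rfloor^2 + \lfloor\bm\omega^{(1)}\rfloor$ to obtain, for every $k \geq 0$,
\[
\bm\Omega^{(k)} = \sum_{\ell=0}^{k}\binom{k}{\ell}\lfloor\bm\omega^{(\ell)}\rfloor\,\lfloor\bm\omega^{(k-\ell)}\rfloor + \lfloor\bm\omega^{(k+1)}\rfloor .
\]
By the hypothesis that $i$ is the smallest order with $\bm\omega^{(i)}\neq\mathbf 0$, we have $\bm\omega^{(\ell)} = \mathbf 0$ for every $0 \leq \ell \leq i-1$. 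Specialising the formula above to $k=i-1$ therefore kills every bilinear summand (each such term carries at least one factor $\lfloor\bm\omega^{(\ell)}\rfloor$ with $\ell\in[0,i-1]$), leaving
\[
\bm\Omega^{(i-1)} = \lfloor\bm v\rfloor, \qquad \bm v := \bm\omega^{(i)} .
\]

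Since $\lfloor\bm v\rfloor$ has one-dimensional null space $\mathbb R \bm v$, any $\mathbf x$ with $\bm\Theta\mathbf x = \mathbf 0$ must take the form $\mathbf x = t\bm v$ for some scalar $t$. Hence full column rank of $\bm\Theta$ reduces to showing that $\bm\Omega^{(j-1)}\bm v$ is nonzero. I would again expand $\bm\Omega^{(j-1)}$ by Leibniz and evaluate each term on $\bm v$. For every $\ell\in[0,j-1]$, the inner factor $\bm\omega^{(j-1-\ell)}\times\bm v$ vanishes: by minimality of $i$, $\bm\omega^{(k)}=\mathbf 0$ for $0\le k<i$; and by minimality of $j$, $\bm\omega^{(k)}$ is a scalar multiple of $\bm v$ for $i\le k<j$ (possibly zero). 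Every bilinear term is therefore annihilated, and only the terminal contribution survives:
\[
\bm\Omega^{(j-1)}\bm v = \lfloor\bm\omega^{(j)}\rfloor\bm v = \bm\omega^{(j)}\times\bm v,
\]
which is nonzero precisely because $\bm\omega^{(j)}\nparallel\bm v$. This forces $t = 0$, and $\bm\Theta$ has full column rank.

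I do not anticipate any serious obstacle: the argument is essentially careful bookkeeping in the Leibniz expansion, and the definitions of $i$ and $j$ are exactly tailored to zero out every unwanted term. The one point requiring mild care is noticing that ``parallel to $\bm v$'' in the definition of $j$ covers the case $\bm\omega^{(k)}=\mathbf 0$ for some $i<k<j$, which is still annihilated when crossed with $\bm v$; likewise one must verify that the hypothesis $i\ge 2$ extends to $\bm\omega^{(0)}=\bm\omega=\mathbf 0$, which is what guarantees that the $\ell=j-1$ bilinear term (the only potentially non-annihilated one) also contributes zero rather than the spurious $c_{j-1}\bm v\times(\bm\omega\times\bm v)$.
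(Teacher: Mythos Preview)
Your proposal is correct and follows essentially the same approach as the paper: both arguments use the Leibniz expansion of $\bm\Omega^{(k)}$, reduce $\bm\Omega^{(i-1)}$ to $\lfloor\bm\omega^{(i)}\rfloor$, and then show that any nonzero null vector of $\bm\Theta$ would have to be parallel to both $\bm\omega^{(i)}$ and $\bm\omega^{(j)}$. The only cosmetic difference is that the paper first simplifies $\bm\Omega^{(j-1)}$ symbolically (splitting into the cases $2i\le j-1$ and $2i>j-1$) before applying it to the candidate null vector, whereas you apply $\bm\Omega^{(j-1)}$ directly to $\bm v=\bm\omega^{(i)}$ and kill the bilinear terms via the inner cross product, which neatly avoids that case distinction.
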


\begin{proof}
It is shown that $\bm \Omega^{(m)} = \sum_{k=0}^{m}\text C_{k}^{m}\lfloor \bm \omega^{(k)}\rfloor\lfloor \bm \omega^{(m-k)}\rfloor + \lfloor \bm \omega^{(m+1)}\rfloor$. And as indicated in the description of this Lemma, 
\begin{equation}\notag
\begin{array}{lc}
\left\{\begin{array}{ll}
\bm \omega^{(k)} = \mathbf 0& k \in \{0,1,\cdots,i-1\}\\
\bm \omega^{(i)} \neq \mathbf 0&\\
\bm \omega^{(k)} = \mu_k\bm \omega^{(i)}& k\in\{i+1, \cdots, j-1\}\\
\bm \omega^{(j)} \nparallel \bm \omega^{(i)}
\end{array}\right.
\end{array}
\end{equation}
Then
\begin{equation}\notag
\begin{aligned}
\bm \Omega^{(i-1)} &= \lfloor \bm \omega^{(i)}\rfloor\\
\bm \Omega^{(j-1)} &= \left\{\begin{array}{ll}
\Lambda^{i}_{j-1}\lfloor \bm \omega^{(i)}\rfloor^2+\lfloor \bm \omega^{(j)}\rfloor&{2i \leq j-1}\\

 \lfloor \bm \omega^{(j)}\rfloor&\text{otherwise}
\end{array}\right.
\end{aligned}
\end{equation}
where $\Lambda^{i}_{j-1}\!\!=\!\! \sum_{k=i}^{j-1-i}\text C_{k}^{j-1}\mu_k\mu_{j-1-k}$.

Assuming $\bm \Theta$ does not have full column rank, then $\exists\; \mathbf v \neq \mathbf 0$, s.t. $\bm \Theta\cdot \mathbf v = \mathbf 0$. Then we have $\mathbf v \parallel \bm \omega^{(i)}$ from $\bm \Omega^{(i-1)} \mathbf v = \mathbf0$ and $\mathbf v \parallel \bm \omega^{(j)}$ from $\bm \Omega^{(j-1)} \mathbf v = \mathbf0$. This is, however, contradictory to the condition that $\bm \omega^{(j)}\nparallel\bm \omega^{(i)}$. Thus $\bm \Theta$ has full column rank.
\end{proof}

\subsection{Proof of Proposition 1}\label{appix:proposition1}

\begin{proof}
Let $\mathbf b = \mathbf a+\bm \Omega \mathbf c$, the acceleration at the position reference point and seen in the body frame, and $\mathbf g_m = \mathbf R^T \mathbf h_1^{(m+2)}$, the $(m+2)$-th time derivative of the position reference point and seen in the body frame, we can derive, by induction, that

\begin{equation}\notag
    \begin{aligned}
    \mathbf g_0 &= \mathbf b \\
    \mathbf g_m &= \mathbf b^{(m)} + \sum_{k=0}^{m-1} \bm \alpha_k^m \mathbf g_k; \ m \geq 1
    \end{aligned}
\end{equation}
where $\bm\alpha_k^{m}$ is defined recursively as below:
\begin{equation}\notag
\begin{aligned}
\bm\alpha_0^1 &= \lfloor \bm \omega \rfloor \\
\bm\alpha_k^{m+1}\!&=\! \left\{\!\begin{array}{ll}
\dot{\bm\alpha}_k^{m}\! -\! \bm\alpha_k^{m}\lfloor \bm \omega\rfloor\quad  & {k=0}\\
\dot{\bm\alpha}_k^{m}\! -\! \bm\alpha_k^{m}\lfloor \bm \omega\rfloor
\!+\! \bm\alpha_{k-1}^{m} &{1 \leq k \leq m-1} \\
\lfloor \bm \omega\rfloor\! +\! \bm\alpha_{k-1}^{m}& k = m
\end{array}\right.
\end{aligned}
\end{equation}

It is seen that $\bm \alpha^m_k$ are polynomials of $\bm \omega^{(k)}$ and $\mathbf g_m$ are polynomials of $\bm \omega^{(k)}$, $\mathbf a^{(k)}$, and $\mathbf c$. Hence,
\begin{equation}\notag
    \begin{aligned}
    \nabla_{\bm \theta} \mathbf g_m = \mathbf 0; \nabla_{\mathbf c} \mathbf g_m = \bm \Omega^{(m)} + \sum_{k=0}^{m-1} \bm \alpha_k^m( \nabla_{\mathbf c} \mathbf g_k) \\
    \nabla_{\mathbf a} \mathbf g_m = 
    \! \left\{\!\begin{array}{ll}
\mathbf I_3  & {m=0}\\
\sum_{k=0}^{m-1} \bm \alpha_k^m (\nabla_{\mathbf a} \mathbf g_k) &{ m \geq 1}
\end{array}\right.
    \end{aligned}
\end{equation}

By definition,
\begin{equation}\notag
\begin{aligned}
    & \left( \mathcal L_{\mathbf f_0}^1\mathbf h_1 \right) ({\mathbf x}) =  \dot{\mathbf{h}}_1(\mathbf{x}), \text{subject to} \ \dot{\mathbf{x}}= \mathbf f_0(\mathbf{x}) 
\end{aligned}
\end{equation}
where $\mathbf f_0 (\mathbf x)$ abbreviates (\ref{e:state_formula_model_detail}), we have
\begin{equation}\notag
\begin{aligned}
    \left( \mathcal L^{m+2}_{\mathbf f_0 \cdots \mathbf f_0}\mathbf h_1 \right) ({\mathbf x}) &=  {\mathbf{h}}^{(m+2)}_1(\mathbf{x}), \text{subject to} \ \dot{\mathbf{x}}= \mathbf f_0(\mathbf{x})  \\
    &= \mathbf R \mathbf g_m
\end{aligned}
\end{equation}

Then, $\mathcal{O}_{S_1}$ in (\ref{e:O_state_E}) is detailed as:
\begin{equation}\notag
\begin{aligned}
\mathcal{O}_{S_1}&=\begin{bmatrix}
\lfloor \bm \beta\rfloor & \mathbf 0 & \mathbf 0\\
{}_{n_1}^2 \mathcal G_{\bm \theta}^{\mathbf h_1} &{}_{n_1}^2 \mathcal G_{\mathbf c}^{\mathbf h_1} & {}_{n_1}^2\mathcal G_{\mathbf a}^{\mathbf h_1}
   \end{bmatrix} \\
{}_{n_1}^2 \mathcal G_{\bm \theta}^{\mathbf h_1}   & = 
\begin{bmatrix}
   \vdots \\
   \nabla_{\bm \theta} \left( \mathcal L^{m+2}_{\mathbf f_0 \cdots \mathbf f_0}\mathbf h_1 \right) \\
   \vdots
   \end{bmatrix} = 
\begin{bmatrix}
   \vdots \\
   - \mathbf R \lfloor \mathbf g_m \rfloor \\
   \vdots
   \end{bmatrix} \\
 {}_{n_1}^2 \mathcal G_{\mathbf c}^{\mathbf h_1}   & = 
\begin{bmatrix}
   \vdots \\
   \nabla_{\mathbf c} \left( \mathcal L^{m+2}_{\mathbf f_0 \cdots \mathbf f_0}\mathbf h_1 \right) \\
   \vdots
   \end{bmatrix} = 
\begin{bmatrix}
   \vdots \\
   \mathbf R \nabla_{\mathbf c} \mathbf g_m  \\
   \vdots
   \end{bmatrix} \\
    {}_{n_1}^2 \mathcal G_{\mathbf a}^{\mathbf h_1}   & = 
\begin{bmatrix}
   \vdots \\
   \nabla_{\mathbf a} \left( \mathcal L^{m+2}_{\mathbf f_0 \cdots \mathbf f_0}\mathbf h_1 \right) \\
   \vdots
   \end{bmatrix} = 
\begin{bmatrix}
   \vdots \\
   \mathbf R \nabla_{\mathbf a} \mathbf g_m  \\
   \vdots
   \end{bmatrix}
\end{aligned}
\end{equation}
where $m = 0,1, \cdots, n_1 -2$. Since the rotation matrix $\mathbf R$ does not affect the rank, we have
\begin{equation}\notag
\begin{aligned}
\mathcal{O}_{S_1}&\sim \begin{bmatrix}
   \lfloor \bm \beta\rfloor & \mathbf 0 & \mathbf 0\\
   -\lfloor{\mathbf g}_{0}\rfloor &\bm \Omega^{(0)} &\mathbf I_3\\
   \vdots &\vdots &\vdots\\
    -\lfloor{\mathbf g}_{m}\rfloor &\bm \Omega^{(m)} +\bm \Pi_{\mathbf c}^{m}  &\bm \Pi_{\mathbf a}^{m}\\
   \vdots &\vdots &\vdots
   \end{bmatrix}
\end{aligned}
\end{equation}
where $m = 1, 2, \cdots, n_1 -2$, $\bm\Pi_{\mathbf c}^{m} = \sum_{k=0}^{m-1}\bm \alpha_{k}^{m}\left(\nabla_{\mathbf c}{ \mathbf g_k}\right)$ and $\bm\Pi_{\mathbf a}^{m} = \sum_{k=0}^{m-1}\bm \alpha_{k}^{m}\left(\nabla_{\mathbf a}{ \mathbf g_k}\right)$ are linearly dependent on preceding rows and can therefore be eliminated by Gaussian elimination. Hence,
\begin{equation}\notag
\begin{aligned}
\mathcal{O}_{S_1}
   &\sim \begin{bmatrix}
   \lfloor \bm \beta\rfloor & \mathbf 0 & \mathbf 0\\
   -\lfloor{\mathbf g}_{0}\rfloor &\bm \Omega^{(0)} & \mathbf I_3\\
   \vdots &\vdots &\vdots\\
    -\lfloor{\mathbf g}_{m}\rfloor+\sum_{k=0}^{m-1}\bm\alpha_k^m\lfloor{\mathbf g}_{k}\rfloor &\bm \Omega^{(m)}&\mathbf 0\\
   \vdots &\vdots &\vdots
   \end{bmatrix}
\end{aligned}
\end{equation}

Since column 3 only contains an identity matrix, the other terms in row 2 can be eliminated to zero by column transformation. Therefore, the $\mathcal O_{S_1}$ has full column rank if and only if the following sub-matrix $\mathcal O^E_{S_1}$ have full column rank.

\begin{equation}\notag
\begin{aligned}
\mathcal O^E_{S_1}
\!\!=\!\! \begin{bmatrix}
   \lfloor \bm \beta\rfloor & \mathbf 0 & \mathbf 0\\
   \vdots &\vdots &\vdots\\
    -\lfloor{\mathbf g}_{m}\rfloor+\sum_{k=0}^{m-1}\bm\alpha_k^m\lfloor{\mathbf g}_{k}\rfloor &\bm \Omega^{(m)}&\mathbf 0\\
   \vdots &\vdots &\vdots
   \end{bmatrix}; \ m \geq 1
\end{aligned}
\end{equation}

Notice that elements of $\mathcal O^E_{S_1}$ are polynomial functions of $\bm \omega^{(k)}, \mathbf a^{(k)}$, and $\mathbf c$ for a given $\bm \beta = \mathbf R^T\mathbf e_1$, and that $\bm \omega^{(k)}$ and $\mathbf a^{(k)}$ are further linearly dependent on $\bm \gamma_{ \bm \omega}$ and $\bm \gamma_{\mathbf a}$, respectively, thus elements of $\mathcal O^E_{S_1}$ are polynomial functions of the following subset of states:
$$\mathbf x_{\text{sub}} = \begin{bmatrix}{}
\bm \gamma_{\bm \omega} \\
\bm \gamma_{\mathbf a} \\
\mathbf c
\end{bmatrix} \in \mathbb{R}^{n_{\omega} +n_{ a} +  3}$$

Denote
$$\text{Sub}(\mathcal O^E_{S_1}) $$
the set of all square matrix whose rows are drawn from $\mathcal O^E_{S_1}$. Since elements of $\mathcal O^E_{S_1}$ are polynomials of $\mathbf x_{\text{sub}}$, the determinant $\det(\mathbf S), \forall \mathbf S \in \text{Sub}(\mathcal O^E_{S_1})$ is also a polynomial function of $\mathbf x_{\text{sub}}$. Then if $\mathcal O^E_{S_1}$ is rank-deficient, there must have
$$\det(\mathbf S)(\mathbf x_{\text{sub}}) = 0, \forall \mathbf S \in \text{Sub}(\mathcal O^E_{S_1}).$$
which forms infinite number of equality constraints. For each constraint, $\det(\mathbf S)(\mathbf x_{\text{sub}})$ is a polynomial of $\mathbf x_{\text{sub}}$ and its roots are either the whole space of $\mathbb{R}^{n_{\omega} +n_{ a} + 3}$ if $\det(\mathbf S)(\mathbf x_{\text{sub}})\equiv 0$ or otherwise lie on a {\it thin} subset (see Theorem \ref{def:thin}). 

Now we show that $\mathcal O^E_{S_1}$ is not constantly rank-deficient by giving a particular sufficient condition as follows.
\begin{enumerate}[label=(\roman*)]
    \item $i \geq 3$ is the first order of derivative such that $\bm \omega^{(i)} \neq \mathbf 0$ and $j > i $ is the first order of derivative such that $\bm \omega^{(j)} \nparallel \bm \omega^{(i)}$  
    \item $\exists k \in \{ 1, \cdots, i - 2\}, \ s.t. \ \mathbf a^{(k)}\nparallel \bm \beta$
\end{enumerate}

From Lemma~\ref{lemma_Omega_dot}, condition (i) ensures the matrix 
$$\bm \Theta = \begin{bmatrix}
\bm \Omega^{(i-1)} \\
\bm \Omega^{(j-1)}
\end{bmatrix}$$
has full column rank and that $\bm \Omega^{(k)} = \mathbf 0$. Then the matrix $\mathcal{O}_{S_1}^E$ can be reduced to
\begin{equation}
    \mathcal{O}_{S_1}^E\!\sim \!\! \left[\!\! \begin{array}{c; {1pt/1pt}c; {1pt/1pt}c}
   \bm \Psi & \mathbf 0\\
    \vdots & \vdots \\\!
    \bullet \! & \bm \Theta \\
    \vdots & \vdots  
    \end{array}\!\! \right]  \nonumber
\end{equation}
where $\bm \Psi\!\! =\!\! \begin{bmatrix}
\lfloor \bm \beta \rfloor\\
-\lfloor\mathbf a^{(k)}\rfloor
\end{bmatrix}$.

Then, condition (ii) ensures $\bm \Psi$ has full column rank and hence the matrix $\mathcal O^E_{S_1}$ has full column rank. This means $\exists \mathbf S \in \text{Sub}(\mathcal O^E_{S_1})$ and $\bar{\mathbf x}_{\text{sub}} \in \mathbb{R}^{n_{\omega} +n_{ a} + 3}$ such that $\det(\mathbf S)(\bar{\mathbf x}_{\text{sub}}) \neq 0$. Moreover, since $\det(\mathbf S)(\mathbf x_{\text{sub}})$ is a continuous function of $\mathbf x_{\text{sub}}$, so if $\det(\mathbf S)({\mathbf x}_{\text{sub}}) \neq 0$ at point $\bar{\mathbf x}_{\text{sub}}$, it must be so in a neighbour of that point, meaning that there are infinite number of submatrix $\mathbf S \in \mathcal O_{\text{sub}}$ whose determinant is not constant zero and hence the solution of each $\det(\mathbf S) = 0$ lies on a {\it thin} subset of $\mathbb{R}^{n_{\omega} +n_{ a} + 3}$. As a result, the states $\mathbf x_{\text{sub}}$ satisfying $\mathcal O_{\text{sub}}$ being rank-deficient is the joint of the infinite number of {\it thin} subset and the resultant solution space is even thinner, i.e., a {\it thin} subset in $\mathbb{R}^{n_{\omega} +n_{ a} + 3}$. 
\end{proof}

\subsection{Proof of Proposition 2}\label{appix:proposition2}

\begin{proof}
Recall that $\mathbf h_2 = \mathbf R^T \mathbf e_1 \triangleq \bm \beta$, the matrix $\mathcal O_{S_2} = {}_{n_2}^1\mathcal G_{\bm \omega}\mathbf h_2=\begin{bmatrix}\nabla_{\bm \omega}\bm \beta^{(1)} & \hdots & \nabla_{\bm \omega}\bm \beta^{(n_2)} \end{bmatrix}^T$. Since $\dot{\bm \beta}=\lfloor \bm \omega\rfloor^T \bm \beta=\lfloor \bm \beta\rfloor \bm \omega$, each term $\nabla_{\bm \omega}\bm \beta^{(m)}, m\in \{1, \cdots, n_2 \}$ can be computed as:
\begin{equation}\nonumber
\begin{aligned}
\!\!\nabla_{\bm \omega}\bm \beta^{(m)}\!&=\!\frac{\partial}{\partial \bm \omega}\frac{d^{m}\bm \beta}{dt^{m}} 
=\!\frac{\partial}{\partial \bm \omega}\frac{d^{m-1}(\dot{\bm \beta})}{dt^{m-1}} =\!\frac{\partial}{\partial \bm \omega}\frac{d^{m-1}(\lfloor \bm \omega \rfloor^T \bm \beta)}{dt^{m-1}}
\\
&=\!\frac{\partial}{\partial \bm \omega}\!\sum _{i=0}^{m-1}\text C_{i}^{m-1} \lfloor \bm \omega^{(i)} \rfloor^T\! \bm \beta^{(m-1 - i)}
\\
&=\!\sum _{i=1}^{m}\text C_{i-1}^{m-1} \lfloor \bm \beta^{(m-i)}\rfloor \frac{\partial}{\partial \bm \omega}\!\bm \omega^{(i-1)} \\&\quad\quad + \!\sum _{i=1}^{m}\text C_{i-1}^{m-1} \! \lfloor \bm \omega^{(i-1)} \rfloor^T\!\frac{\partial}{\partial \bm \omega}\!\bm \beta^{(m-i)}
\\
&=\!\lfloor \bm \beta^{(m-1)} \rfloor\!+\!\sum _{i=1}^{m}\!\underbrace{\text C_{i-1}^{m-1} \lfloor \bm \omega^{(i-1)} \rfloor^T\! \nabla_{\bm \omega}\bm \beta^{(m-i)}}_{\mathbf K_i}
\end{aligned}
\end{equation}

Notice that terms $\mathbf K_i, 0 \leq i \leq m -1$ can be eliminated by preceding rows using the Gaussian elimination and $\mathbf K_m = \mathbf 0$, the only left term is $\lfloor \bm \beta^{(m-1)} \rfloor$. Therefore
\begin{equation}\nonumber
    \mathcal O_{S_2} \sim \begin{bmatrix} \lfloor \bm \beta\rfloor \\ \hdots \\ \lfloor \bm \beta^{(n_2-1)}\rfloor \end{bmatrix}
\end{equation}

Since $\mathcal O_{S_2}$ is rank-deficient, then $\exists \mathbf v \neq \mathbf 0$ such that $\mathcal O_{S_2} \mathbf v=\mathbf 0$, which implies $\lfloor \bm \beta^{(m)} \rfloor \mathbf v=\mathbf 0, \forall m \geq 0 $. In particular, let $m=0$, then $\bm \beta \parallel \mathbf v$; let $m=1, \lfloor \bm \beta^{(1)} \rfloor \mathbf v = \mathbf 0 \implies \lfloor \lfloor \bm \omega \rfloor^T \bm \beta \rfloor \bm \beta = \mathbf 0 \implies \bm \omega \parallel \bm \beta$. Now assume that $\bm \omega^{(m)} \parallel \bm \beta$ is true for $0 \leq m \leq k - 1$, then
\begin{equation}\nonumber
\begin{aligned}
    &\lfloor\bm \beta^{(k+1)}\rfloor\bm \beta = \mathbf 0 \Rightarrow \lfloor \frac{d^k \dot{\bm \beta}}{dt^k}  \rfloor \bm \beta = \mathbf 0 \\
    &\Rightarrow\sum_{i=0}^{k}\lfloor\lfloor\bm \omega^{(i)}\rfloor^T\bm \beta^{(k-i)}\rfloor\bm \beta = \mathbf 0\\
    &\Rightarrow \sum_{i=0}^{k-1}\left(\lfloor\lfloor\bm \omega^{(i)}\rfloor^T\bm \beta^{(k-i)}\rfloor + \lfloor\lfloor\bm \omega^{(k)}\rfloor^T\bm \beta\rfloor\right)\bm \beta =\mathbf 0\\
    &\Rightarrow\!\!-\!\! \sum_{i=0}^{k-1} \underbrace{\!\left(\!\lfloor\!\bm \omega^{(i)}\!\rfloor\!\lfloor\bm\beta^{(k-i)}\rfloor\!\! -\!\! \lfloor\bm \beta^{(k-i)}\rfloor\!\lfloor\!\bm\omega^{(i)}\!\rfloor\!\right)\!\!\bm \beta\!\!}_{= \mathbf 0} +\! \lfloor\lfloor\bm \omega^{(k)}\rfloor^T\bm \beta\rfloor\bm\beta\! \!=\!\!\mathbf 0\\
    &\Rightarrow \lfloor\lfloor\bm \omega^{(k)}\rfloor^T\bm \beta\rfloor\bm\beta =\mathbf 0\\
    &\Rightarrow \bm \omega^{(k)} \parallel \bm \beta
    \end{aligned}
\end{equation}
Therefore, if $\mathcal O_{S_2}$ is rank-deficient, then $\bm \omega^{(k)} \parallel \bm \beta, \ \forall k \geq 0$. 
\end{proof}
\subsection{Proof of Lemma \ref{lemma:obs_interIMU}}~\label{append:interIMU}
\begin{proof}
Define the state vector $\mathbf x$
\begin{equation}
    \mathbf x^T = \begin{bmatrix}
    \mathbf b_{\mathbf a}^T, \mathbf b_{\bm \omega}^T,  \mathbf c^T, \mathbf R^T, \bm \omega^T, \bm \gamma_{\bm \tau}^T, \bm \gamma_{\mathbf a}^T
    \end{bmatrix} \nonumber
\end{equation}
and rewrite the state equation (\ref{e:smodel2_min_append}) as $\dot{\mathbf x}= {\mathbf f}_0\left(\mathbf x\right)$, then the observability matrix is computed as
\begin{equation}
\begin{aligned}
    \mathcal{O}\!=\!\left[\!\begin{array}{c} \!\multirow{2}*{$^0_{n}\mathcal G^{\mathbf h_1}_{\mathbf x}$}\! \\
    \\ [1mm]\hdashline[1pt/1pt] \\
    [-1.5mm]\!{\!^0_{n}\mathcal G^{\mathbf h_2}_{\mathbf x}\!}\!\\ 
    [-1.5mm] \\
    [1mm]\hdashline[1pt/1pt]\\
    [-3mm] \!^0_{n}\mathcal G^{\mathbf h_3}_{\mathbf x}\!\end{array}\!\right]\!\!=\!\!\left[\!\begin{array}{ccccccc}
    \mathbf I_{3}\! & \mathbf 0 & \bullet & \bullet & \bullet &\bullet & \mathbf 0\\
    \mathbf 0 & \mathbf 0 & \!\!^1_{n}\mathcal G^{\mathbf h_1}_{\mathbf c}\!\! & \!\!^1_{n}\mathcal G^{\mathbf h_1}_{\bm \theta}\!\! & \!\!^1_{n}\mathcal G^{\mathbf h_1}_{\bm \omega}\!\! & \bullet & \bullet\\
    [1mm]\hdashline[1pt/1pt]  &&&&\\
    [-3mm] \mathbf 0 & \mathbf I_{3}\!\! & \mathbf 0 & \mathbf 0 & \mathbf I_3 & {\mathbf 0} & \mathbf 0 \\
    \mathbf 0 & \mathbf 0 & \mathbf 0 & \mathbf 0 & \mathbf 0 & \!\mathcal O_{\bm \gamma_{\bm \tau}}\!\! & \mathbf 0 \\
    [1mm]\hdashline[1pt/1pt]&&&& \\
    [-3mm] {\mathbf 0} & {\mathbf 0} & {\mathbf 0} & \mathbf 0 & \mathbf 0 & \mathbf 0 &\!\! \mathcal O_{\bm \gamma_{\mathbf a}}\!\!
        \end{array}\!\right]
   \end{aligned} \nonumber
\end{equation}
which has full column rank if and only if the following submatrix
\begin{equation}
    \mathcal{O}_{\text{sub}} = \left[ \begin{array}{c; {1pt/1pt}c; {1pt/1pt}c}
    ^1_{n}\mathcal G^{\mathbf h_1}_{\mathbf c} & ^1_{n}\mathcal G^{\mathbf h_1}_{\bm \theta} & ^1_{n}\mathcal G^{\mathbf h_1}_{\bm \omega}
    \end{array} \right] \nonumber
\end{equation}
has full column rank.

Let $\bm \Omega=\lfloor {\bm \omega}\rfloor^2 +\lfloor{\bm \omega^{(1)}}\rfloor$, then

\iffalse
\begin{subequations}
    \begin{alignat}{2}
    \mathcal{O}_{\text{sub}}\!&= \left[\!\! \begin{array}{c; {1pt/1pt}c; {1pt/1pt}c}
    \bm \Psi_{11} & \bm \Psi_{12}\! & \bm \Psi_{13} \\
    \vdots & \vdots & \vdots \\
    \bm \Psi_{n1}\! & \bm \Psi_{n2} \! & \bm \Psi_{n3}
    \end{array}\!\! \right] \\
    \bm \Psi_{k1} \!&=\!  \bm \Omega^{(k)} \\
    \bm \Psi_{k2} \!&=\!  -\lfloor \mathbf a^{(k)} \!+\! \bm \Omega^{(k)} \mathbf c \rfloor \\
    \bm \Psi_{k2} \!&=\!  - \lfloor \lfloor \bm \omega^{(k)}\rfloor \mathbf c \rfloor + \lfloor \bm \omega^{(k)}\rfloor \lfloor \mathbf c \rfloor
    \end{alignat}
\end{subequations}
\fi

\begin{equation}
    \mathcal{O}_{\text{sub}}\!\! \sim \!\! \left[\!\! \begin{array}{c; {1pt/1pt}c; {1pt/1pt}c}
    \bm \Omega^{(1)} & -\lfloor \mathbf a^{(1)} \!+\! \bm \Omega^{(1)} \mathbf c \rfloor\! & \!\!- \lfloor \lfloor \bm \omega^{(1)}\rfloor \mathbf c \rfloor \!-\! \lfloor \bm \omega^{(1)}\rfloor \lfloor \mathbf c \rfloor\!\! \\
    \vdots & \vdots & \vdots \\
    \bm \Omega^{(n)}\! & \!-\lfloor \mathbf a^{(n)} + \bm \Omega^{(n)} \mathbf c \rfloor\!\! & \!- \lfloor \lfloor \bm \omega^{(n)}\rfloor \mathbf c \rfloor \!- \!\lfloor \bm \omega^{(n)}\rfloor \lfloor \mathbf c \rfloor \!\!
    \end{array}\!\! \right] \nonumber
\end{equation}
where the sign $\sim$ indicates equivalent rank. Notice that 1) the rank of $\mathcal O_{\text{sub}}$ depends on $\bm \omega^{(k)},\mathbf a^{(k)}$ (hence the extended states $\bm \gamma_{\bm \tau}, \bm \gamma_{\mathbf a}$, and state $\bm\omega$), and $\mathbf c$ only and is independent from the bias $\mathbf b_{\mathbf a}, \mathbf b_{\bm \omega}$, and relative rotation $\mathbf R$; and moreover 2) elements of $\mathcal O_{\text{sub}}$ are polynomial functions of $\bm \omega^{(k)}, k \geq 0, \mathbf a^{(k)}, k \geq 1$ and $\mathbf c$. Since $\bm \omega^{(k)}, k\geq 1$ are further linearly dependent on $\bm \gamma_{\bm \tau}$ and $ \mathbf a^{(k)}$ are further linearly dependent on $\bm \gamma_{\mathbf a}$,  elements of $\mathcal O_{\text{sub}}$ are polynomial functions of the following subset of states
$$\mathbf x_{\text{sub}} = \begin{bmatrix}{}
\bm \gamma_{\bm \tau} \\
\bm \gamma_{\mathbf a} \\
\bm \omega\\
\mathbf c
\end{bmatrix} \in \mathbb{R}^{n_{\tau} +n_{ a} +  6}$$

Denote
$$\text{Sub}(\mathcal O_{\text{sub}}) $$
the set of all square matrix whose rows are drawn from $\mathcal O_{\text{sub}}$. Since elements of $\mathcal O_{\text{sub}}$ are polynomials of $\mathbf x_{\text{sub}}$, the determinant $\det(\mathbf S), \forall \mathbf S \in \text{Sub}(\mathcal O_{\text{sub}})$ is also a polynomial function of $\mathbf x_{\text{sub}}$. Then if $\mathcal O_{\text{sub}}$ is rank-deficient, there must have
$$\det(\mathbf S)(\mathbf x_{\text{sub}}) = 0, \forall \mathbf S \in \text{Sub}(\mathcal O_{\text{sub}}).$$
which forms infinite number of equality constraints for $\mathbf x_{\text{sub}}$. For each constraint, $\det(\mathbf S)(\mathbf x_{\text{sub}})$ is a polynomial of $\mathbf x_{\text{sub}}$ and its roots are either the whole space of $\mathbb{R}^{n_{\tau} +n_{ a} + 6}$ if $\det(\mathbf S)(\mathbf x_{\text{sub}})\equiv 0$ or otherwise lie on a {\it thin} subset (see Theorem \ref{def:thin}). 

Now we show that $\mathcal O_{\text{sub}}$ is not constant rank-deficient if $\mathbf c \neq \mathbf 0$ by giving a particular sufficient condition as below.

\begin{enumerate}[label=(\roman*)]
    \item $i \geq 2$ is the first order of derivative such that $\bm \omega^{(i)} \neq \mathbf 0$ and $j > i + 1$ is the first order of derivative such that $\bm \omega^{(j)} \nparallel \bm \omega^{(i)}$, $\bm \omega^{(j-1)} = \mathbf 0$, and $\mathbf a^{(i-1)} \!\!+\! \bm \Omega^{(i-1)} \mathbf c = \mathbf a^{(j-1)} \!\!+\! \bm \Omega^{(j-1)} \mathbf c = \mathbf 0$. 
    \item $\exists k, l \geq 1 \ s.t.,  \ \bm \omega^{(k)} = \bm \omega^{(l)} = \mathbf 0$, $\mathbf a^{(k)} \!\!+\! \bm \Omega^{(k)} \mathbf c$ and $\mathbf a^{(l)} \!\!+\! \bm \Omega^{(l)} \mathbf c$ are not collinear, and $i-1, j-1, k, l$ are different from each other.
    \item $\exists m \geq j \ s.t.\ \bm \omega^{(m)} \nparallel \mathbf c$ and $m$ is different from $k, l, i - 1$ and $j-1$.
\end{enumerate}

From Lemma~\ref{lemma_Omega_dot} , condition (i) ensures the matrix
$$\bm \Theta = \begin{bmatrix}
\bm \Omega^{(i-1)} \\
\bm \Omega^{(j-1)}
\end{bmatrix}$$
has full column rank. Then, the matrix $\mathcal{O}_{\text{sub}}$ can be reduced to
\begin{equation}
    \mathcal{O}_{\text{sub}}\!\sim \!\! \left[\!\! \begin{array}{c; {1pt/1pt}c; {1pt/1pt}c}
    \vdots & \vdots & \vdots \\\!
    \bm \Theta \! & \mathbf 0 \! & \mathbf 0 \\
    \bullet \! & \bm \Phi \!\! & \mathbf 0 \\
    \bullet \! & \bullet \! & \!\! \bm \Psi \!\!  \\
    \vdots & \vdots & \vdots 
    \end{array}\!\! \right]  \nonumber
\end{equation}
where 
$$
\begin{aligned}
&\bm \Phi = \begin{bmatrix}
\!-\lfloor \mathbf a^{(k)} \!\!+\! \bm \Omega^{(k)} \mathbf c \rfloor \\
\!-\lfloor \mathbf a^{(l)} \!\!+\! \bm \Omega^{(l)} \mathbf c \rfloor 
\end{bmatrix} \\
&\bm \Psi = - \lfloor \lfloor \bm \omega^{(m)}\rfloor \mathbf c \rfloor \!\!-\!\!\lfloor \bm \omega^{(m)}\rfloor \lfloor \mathbf c \rfloor
\end{aligned}$$

Then, condition (ii) ensures $\bm \Phi$ has full column rank and condition (iii) ensures $\bm \Psi$ has full rank and hence the matrix $\mathcal O_{\text{sub}}$ has full column rank. This means $\exists \mathbf S \in \text{Sub}(\mathcal O_{\text{sub}})$ and $\bar{\mathbf x}_{\text{sub}} \in \mathbb{R}^{n_{\tau} +n_{ a} + 6}$ such that $\det(\mathbf S)(\bar{\mathbf x}_{\text{sub}}) \neq 0$. Moreover, since 1) $\det(\mathbf S)(\mathbf x_{\text{sub}})$ is a continuous function of $\mathbf x_{\text{sub}}$, so if $\det(\mathbf S)({\mathbf x}_{\text{sub}}) \neq 0$ at point $\bar{\mathbf x}_{\text{sub}}$, it must be so in a neighbour of that point, and 2) the order of $i,j,k,l$ is arbitrary, so there are infinite number of submatrix $\mathbf S \in \mathcal O_{\text{sub}}$ whose determinant is not constant zero and for each the solution of $\det(\mathbf S) = 0$ lies on a {\it thin} subset of $\mathbb{R}^{n_{ \tau} +n_{ a} + 6}$. As a result, the states $\mathbf x_{\text{sub}}$ satisfying $\mathcal O_{\text{sub}}$ being rank-deficient is the joint of the infinite number of {\it thin} subset and the resultant solution space is even thinner. Such a thin subset is generally not satisfied when the system initial states $\boldsymbol{\omega}^{(k)}$ and $\mathbf a^{(k)}$ have sufficient excitation.
\end{proof}

\bibliography{EKF_V4} 

\begin{thebibliography}{10}
\providecommand{\url}[1]{#1}
\csname url@rmstyle\endcsname
\providecommand{\newblock}{\relax}
\providecommand{\bibinfo}[2]{#2}
\providecommand\BIBentrySTDinterwordspacing{\spaceskip=0pt\relax}
\providecommand\BIBentryALTinterwordstretchfactor{4}
\providecommand\BIBentryALTinterwordspacing{\spaceskip=\fontdimen2\font plus
\BIBentryALTinterwordstretchfactor\fontdimen3\font minus
  \fontdimen4\font\relax}
\providecommand\BIBforeignlanguage[2]{{%
\expandafter\ifx\csname l@#1\endcsname\relax
\typeout{** WARNING: IEEEtran.bst: No hyphenation pattern has been}%
\typeout{** loaded for the language `#1'. Using the pattern for}%
\typeout{** the default language instead.}%
\else
\language=\csname l@#1\endcsname
\fi
#2}}

\bibitem{xu2020learning}
W.~Xu and F.~Zhang, ``Learning pugachev's cobra maneuver for tail-sitter uavs
  using acceleration model,'' \emph{IEEE Robotics and Automation Letters},
  vol.~5, no.~2, pp. 3452--3459, 2020.

\bibitem{smeur2020incremental}
E.~J. Smeur, M.~Bronz, and G.~C. de~Croon, ``Incremental control and guidance
  of hybrid aircraft applied to a tailsitter unmanned air vehicle,''
  \emph{Journal of Guidance, Control, and Dynamics}, vol.~43, no.~2, pp.
  274--287, 2020.

\bibitem{smeur2018cascaded}
E.~J. Smeur, G.~C. de~Croon, and Q.~Chu, ``Cascaded incremental nonlinear
  dynamic inversion for mav disturbance rejection,'' \emph{Control Engineering
  Practice}, vol.~73, pp. 79--90, 2018.

\bibitem{sieberling2010robust}
S.~Sieberling, Q.~Chu, and J.~Mulder, ``Robust flight control using incremental
  nonlinear dynamic inversion and angular acceleration prediction,''
  \emph{Journal of guidance, control, and dynamics}, vol.~33, no.~6, pp.
  1732--1742, 2010.

\bibitem{lyu2018disturbance}
X.~Lyu, J.~Zhou, H.~Gu, Z.~Li, S.~Shen, and F.~Zhang, ``Disturbance observer
  based hovering control of quadrotor tail-sitter vtol uavs using $ h_\infty $
  synthesis,'' \emph{IEEE Robotics and Automation Letters}, vol.~3, no.~4, pp.
  2910--2917, 2018.

\bibitem{yuksel2014nonlinear}
B.~Y{\"u}ksel, C.~Secchi, H.~H. B{\"u}lthoff, and A.~Franchi, ``A nonlinear
  force observer for quadrotors and application to physical interactive
  tasks,'' in \emph{2014 IEEE/ASME International Conference on Advanced
  Intelligent Mechatronics}.\hskip 1em plus 0.5em minus 0.4em\relax IEEE, 2014,
  pp. 433--440.

\bibitem{jeong2012attitude}
S.~H. Jeong, S.~Jung, and M.~Tomizuka, ``Attitude control of a quad-rotor
  system using an acceleration-based disturbance observer: An empirical
  approach,'' in \emph{2012 IEEE/ASME International Conference on Advanced
  Intelligent Mechatronics (AIM)}.\hskip 1em plus 0.5em minus 0.4em\relax IEEE,
  2012, pp. 916--921.

\bibitem{wang2016trajectory}
H.~Wang and M.~Chen, ``Trajectory tracking control for an indoor quadrotor uav
  based on the disturbance observer,'' \emph{Transactions of the Institute of
  Measurement and Control}, vol.~38, no.~6, pp. 675--692, 2016.

\bibitem{lee2016robust}
S.~J. Lee, S.~Kim, K.~H. Johansson, and H.~J. Kim, ``Robust acceleration
  control of a hexarotor uav with a disturbance observer,'' in \emph{2016 IEEE
  55th Conference on Decision and Control (CDC)}.\hskip 1em plus 0.5em minus
  0.4em\relax IEEE, 2016, pp. 4166--4171.

\bibitem{lee2017high}
J.~Lee, ``High fidelity modelling for high altitude long endurance solar
  powered aircraft,'' Ph.D. dissertation, ETH Z{\"u}rich, 2017.

\bibitem{borobia2018flight}
R.~Borobia, G.~Sanchez-Arriaga, A.~Serino, and R.~Schmehl, ``Flight-path
  reconstruction and flight test of four-line power kites,'' \emph{Journal of
  Guidance, Control, and Dynamics}, vol.~41, no.~12, pp. 2604--2614, 2018.

\bibitem{hoff1996aircraft}
J.~Hoff and M.~Cook, ``Aircraft parameter identification using an
  estimation-before-modelling technique,'' \emph{The Aeronautical Journal},
  vol. 100, no. 997, pp. 259--268, 1996.

\bibitem{mueller2015computationally}
M.~W. Mueller, M.~Hehn, and R.~D'Andrea, ``A computationally efficient motion
  primitive for quadrocopter trajectory generation,'' \emph{IEEE Transactions
  on Robotics}, vol.~31, no.~6, pp. 1294--1310, 2015.

\bibitem{shaeffer2013mems}
D.~K. Shaeffer, ``Mems inertial sensors: A tutorial overview,'' \emph{IEEE
  Communications Magazine}, vol.~51, no.~4, pp. 100--109, 2013.

\bibitem{mahony2005complementary}
R.~Mahony, T.~Hamel, and J.-M. Pflimlin, ``Complementary filter design on the
  special orthogonal group {$SO(3)$},'' in \emph{Proceedings of the 44th IEEE
  Conference on Decision and Control}.\hskip 1em plus 0.5em minus 0.4em\relax
  IEEE, 2005, pp. 1477--1484.

\bibitem{mahony2008nonlinear}
H.~T. Mahony, Robert and J.-M. Pflimlin, ``Nonlinear complementary filters on
  the special orthogonal group,'' \emph{IEEE Transactions on automatic
  control}, vol.~53, no.~5, pp. 1203--1218, 2008.

\bibitem{smith1962application}
G.~L. Smith, S.~F. Schmidt, and L.~A. McGee, \emph{Application of statistical
  filter theory to the optimal estimation of position and velocity on board a
  circumlunar vehicle}.\hskip 1em plus 0.5em minus 0.4em\relax National
  Aeronautics and Space Administration, 1962.

\bibitem{markley2003attitude}
F.~L. Markley, ``Attitude error representations for kalman filtering,''
  \emph{Journal of guidance, control, and dynamics}, vol.~26, no.~2, pp.
  311--317, 2003.

\bibitem{trawny2005indirect}
N.~Trawny and S.~I. Roumeliotis, ``Indirect kalman filter for 3d attitude
  estimation,'' \emph{University of Minnesota, Dept. of Comp. Sci. \& Eng.,
  Tech. Rep}, vol.~2, p. 2005, 2005.

\bibitem{markley2014fundamentals}
F.~L. Markley and J.~L. Crassidis, \emph{Fundamentals of spacecraft attitude
  determination and control}.\hskip 1em plus 0.5em minus 0.4em\relax Springer,
  2014, vol.~33.

\bibitem{sola2017quaternion}
J.~Sola, ``Quaternion kinematics for the error-state kalman filter,''
  \emph{arXiv preprint arXiv:1711.02508}, 2017.

\bibitem{lu2019imu}
G.~Lu and F.~Zhang, ``Imu-based attitude estimation in the presence of
  narrow-band noise,'' \emph{IEEE/ASME Transactions on Mechatronics}, vol.~24,
  no.~2, pp. 841--852, 2019.

\bibitem{caron2006gps}
F.~Caron, E.~Duflos, D.~Pomorski, and P.~Vanheeghe, ``Gps/imu data fusion using
  multisensor kalman filtering: introduction of contextual aspects,''
  \emph{Information fusion}, vol.~7, no.~2, pp. 221--230, 2006.

\bibitem{pfister2014comparative}
A.~Pfister, A.~M. West, S.~Bronner, and J.~A. Noah, ``Comparative abilities of
  microsoft kinect and vicon 3d motion capture for gait analysis,''
  \emph{Journal of medical engineering \& technology}, vol.~38, no.~5, pp.
  274--280, 2014.

\bibitem{qin2018vins}
T.~Qin, P.~Li, and S.~Shen, ``Vins-mono: A robust and versatile monocular
  visual-inertial state estimator,'' \emph{IEEE Transactions on Robotics},
  vol.~34, no.~4, pp. 1004--1020, 2018.

\bibitem{forster2016manifold}
C.~Forster, L.~Carlone, F.~Dellaert, and D.~Scaramuzza, ``On-manifold
  preintegration for real-time visual--inertial odometry,'' \emph{IEEE
  Transactions on Robotics}, vol.~33, no.~1, pp. 1--21, 2016.

\bibitem{mourikis2007multi}
A.~I. Mourikis and S.~I. Roumeliotis, ``A multi-state constraint kalman filter
  for vision-aided inertial navigation,'' in \emph{Proceedings 2007 IEEE
  International Conference on Robotics and Automation}.\hskip 1em plus 0.5em
  minus 0.4em\relax IEEE, 2007, pp. 3565--3572.

\bibitem{bry2012state}
A.~Bry, A.~Bachrach, and N.~Roy, ``State estimation for aggressive flight in
  gps-denied environments using onboard sensing,'' in \emph{2012 IEEE
  International Conference on Robotics and Automation}.\hskip 1em plus 0.5em
  minus 0.4em\relax IEEE, 2012, pp. 1--8.

\bibitem{lin2019loam_livox}
J.~Lin and F.~Zhang, ``Loam\_livox: A fast, robust, high-precision lidar
  odometry and mapping package for lidars of small fov,'' \emph{arXiv preprint
  arXiv:1909.06700}, 2019.

\bibitem{horn1981determining}
B.~K. Horn and B.~G. Schunck, ``Determining optical flow,'' in \emph{Techniques
  and Applications of Image Understanding}, vol. 281.\hskip 1em plus 0.5em
  minus 0.4em\relax International Society for Optics and Photonics, 1981, pp.
  319--331.

\bibitem{soloviev2009fusion}
A.~Soloviev and A.~J. Rutkowski, ``Fusion of inertial, optical flow, and
  airspeed measurements for uav navigation in gps-denied environments,'' in
  \emph{Unmanned Systems Technology XI}, vol. 7332.\hskip 1em plus 0.5em minus
  0.4em\relax International Society for Optics and Photonics, 2009, p. 733202.

\bibitem{burri2018framework}
M.~Burri, M.~Bloesch, Z.~Taylor, R.~Siegwart, and J.~Nieto, ``A framework for
  maximum likelihood parameter identification applied on mavs,'' \emph{Journal
  of Field Robotics}, vol.~35, no.~1, pp. 5--22, 2018.

\bibitem{wuest2019online}
V.~W{\"u}est, V.~Kumar, and G.~Loianno, ``Online estimation of geometric and
  inertia parameters for multirotor aerial vehicles,'' in \emph{2019
  International Conference on Robotics and Automation (ICRA)}.\hskip 1em plus
  0.5em minus 0.4em\relax IEEE, 2019, pp. 1884--1890.

\bibitem{mirzaei2008kalman}
F.~M. Mirzaei and S.~I. Roumeliotis, ``A kalman filter-based algorithm for
  imu-camera calibration: Observability analysis and performance evaluation,''
  \emph{IEEE transactions on robotics}, vol.~24, no.~5, pp. 1143--1156, 2008.

\bibitem{kelly2011visual}
J.~Kelly and G.~S. Sukhatme, ``Visual-inertial sensor fusion: Localization,
  mapping and sensor-to-sensor self-calibration,'' \emph{The International
  Journal of Robotics Research}, vol.~30, no.~1, pp. 56--79, 2011.

\bibitem{gorinevsky2006optimization}
D.~Gorinevsky and S.~Boyd, ``Optimization-based design and implementation of
  multidimensional zero-phase iir filters,'' \emph{IEEE Transactions on
  Circuits and Systems I: Regular Papers}, vol.~53, no.~2, pp. 372--383, 2006.

\bibitem{bloesch2017iterated}
M.~Bloesch, M.~Burri, S.~Omari, M.~Hutter, and R.~Siegwart, ``Iterated extended
  kalman filter based visual-inertial odometry using direct photometric
  feedback,'' \emph{The International Journal of Robotics Research}, vol.~36,
  no.~10, pp. 1053--1072, 2017.

\bibitem{li2013high}
M.~Li and A.~I. Mourikis, ``High-precision, consistent ekf-based
  visual-inertial odometry,'' \emph{The International Journal of Robotics
  Research}, vol.~32, no.~6, pp. 690--711, 2013.

\bibitem{aastrom2000model}
K.~J. {\AA}str{\"o}m, ``Model uncertainty and robust control,'' \emph{Lecture
  Notes on Iterative Identification and Control Design}, pp. 63--100, 2000.

\bibitem{lee2010sensorless}
S.-C. Lee and H.-S. Ahn, ``Sensorless torque estimation using adaptive kalman
  filter and disturbance estimator,'' in \emph{Proceedings of 2010 IEEE/ASME
  International Conference on Mechatronic and Embedded Systems and
  Applications}.\hskip 1em plus 0.5em minus 0.4em\relax IEEE, 2010, pp. 87--92.

\bibitem{jones2011visual}
E.~S. Jones and S.~Soatto, ``Visual-inertial navigation, mapping and
  localization: A scalable real-time causal approach,'' \emph{The International
  Journal of Robotics Research}, vol.~30, no.~4, pp. 407--430, 2011.

\bibitem{chiuso2002structure}
A.~Chiuso, P.~Favaro, H.~Jin, and S.~Soatto, ``Structure from motion causally
  integrated over time,'' \emph{IEEE transactions on pattern analysis and
  machine intelligence}, vol.~24, no.~4, pp. 523--535, 2002.

\bibitem{davison2003real}
A.~J. Davison, ``Real-time simultaneous localisation and mapping with a single
  camera,'' in \emph{null}.\hskip 1em plus 0.5em minus 0.4em\relax IEEE, 2003,
  p. 1403.

\bibitem{guo2014efficient}
C.~X. Guo, D.~G. Kottas, R.~DuToit, A.~Ahmed, R.~Li, and S.~I. Roumeliotis,
  ``Efficient visual-inertial navigation using a rolling-shutter camera with
  inaccurate timestamps.'' in \emph{Robotics: Science and Systems}.\hskip 1em
  plus 0.5em minus 0.4em\relax Citeseer, 2014.

\bibitem{lee1982observability}
T.~Lee, K.~Dunn, and C.~Chang, ``On observability and unbiased estimation of
  nonlinear systems,'' in \emph{System Modeling and optimization}.\hskip 1em
  plus 0.5em minus 0.4em\relax Springer, 1982, pp. 258--266.

\bibitem{hermann1977nonlinear}
R.~Hermann and A.~Krener, ``Nonlinear controllability and observability,''
  \emph{IEEE Transactions on automatic control}, vol.~22, no.~5, pp. 728--740,
  1977.

\bibitem{lee2006observability}
K.~W. Lee, W.~S. Wijesoma, and J.~I. Guzman, ``On the observability and
  observability analysis of slam,'' in \emph{2006 IEEE/RSJ International
  Conference on Intelligent Robots and Systems}.\hskip 1em plus 0.5em minus
  0.4em\relax IEEE, 2006, pp. 3569--3574.

\bibitem{martinelli2011state}
A.~Martinelli, ``State estimation based on the concept of continuous symmetry
  and observability analysis: The case of calibration,'' \emph{IEEE
  Transactions on Robotics}, vol.~27, no.~2, pp. 239--255, 2011.

\bibitem{weiss2012real}
S.~Weiss, M.~W. Achtelik, S.~Lynen, M.~Chli, and R.~Siegwart, ``Real-time
  onboard visual-inertial state estimation and self-calibration of mavs in
  unknown environments,'' in \emph{2012 IEEE international conference on
  robotics and automation}.\hskip 1em plus 0.5em minus 0.4em\relax IEEE, 2012,
  pp. 957--964.

\bibitem{hesch2014camera}
J.~A. Hesch, D.~G. Kottas, S.~L. Bowman, and S.~I. Roumeliotis,
  ``Camera-imu-based localization: Observability analysis and consistency
  improvement,'' \emph{The International Journal of Robotics Research},
  vol.~33, no.~1, pp. 182--201, 2014.

\bibitem{martinelli2006automatic}
A.~Martinelli, D.~Scaramuzza, and R.~Siegwart, ``Automatic self-calibration of
  a vision system during robot motion,'' in \emph{Proceedings 2006 IEEE
  International Conference on Robotics and Automation, 2006. ICRA 2006.}\hskip
  1em plus 0.5em minus 0.4em\relax IEEE, 2006, pp. 43--48.

\bibitem{asp}
\BIBentryALTinterwordspacing
S.~Puthusserypady, ``Applied signal processing,'' \emph{Lecture Notes,
  Technical University of Denmark}, 2018. [Online]. Available:
  \url{http://bme.elektro.dtu.dk/31610/notes/chap_8.pdf}
\BIBentrySTDinterwordspacing

\bibitem{stalford1981high}
H.~Stalford, ``High-alpha aerodynamic model identification of t-2c aircraft
  using the ebm method,'' \emph{Journal of Aircraft}, vol.~18, no.~10, pp.
  801--809, 1981.

\bibitem{ramachandran1977identification}
S.~Ramachandran, H.~Schneider, J.~Mason, and H.~Stalford, ``Identification of
  aircraft aerodynamic characteristics at high angles of attack and sideslip
  using the estimation before modeling/ebm/technique,'' in \emph{Guidance and
  Control Conference}, 1977, p. 1169.

\bibitem{martin2010true}
P.~Martin and E.~Sala{\"u}n, ``The true role of accelerometer feedback in
  quadrotor control,'' in \emph{2010 IEEE international conference on robotics
  and automation}.\hskip 1em plus 0.5em minus 0.4em\relax IEEE, 2010, pp.
  1623--1629.

\bibitem{hespanha2018linear}
J.~P. Hespanha, \emph{Linear systems theory}.\hskip 1em plus 0.5em minus
  0.4em\relax Princeton university press, 2018.

\bibitem{weiss2011real}
S.~Weiss and R.~Siegwart, ``Real-time metric state estimation for modular
  vision-inertial systems,'' in \emph{2011 IEEE international conference on
  robotics and automation}.\hskip 1em plus 0.5em minus 0.4em\relax IEEE, 2011,
  pp. 4531--4537.

\bibitem{nijmeijer1990nonlinear}
H.~Nijmeijer and A.~Van~der Schaft, \emph{Nonlinear dynamical control
  systems}.\hskip 1em plus 0.5em minus 0.4em\relax Springer, 1990, vol. 175.

\bibitem{isidori2013nonlinear}
A.~Isidori, \emph{Nonlinear control systems}.\hskip 1em plus 0.5em minus
  0.4em\relax Springer Science \& Business Media, 2013.

\bibitem{AKurusa2014}
A.~Kurusa, \emph{Differential geometry}.\hskip 1em plus 0.5em minus 0.4em\relax
  Typotex, 2014.

\bibitem{hitchin2012differentiable}
N.~Hitchin, ``Differentiable manifolds,'' \emph{Lecture Notes, Oxford
  University}, 2012.

\bibitem{murray1994mathematical}
R.~M. Murray, Z.~Li, S.~S. Sastry, and S.~S. Sastry, \emph{A mathematical
  introduction to robotic manipulation}.\hskip 1em plus 0.5em minus 0.4em\relax
  CRC press, 1994.

\bibitem{lynch2017modern}
K.~M. Lynch and F.~C. Park, \emph{Modern Robotics}.\hskip 1em plus 0.5em minus
  0.4em\relax Cambridge University Press, 2017.

\bibitem{kummerle2011g}
R.~K{\"u}mmerle, G.~Grisetti, H.~Strasdat, K.~Konolige, and W.~Burgard, ``g 2
  o: A general framework for graph optimization,'' in \emph{2011 IEEE
  International Conference on Robotics and Automation}.\hskip 1em plus 0.5em
  minus 0.4em\relax IEEE, 2011, pp. 3607--3613.

\bibitem{hertzberg2013integrating}
C.~Hertzberg, R.~Wagner, U.~Frese, and L.~Schr{\"o}der, ``Integrating generic
  sensor fusion algorithms with sound state representations through
  encapsulation of manifolds,'' \emph{Information Fusion}, vol.~14, no.~1, pp.
  57--77, 2013.

\bibitem{ceres-solver}
S.~Agarwal, K.~Mierle, and Others, ``Ceres solver,''
  \url{http://ceres-solver.org}.

\bibitem{bullo1995proportional}
F.~Bullo and R.~M. Murray, ``Proportional derivative (pd) control on the
  euclidean group,'' in \emph{European control conference}, vol.~2, 1995, pp.
  1091--1097.

\bibitem{gustafsson1996determining}
F.~Gustafsson, ``Determining the initial states in forward-backward
  filtering,'' \emph{IEEE Transactions on signal processing}, vol.~44, no.~4,
  pp. 988--992, 1996.

\bibitem{faessler2017differential}
M.~Faessler, A.~Franchi, and D.~Scaramuzza, ``Differential flatness of
  quadrotor dynamics subject to rotor drag for accurate tracking of high-speed
  trajectories,'' \emph{IEEE Robotics and Automation Letters}, vol.~3, no.~2,
  pp. 620--626, 2017.

\end{thebibliography}
	
\end{document}